\definecolor{codegreen}{rgb}{0,0.6,0}
\definecolor{codegray}{rgb}{0.5,0.5,0.5}
\definecolor{codepurple}{rgb}{0.58,0,0.82}
\definecolor{backcolour}{rgb}{0.95,0.95,0.92}
\lstdefinestyle{mystyle}{
    backgroundcolor=\color{backcolour},   
    commentstyle=\color{codegreen},
    keywordstyle=\color{magenta},
    numberstyle=\tiny\color{codegray},
    stringstyle=\color{codepurple},
    basicstyle=\ttfamily\footnotesize,
    breakatwhitespace=false,         
    breaklines=true,                 
    captionpos=false,                    
    keepspaces=true,                 
    numbersep=5pt,                  
    showspaces=false,                
    showstringspaces=false,
    showtabs=false,                  
    tabsize=2
}
\renewcommand \thepart{}
\renewcommand \partname{}
\newtheorem{theorem}{Theorem}[section]
\newtheorem{lemma}[theorem]{Lemma}
\newtheorem{definition}[theorem]{Definition}
\newtheorem{corollary}[theorem]{Corollary}
\newtheorem{remark}[theorem]{Remark}
\newtheorem{claim}[theorem]{Claim}
\newcommand{\wt}{\widetilde}
\newcommand{\ov}{\overline}
\newcommand{\R}{\mathbb{R}}
\renewcommand{\d}{\mathrm{d}}
\DeclareMathOperator*{\E}{{\mathbb{E}}}
\DeclareMathOperator{\supp}{supp}
\DeclareMathOperator{\sparse}{sparse}
\DeclareMathOperator{\poly}{poly}
\DeclareMathOperator{\nnz}{nnz}
\DeclareMathOperator{\tr}{tr}
\DeclareMathOperator{\diag}{diag}
\DeclareMathOperator{\reg}{reg}
\DeclareMathOperator{\cache}{\mathsf{cache}}
\DeclareMathOperator{\kv}{\mathsf{KV}}
\newcommand{\opt}{\mathsf{opt}}
\newcommand{\alg}{\mathsf{alg}}
\definecolor{Tianlong_color}{rgb}{0.858, 0.188, 0.478}
\title{$\textsc{H}_2\textsc{O}$: Heavy-Hitter Oracle for Efficient Generative Inference of Large Language Models}
\author{%
  \small Zhenyu Zhang$^{1}$, Ying Sheng$^{2}$, Tianyi Zhou$^{3}$, Tianlong Chen$^{1}$, Lianmin Zheng$^{4}$, Ruisi Cai$^{1}$, \\ \small \textbf{Zhao Song$^{5}$}, \textbf{Yuandong Tian$^{6}$}, \textbf{Christopher R\'e$^{2}$}, \textbf{Clark Barrett$^{2}$}, \textbf{Zhangyang Wang$^{1}$}, \textbf{Beidi Chen$^{6,7}$} \\
  \small$^{1}$University of Texas at Austin, $^{2}$Stanford University, $^{3}$University of California, San Diego, \\
  \small$^{4}$University of California, Berkeley, $^{5}$Adobe Research, $^{6}$Meta AI (FAIR), $^{7}$Carnegie Mellon University \\
  \small \texttt{\{zhenyu.zhang,tianlong.chen,ruisi.cai,atlaswang\}@utexas.edu}, \texttt{ying1123@stanford.edu},\\ \small \texttt{\{chrismre,barrett\}@cs.stanford.edu}, \texttt{t8zhou@ucsd.edu}, \texttt{lianminzheng@gmail.com}, \\ \small  \texttt{zsong@adobe.com}, \texttt{yuandong@meta.com}, \texttt{beidic@andrew.cmu.edu}\\ 
}
\begin{document}







\maketitle
\doparttoc 
\faketableofcontents 

\begin{abstract}


Large Language Models (LLMs), despite their recent impressive accomplishments, are notably cost-prohibitive to deploy, particularly for applications involving long-content generation, such as dialogue systems and story writing. Often, a large amount of transient state information, referred to as the $\mathsf{KV}$ $\mathsf{cache}$, is stored in GPU memory in addition to model parameters, scaling linearly with the sequence length and batch size. In this paper, we introduce a novel approach for implementing the  $\mathsf{KV}$ $\mathsf{cache}$ which significantly reduces its memory footprint.  Our approach is based on the noteworthy observation that a small portion of tokens contributes most of the value when computing attention scores. 
 We call these tokens \emph{Heavy Hitters} ($\mathsf{H_2}$). Through a comprehensive investigation, we find that ($i$) the emergence of $\mathsf{H_2}$ is natural and strongly correlates with the frequent co-occurrence of tokens in the text, and ($ii$) removing them results in significant performance degradation. Based on these insights, we propose Heavy Hitter Oracle ($\mathsf{H_2O}$), a $\mathsf{KV}$ $\mathsf{cache}$ eviction policy that dynamically retains a balance of recent and $\mathsf{H_2}$ tokens.
 We formulate the $\mathsf{KV}$ $\mathsf{cache}$ eviction as a dynamic submodular problem and prove (under mild assumptions) a theoretical guarantee for our novel eviction algorithm which could help guide future work. We validate the accuracy of our algorithm with OPT, LLaMA, and GPT-NeoX across a wide range of tasks. Our implementation of $\mathsf{H_2O}$ with $20\%$ heavy hitters improves the throughput over three leading inference systems DeepSpeed Zero-Inference, Hugging Face Accelerate, and FlexGen by up to $29\times$, $29\times$, and $3\times$ on OPT-6.7B and OPT-30B. With the same batch size, $\mathsf{H_2O}$ can reduce the latency by up to $1.9\times$. The code is available at \url{https://github.com/FMInference/H2O}.

\end{abstract}

\vspace{-2mm}
\section{Introduction}

Large Language Models (LLMs) have demonstrated remarkable proficiency in a wide range of natural language processing applications such as content creation, summarization, and dialogue systems~\cite{thoppilan2022lamda,yuan2022wordcraft,wei2022emergent,zhang2023benchmarking}. However, their deployment is very costly. In addition to the widely-studied bottlenecks of model size and the quadratic cost of attention layers, the problem of the size of the $\kv \cache$, which stores the intermediate attention key and values during generation to avoid re-computation, is becoming increasingly prominent~\cite{pope2022efficiently}. For instance, a 30 billion-parameter model with an input batch size of 128 and a sequence length of 1024 results in 180GB of $\kv \cache$. A natural approach is to limit its maximum size as is done in classical software or hardware caches~\cite{belady1969anomaly}. However, it is challenging to reduce $\kv \cache$ memory footprints in LLMs without accuracy drops.






While there exists substantial literature on sparse attention approximation in training, they have not seen wide adoption for alleviating $\kv$ $\cache$ bottleneck. First, most existing methods, e.g., Reformer~\cite{kitaev2020reformer} and Flash Attention~\cite{dao2022flashattention}, are designed to overcome the quadratic memory required by  attention mechanisms when modeling long sequences but still require a \emph{large cache size}. Second, variants like sparse transformer~\cite{child2019generating}, low-rank based transformers~\cite{choromanski2020rethinking,katharopoulos2020transformers} or multi-query attention~\cite{shazeer2019fast,chowdhery2022palm,pope2022efficiently} can reduce the cache size, but directly applying them on pre-trained LLMs for generation results in \emph{high miss rates} and degrades the accuracy as shown in Figure~\ref{fig:intro}. Finally, some recent advances such as gisting tokens~\cite{mu2023learning} can learn to compress the $\kv \cache$ for documents, but their \emph{expensive eviction policies} are difficult to deploy during generation.

\begin{figure*}[t]
\vspace{-3mm}
    \centering
    \small \hspace{1mm}
    \includegraphics[width=0.9\linewidth]{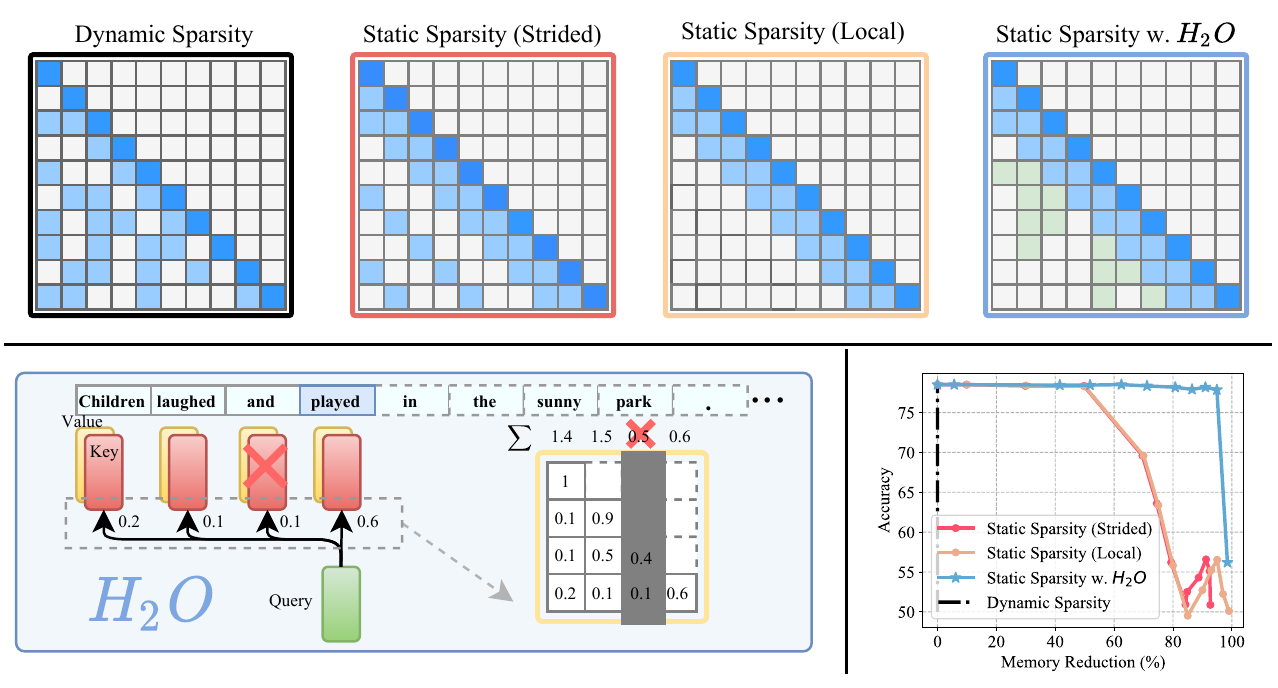}
    \vspace{-2mm}
    \caption{Upper plots illustrate symbolic plots of an attention map deploying different $\kv$ $\cache$ policies in LLM generation. Lower right: contrasts their accuracy-memory trade-off. Left: the overview of $\mathsf{H_2O}$ framework.}
    \label{fig:intro}
    \vspace{-3mm}
\end{figure*}

    

Therefore, an ideal $\kv$ $\cache$ should have (i) a \emph{small cache size} to reduce memory footprint, (ii) a \emph{low miss rate} to maintain the performance and long-content generation ability of LLMs, and (iii) a \emph{low-cost eviction policy} to reduce the wall-clock time during generation. However, there are three technical challenges. First, it is not immediately clear whether the size of the $\kv \cache$ can be restricted---each decoding step might, in principle, require access to all previous attention keys and values. Second, identifying an optimal eviction policy that maintains generation accuracy is a combinatorial problem\footnote{Belady's Algorithm is optimal for standard cache, but not necessarily for $\kv \cache$.}. Finally, even if an optimal policy can be brute-forced, it is infeasible for deployment on real-world applications.





Fortunately, our preliminary exploration has yielded intriguing observations about the empirical properties of LLMs. These findings pave the way for the potential design of an efficient $\kv \cache$.

\underline{\emph{Sparsity for small cache size}}: We observe that even when trained densely, the attention matrices of LLMs are over 95\% sparse at inference time (shown in Figure~\ref{fig:obverse}). This holds for a wide range of pre-trained LLMs. Therefore, only 5\% of the $\kv \cache$ is sufficient for decoding the same output token at each generation step, which suggests it may be possible to have up to a 20$\times$ reduction in $\kv \cache$ size without an accuracy drop. 

\underline{\emph{Heavy-Hitters for low miss rate}}: We discover that the accumulated attention scores of all tokens in attention blocks adhere to a power-law distribution. It suggests that there exists a small set of influential tokens that are critical during generation, named heavy-hitters ($\mathsf{H_2}$). $\mathsf{H_2}$ provides an opportunity to step away from the combinatorial search problem and identify an eviction policy that maintains accuracy. 

\underline{\emph{Greedy algorithm for low-cost policy}}: we surprisingly find that retaining the $\mathsf{H_2}$ based on local statistics at each decoding step---summing the attention scores of only the preceding tokens---is as effective as considering the attention of future tokens (shown in Figure~\ref{fig:obverse}). 

Based on the above, we first rigorously define the generative process of LLMs operating with a size-constrained $\kv \cache$ in Section~\ref{sec:problem}. Then we propose Heavy-Hitter Oracle ($\mathsf{H_2O}$), a framework that exploits the properties of LLMs and uses simple, low-cost eviction policies that retrain the quality of LLMs throughout the generation process. Specifically, 
\begin{itemize}[itemsep=0.1pt,topsep=0pt,leftmargin=*]
\item In Section~\ref{sec:obs}, we explore the emergence of $\mathsf{H_2}$ in attention, revealing their fundamental and critical roles: ($i$) $\mathsf{H_2}$ exhibit a strong correlation of frequently co-occurring words in textual data; and ($ii$) removing $\mathsf{H_2}$ completely damages the model's functionality. We demonstrate that $\mathsf{H_2}$ can largely lower the cache miss rate of the existing policies mentioned above. Theoretically, assuming the attention scheme is submodular, $\mathsf{H_2}$ corresponds to a greedy algorithm and is therefore near-optimal.
\item In Section~\ref{sec:h2o}, we present a greedy but low-cost variant of $\mathsf{H_2}$ which is dynamically determined by the accumulated attention score at each decoding step. We formulate the eviction policy with greedy $\mathsf{H_2}$ as a variant of dynamic submodular maximization. The analysis shows that it results in a similar generative process as the one using the $\mathsf{H_2}$ eviction policy.
 \end{itemize}

We perform extensive experiments on OPT, LLaMA, and GPT-NeoX on a single NVIDIA A$100$ (80GB) GPU to evaluate $\mathsf{H_2O}$ across a range of tasks from lm-eval-harness~\cite{eval-harness} and HELM~\cite{liang2022holistic}.
We implement $\mathsf{H_2O}$ on top of FlexGen that can easily adapt different $\cache$ eviction techniques to produce a system with high-throughput inference. Performance experiments show our framework achieves $29\times$, $29\times$, $3\times$ higher throughputs compared to three leading inference systems, DeepSpeed Zero-Inference~\cite{aminabadi2022deepspeed}, Hugging Face Accelerate~\cite{huggingfaceAccelerate}, and FlexGen~\cite{sheng2023high} respectively.
With the same batch size, $\mathsf{H_2O}$ achieves up to $1.9\times$ lower latency compare to FlexGen.


\section{Related Work and Problem Setting}
\label{main:related_work}

\paragraph{Efficient Inference of LLMs.} The substantial parameter counts of large language models (LLMs) present significant challenges for inference. To overcome this limitation, previous efforts have employed model compression techniques with specific designs to achieve efficient LLM inference, such as the method described in \cite{frantar2023massive,sun2023simple,yin2023outlier}, which employs one-shot pruning on LLMs, resulting in negligible performance degradation even without retraining. Additionally, alternative approaches explore quantization methods specifically tailored to LLMs, as discussed in \cite{frantar2022gptq, xiao2022smoothquant, yao2022zeroquant, dettmersgpt3, dettmers2022llm, lin2023awq}. Also, CoLT5~\cite{ainslie2023colt5} employs a token-wise conditional computation strategy to reduce the overall computation cost. These methods address efficient inference from orthogonal perspectives and can be organically integrated. The techniques investigated in this study are closely associated with pruning or sparsity but focus on a distinct inference bottleneck, namely, $\kv \cache$. One closely related work\cite{anagnostidis2023dynamic} utilizes a learnable mechanism that determines necessary tokens during inference but requires an extra fine-tuning process, which makes it less practical. 

\paragraph{Sparse, Low-rank Attention Approx.} The quadratic computational complexity of attention modules is one of the major bottlenecks of transformer inference~\cite{tay2020efficient}. Various efforts are devoted to addressing this challenge~\cite{kitaev2020reformer,child2019generating,choromanski2020rethinking}. For example, Reformer~\cite{kitaev2020reformer} reduces the computational cost from quadratic to superlinear complexity via locality-sensitive hashing. Performer~\cite{choromanski2020rethinking} employs positive orthogonal random features to approximate attention kernels. One relevant work, Sparse Transformer~\cite{child2019generating}, introduces sparsity to reduce $\kv \cache$ memory footprint and achieve an efficient attention mechanism, considered as our baseline in this paper. Moreover, SpAtten~\cite{wang2021spatten} utilizes accumulated attention scores to select important tokens for efficient attention inference while they don't consider the variance of token importance across attention heads and layers. Comparison with SpAtten is detailed in Appendix~\ref{Spattn}.



\paragraph{Caching.} Caching, which plays a pivotal role in optimizing system performance, entails the development of effective eviction policies to handle frequently accessed data. Conventional approaches such as Least Recently Used and Least Frequently Used~\cite{o1993lru,lee2001lrfu} prioritize the recency and frequency of data access. And the design of $\kv \cache$ encounters many similar challenges as traditional caching.


\paragraph{LLM Inference Breakdown.}
The generative procedure of LLMs encompasses two distinct phases: (i) the \textit{prompt} phase, in which an input sequence is utilized to produce the $\kv$ $\cache$ (consisting of the key and value embeddings), similar to the forward pass employed during LLM training; and (ii) the \textit{token generation} phase, which leverages and updates the $\kv$ $\cache$ to generate new tokens incrementally. Each generation step relies on the previously generated tokens. The primary focus of this paper is to enhance the efficiency of the $\kv$ $\cache$ in attention during the token generation phase, thereby accelerating LLM inference.



\subsection{Problem Formulation}
\label{sec:problem}


We formally define the generative process with limited $\kv \cache$ size. Denote attention query matrix as $Q \in \R^{n \times d}$ and key matrix as $K \in \R^{n \times d}$. $Q_{i,*}$ represents the $i$-th row of $Q$ and $K_{\leq i, *}$ represents the first $i$ rows of $K$. Let $k$ denote the budget of space and $k < n$. For simplicity, $K_{S_i,*}$ ({$\in \R^{i \times d}$}) denotes a sub-matrix of $K$ which selects $S_i$ rows from $K$. ({For the non-selected rows $[i] \backslash S_i$, we put all zeros in that row})  Eviction policy is defined as:
\begin{definition}[Eviction Policy, informal]\label{def:eviction_policy:informal}
   Let $S_{i-1}$ denote the source set. Let $S_{i}$ denote the target set. We defined the eviction policy $g: S_{i-1} \to S_{i}$ such that \begin{itemize}[itemsep=0.1pt,topsep=0pt,leftmargin=*]
        \item  $|S_i| = k$ ($\kv$ $\cache$ size is not changing over the time)
        \item  $| S_i \backslash S_{i-1} | \leq 1$ or equivalently $|S_i \cap S_{i-1}| \geq k-1$ (we can evict at most $1$ $\kv$ in the $\kv$ $\cache$)
    \end{itemize}
\end{definition}

Then, we define the generative process with our eviction policy.
\begin{definition}[The generative process with eviction policy, informal]\label{def:generative_eviction:informal}
Let $k$ denote the size of the $\kv$ $\cache$.
For each $i \in [n]$, for the $i$-th token, we have
\begin{itemize}[itemsep=0.1pt,topsep=0pt,leftmargin=*]
    \item Let $S_i \subset [n]$ denote the tokens in $\kv$ $\cache$ when predicting the $i$-th token. 
    \item The information we have is a length-$i$ vector $o_{i} := D_i^{-1}   \cdot \exp(   Q_{i,*}  ( K_{S_i,*} )^\top   )$ (normalized attention)
    \begin{itemize}
        \item scalar $D_i :=   ( \exp( Q_{i,*} ( K_{S_i,*} )^\top ) - 1_{[i] \backslash S_i} )   \cdot   {\bf 1}_i $ (the evicted $\kv$ is set to $0$, and we need to subtract them when computing the normalization)
        \item Replacing $S_i$ by $[i]$ in the above definition of $o_i$ and $D_i$ leads to standard generative process.
     \end{itemize}
    \item The eviction policy (Definition \ref{def:eviction_policy:informal}) updates $S_{i}$ based on $S_{i-1}$ and their corresponding information.

    %
\end{itemize}
\end{definition}

\begin{remark}
Our goal is to find a $\kv \cache$ eviction policy such that the output of the generative process is similar or comparable to the original one without limiting the $\cache$ size.
\end{remark}

\vspace{-2mm}
\section{Observations}
\label{sec:obs}
\vspace{-2mm}
We present two key empirical insights of LLMs that inspire the design of $\mathsf{H_2O}$, as follows.
\vspace{-2mm}
\subsection{Sparsity for Small Cache Size}

\begin{figure}[t]
\vspace{-3mm}
    \centering
    \includegraphics[width=1\linewidth]{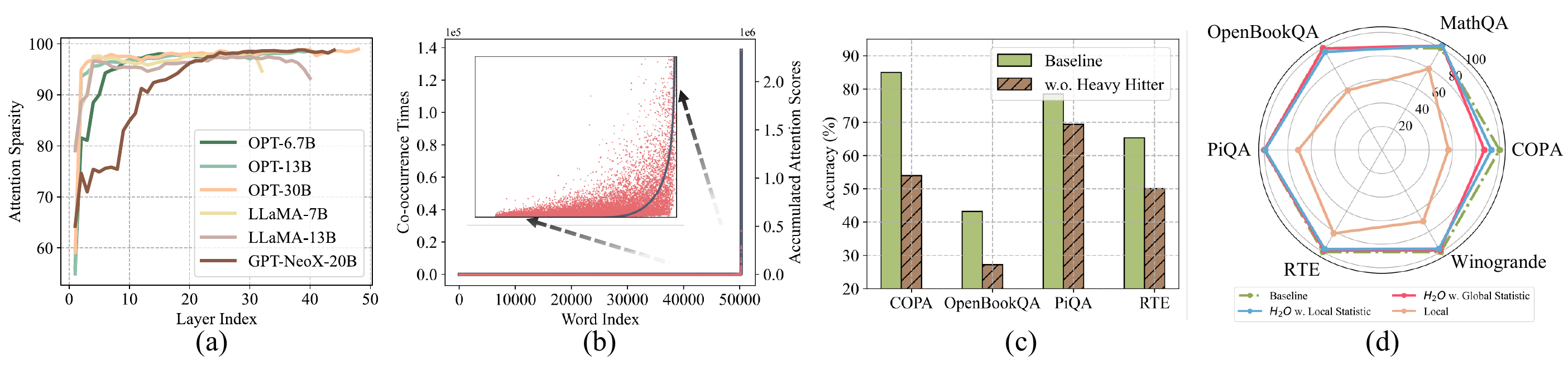}
    \vspace{-5mm}
    \caption{(a) Attention Sparsity in pre-trained LLMs. (b) The distribution of accumulated attention scores with respect to the corresponding word (red scatter) and the co-occurrence times of words in the data (gray curve). The x-axis represents the word index in the vocabulary. (c) The performance comparison between the baseline model with full $\kv$ and the model \textit{w.o.} heavy hitter. (d) Comparison between the baseline model with full $\kv$, $\mathsf{H_2O}$ with the local statistic, $\mathsf{H_2O}$ with the global statistic, and the model with only the most recent $\kv$ (Local). Apart from the baseline model, each model is evaluated with $20\%$ $\kv$ $\cache$ budget.}
    \vspace{-2mm}
    \label{fig:obverse}
\end{figure}

Inspired by previous literature, which reveals the existence of attention sparsity in DistillBERT~\cite{likhosherstov2021expressive} and bounded-norm self-attention heads~\cite{edelman2022inductive}. We first show an observation on the sparsity of attention in pre-trained LLMs. Then we discuss how it can potentially unlock the possibility of reducing $\kv \cache$ size without an accuracy drop. Given the normalized attention score $\mathrm{Softmax}(QK^{\top})$ matrix that is calculated by the query matrix $Q$ and the key matrix $K$, we set the threshold as one percent of the maximum value in each row and calculates the corresponding sparsity. 

\vspace{-2mm}
\paragraph{Observation.} We conduct zero-shot inference with the pre-trained OPT model on the validation set of Wiki-Text-103. We plot the layer-wise sparsity within attention blocks and visualize the normalized attention score matrix. The results are presented in Figure~\ref{fig:obverse} (a). We observe that although the LLMs are densely trained, the resulting attention score matrices are highly sparse, with a sparsity over $95\%$ in almost all layers. 

\vspace{-2mm}
\paragraph{Insights.} The attention blocks' sparsity suggests that access to all previous key and value embeddings is unnecessary for generating the next token. This suggests it is possible to evict unessential $\kv$ embeddings and reduce the requirement of $\kv$ $\cache$ during generation. 









\subsection{Heavy-Hitters for Low Miss Rate} 






The previous section showed the sparse nature of attention blocks in pre-trained LLMs, which provides the opportunity for designing small $\kv$ $\cache$ size while still maintaining the performance of LLMs. However, determining the best eviction policy that preserves generation accuracy presents a combinatorial challenge. Although Belady's Algorithm~\cite{belady1966study} is optimal and easy to compute for standard cache (offline), it is not applicable for $\kv \cache$ design. Because once evicting important $\kv$s, it could destroy the performance of LLMs due to the sequential dependency of LLM generation. 

\vspace{-2mm}
\paragraph{Observation.} Fortunately, in the early stage of our exploration, we find that the accumulated attention scores of all the tokens within attention blocks follow a power-law distribution, as shown in Figure~\ref{fig:obverse}. This suggests the existence of a small set of tokens that are critical during generation. We denote those tokens as heavy-hitters ($\mathsf{H_2}$). In order to verify the importance of these tokens, we compare the quality of LLM generation after masking heavy hitters with that of the original model. Not surprisingly, as shown in Figure~\ref{fig:obverse}, the accuracy drops drastically, confirming the importance of those tokens. Additionally, we can see the accumulated attention score of each word (in red dots) have a high correlation with their co-occurrences in the data (gray curve).

\vspace{-2mm}
\paragraph{Analysis.} First, based on $\mathsf{H_2}$, we see an opportunity to side-step from the combinatorial search problem and design a $\kv \cache$ eviction policy that preserves the LLM generation quality. We conduct an empirical study implementing a $\kv \cache$ eviction policy that retains only the $\mathsf{H_2}$ and the recent $\kv$ embeddings in the cache. The intuition is that recent words typically exhibit stronger correlations with current tokens. We assess the effectiveness of this eviction policy through pre-trained OPT-30B and six downstream tasks. The outcomes of these evaluations are illustrated in Figure~\ref{fig:obverse}. It is obvious that the $\mathsf{H_2}$ based eviction policy can largely reduce the $\kv \cache$ size without degrading the performance of OPT-30B. 

Moreover, during the post analysis, inspired by~\cite{han2019resurrecting}, we find that $\mathsf{H_2}$ based policy is related to the classical greedy algorithm (a polynomial-time algorithm with provable guarantees) under the assumption that the attention schema is submodular. We present details in Appendix~\ref{sec:theory}.





\begin{lemma}[informal]\label{lem:near_optimal_property_informal}
Assuming the attention scheme is submodular, then greedily constructing the set $S_i$ (without cache size limitation) satisfies the near-optimal property in terms of submodular.
\end{lemma}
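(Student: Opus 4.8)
The plan is to cast the heavy-hitter selection problem as the classical problem of maximizing a monotone submodular set function subject to a cardinality constraint, and then invoke the standard $(1-1/e)$ greedy guarantee of Nemhauser--Wolsey--Fisher. First I would fix the $i$-th decoding step and define, for any candidate set $S \subseteq [i]$ of tokens retained in the $\kv$ $\cache$, a score function $F_i(S)$ that measures the total (un-normalized or normalized) attention mass that the query $Q_{i,*}$ places on the keys indexed by $S$, i.e. something of the form $F_i(S) = \sum_{j \in S} \exp(Q_{i,*} (K_{j,*})^\top)$ or a suitably truncated/composed version of it. The point of the submodularity hypothesis is precisely that this $F_i$ (or the attention-derived objective the paper has in mind) is assumed monotone and submodular in $S$; I would state this explicitly as the hypothesis being used and note that monotonicity is immediate when the summands are nonnegative, so the real content is the diminishing-returns inequality $F_i(A \cup \{e\}) - F_i(A) \ge F_i(B \cup \{e\}) - F_i(B)$ for $A \subseteq B$, which we are assuming.

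Next I would describe the greedy construction of $S_i$: start from $S_i^{(0)} = \emptyset$ (or the forced recent tokens, handled as a matroid/knapsack side constraint if needed, but in the informal version just the empty set), and for $t = 1, \dots, k$ set $S_i^{(t)} = S_i^{(t-1)} \cup \{ e_t \}$ where $e_t \in \arg\max_{e} \big( F_i(S_i^{(t-1)} \cup \{e\}) - F_i(S_i^{(t-1)}) \big)$. This is exactly the local-statistic rule the paper advocates: at each step we add the token whose marginal contribution to the accumulated attention score is largest. Then the standard argument goes through verbatim: using monotonicity and submodularity one shows the marginal gain at step $t$ is at least $\frac{1}{k}\big( F_i(S_i^\star) - F_i(S_i^{(t-1)}) \big)$ where $S_i^\star$ is the optimal size-$k$ set, which yields the recursion $F_i(S_i^\star) - F_i(S_i^{(t)}) \le (1 - 1/k)\big(F_i(S_i^\star) - F_i(S_i^{(t-1)})\big)$, and unrolling gives $F_i(S_i^{(k)}) \ge (1 - (1-1/k)^k) F_i(S_i^\star) \ge (1 - 1/e) F_i(S_i^\star)$. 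That is the near-optimality statement.

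The main obstacle — and the part I expect to require genuine care rather than bookkeeping — is not the greedy analysis itself (which is textbook once submodularity is granted) but \emph{formulating the objective so that (a) it is plausibly submodular, (b) greedily maximizing it actually coincides with the $\mathsf{H_2}$ rule stated in the body, and (c) a bound on $F_i$ translates into a bound on the quantity we actually care about, namely the closeness of the cache-restricted output $o_i$ to the full-attention output}. Points (a) and (c) are in tension: the raw normalized attention $o_i$ is a ratio $D_i^{-1}\exp(\cdots)$, and ratios of submodular functions are not submodular, so I would either work with the un-normalized mass and separately control the normalizer $D_i$, or phrase the guarantee purely at the level of the "coverage" objective $F_i$ and defer the translation-to-output-error to the later dynamic-submodular analysis referenced for $\mathsf{H_2O}$. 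In the informal lemma it suffices to (i) assume submodularity outright, (ii) observe the greedy set is the $\mathsf{H_2}$ set by definition of "accumulated attention score," and (iii) quote the $1 - 1/e$ bound; the honest caveats about normalization and about the sequential/adaptive nature of the real policy are what the formal appendix version (and the separate dynamic-submodular treatment in Section~\ref{sec:h2o}) must address.
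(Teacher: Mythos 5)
Your proposal correctly identifies the classical Nemhauser--Wolsey--Fisher $(1-1/e)$ greedy guarantee as the engine, and your caveats about normalization and submodularity being an assumption rather than a theorem match the paper's stance. But the model you analyze is not the one the paper's formal version (Lemma~\ref{lem:near_optimal_property_formal}) actually proves a bound for, and the gap is substantive, not just presentational.

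You fix a single step $i$, define a static coverage objective $F_i$, and run $k$ rounds of greedy \emph{from scratch} to build $S_i$; NWF then gives $F_i(S_i) \geq (1-1/e) F_i(S_i^\star)$. The paper's lemma, by contrast, is about the \emph{dynamic} sequence $S_1, S_2, \ldots, S_n$ built by the eviction rule: at step $i$ the algorithm forms $S_{i+1}$ from $S_i$ by a single swap (Definition~\ref{def:u_S_i+1_exact}), and the objective itself changes with $i$ (it is $f_{S_i,i}$, not a fixed $F_i$). One cannot just invoke NWF step-by-step here because greedy is never re-run from scratch; instead the paper proves an induction lemma (Lemma~\ref{lem:induction_exact}) that carries a value bound forward from step $i$ to $i+1$ under two ``universal dynamic conditions'' relating $f_{S_i,i}$ to $f_{S_{i-1},i-1}$ and $\opt_i$ to $\opt_{i+1}$. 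Each step of the induction costs a factor $(1-\theta)(1-\gamma)$, and choosing $\theta = \gamma = \alpha/(10n)$ accumulates to the $(1-\alpha)$ loss visible in the formal bound $f(S_i) \geq (1-1/e)(1-\alpha)\opt_i$. Your argument would give a clean $(1-1/e)$ with no $\alpha$, which is a sign you are analyzing a stronger algorithm (full re-greedy per step) than the one the lemma is about. You do flag ``the sequential/adaptive nature of the real policy'' as something to defer to the $\mathsf{H_2O}$ analysis, but in the paper that sequential machinery is already the content of \emph{this} lemma's proof; what the later theorem (Theorem~\ref{thm:main_submodular_greedy:informal}) adds is the additive $-\beta$ error from working with cache-restricted approximate scores $\wt{f}$ via Corollary~\ref{cor:error_approximate_greedy_algo} and the approximate induction lemma (Lemma~\ref{lem:induction_approximate}). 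So the missing ingredient in your proposal is the induction-across-decoding-steps argument with its dynamic conditions, which is where the real work (and the extra $(1-\alpha)$ factor) lives.
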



 

\vspace{-2mm}
\section{Heavy-Hitter Oracle}
\label{sec:h2o}
\vspace{-2mm}

The goal of this section is to propose the greedy algorithm using the $\mathsf{H_2}$-based policy and to show the provable guarantees.
We first present the $\mathsf{H_2}$-based policy called $\mathsf{H_2O}$ $\cache$ eviction policy and formulate its deployment in LLM generation as a variant of submodular maximization problem, named \textit{dynamic submodular}. Then we present $\mathsf{H_2O}$ in the generative process, followed by a practical example of deploying our proposal. Finally, we provide theoretical guarantees for $\mathsf{H_2O}$ and show our efficient system implementation.

\subsection{Greedy Algorithm for Low-Cost Policy}\label{sec:preli:submodular}

\vspace{-2mm}
We have shown a simple yet effective $\kv \cache$ policy based on $\mathsf{H_2}$. However, it is impractical to deploy such an algorithm because we do not have access to the future-generated tokens. Fortunately, we empirically observe that local $\mathsf{H_2}$, which is calculated using local statistics at every decoding step by summing up the attention scores of the previous tokens, is equally effective as taking into account the attention of future tokens (Figure~\ref{fig:obverse}). In the following, we formally define this dynamic attention score computation (with space limitation) as a novel dynamic submodular type problem.

\begin{definition}[Dynamic submodular framework, informal]\label{def:weak_submodular:informal}
Define function $F:2^{[n]} \times 2^{[n]} \rightarrow \R$, then for any set $Z \subset [n]$, we assume that $F(Z,\cdot) : 2^{[n]} \rightarrow \R$ is a submodular function w.r.t. to $Z$, i.e.,
\begin{itemize}[itemsep=0.1pt,topsep=0pt,leftmargin=*]
    \item For all sets $X,Y \subset[n]$ satisfy that $Z \subset X \subset Y$,
    \item For all element $x \in [n]$ satisfy that $x \in [n] \backslash Y$, 
\end{itemize}
we have
    $
        f(X \cup \{x\} ) - f(X) \geq f( Y \cup \{x \}) - f(Y),
    $ where $f(\cdot):=F(Z,\cdot)$.
\end{definition}
\begin{remark}
We provide practical insights of Definition~\ref{def:weak_submodular:informal}.
$X$ denotes the existing words in the $\kv$ $\cache$. $Y$ is any superset of $X$. $x$ can be viewed as a ``word'' which is either newly added to $\kv$ $\cache$ or existing deleted from $\kv$ $\cache$. An example $f$ can be attention score, i.e., see Algorithm~\ref{alg:main}.

If we load the sequence of $S_1, S_2, \cdots, S_n$ (we promise that $|S_{i}| \leq k$ and $|S_i \backslash S_{i-1}|\leq 1$) into Definition~\ref{def:weak_submodular:informal}, i.e., for each $i \in [n]$, we choose $Z = S_i$, then it becomes a particular instance of the dynamic submodular problem.
\end{remark}




Next, we provide a formal description of our algorithm, followed by an example.
\begin{definition}[$\mathsf{H_2O}$ Eviction Policy]\label{def:eviction_policy_H_2}
Let $F_{\mathrm{score}} : 2^{[n]} \rightarrow \R$ denote certain score function. 
   Let $S_{i-1}$ denote the source set. Let $S_{i}$ denote the target set. We defined the eviction policy $g: S_{i-1} \to S_{i}$ s.t. \begin{itemize}[itemsep=0.1pt,topsep=0pt,leftmargin=*]
        \item  $|S_i| = k$ ($\kv$ $\cache$ size is not changing over the time)
        \item  $| S_i \backslash S_{i-1} | \leq 1$ or equivalently $|S_i \cap S_{i-1}| \geq k-1$ (we can evict at most $1$ $\kv$ in the $\kv$ $\cache$)
        \item We construct $S_i \gets (S_{i-1} \cup  \{i\} ) \backslash \{ u\}$ as $u \gets \arg\max_{v \in ( S_{i-1} \cup \{i \} ) } F_{\mathrm{score}}(S_{i-1} \cup \{i\} \backslash \{v\} \} $
    \end{itemize}
\end{definition}

To describe our algorithm (Algorithm~\ref{alg:main}), we choose a particular instantiation of the function $F_{\mathrm{score}}$, \textit{i.e.}, the summation of that sets in the attention matrix.


\begin{minipage}{.56\textwidth}
\vspace{-2mm}
\begin{algorithm}[H]
\scriptsize
\caption{$\mathsf{H_2}$ Eviction Algorithm}\label{alg:main}
\begin{algorithmic}[1]
\Procedure{H$_2$\_Eviction}{$Q, K \in \R^{n \times d}, k\in \mathbb{N}$}
    \State Let $k$ denote the budget size of $\cache$
    \State $S_0 \gets \emptyset$ 
    \For{$i=1 \to n$}
        \If{$i \leq k$} 
            \State $S_i \gets S_{i-1} \cup \{i\}$
        \Else
            \State $D_i \gets  ( \exp( Q_{i,*} ( K_{S_{i {-1} },*} )^\top ) - 1_{[i] \backslash S_{i {-1} }} )   \cdot   {\bf 1}_i $ 
            \State $o_{i} \gets D_i^{-1}   \cdot ( \exp(   Q_{i,*}  ( K_{S_{i {-1} },*} )^\top   ) - {1_{[i] \backslash S_{i-1}} } ) $
            \State $F_{\mathrm{score}}(T):= \sum_{s \in T} o_s$ \label{line:main:informal:o_s}
            \State  $G_i \gets S_{i-1} \cup \{ i \}$
            \State $u \gets \underset{v \in G_i } {\arg\max} ~F_{\mathrm{score}}(S_{i-1} \cup \{i\} \backslash \{v\} \} $ 
            \State $S_i \gets (S_{i-1} \cup  \{i\} ) \backslash \{ u\}$
        \EndIf
    \EndFor
   \EndProcedure
\end{algorithmic}
\end{algorithm}
\end{minipage}
\hspace{2mm}
\begin{minipage}{.43\textwidth}
\vspace{-3mm}
\begin{figure}[H]
\centering
\includegraphics[width=0.85\linewidth]{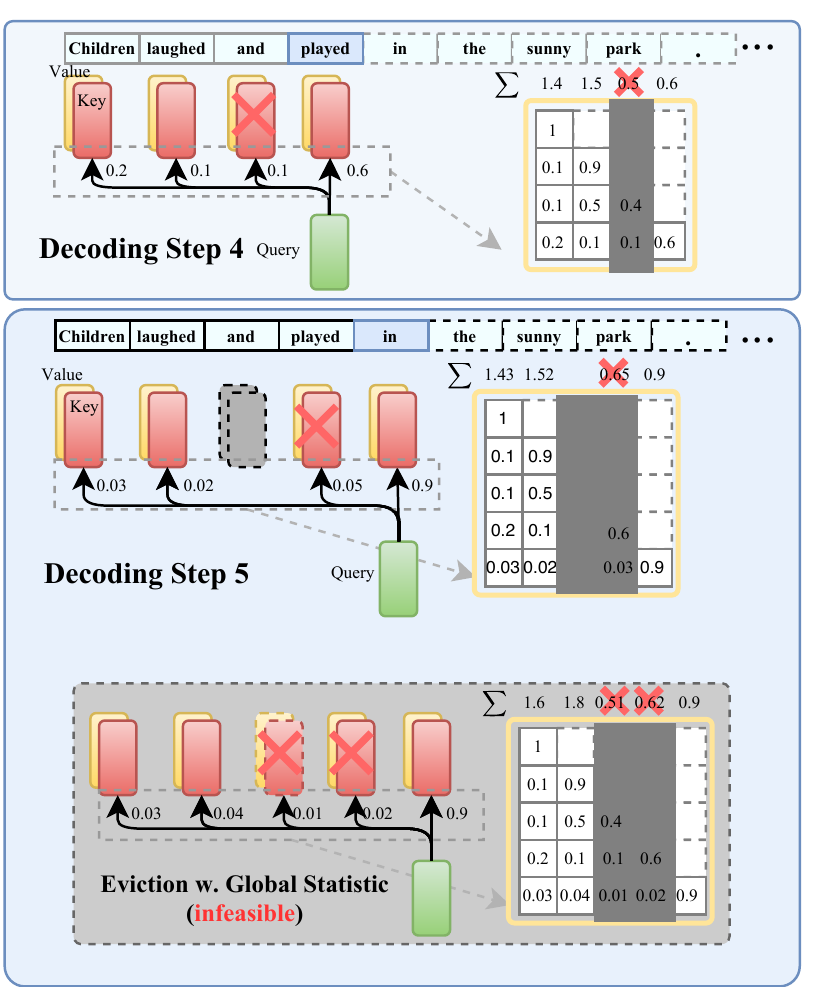}
\vspace{-1mm}
\caption{Illustration of Algorithm~\ref{alg:main} during two consecutive decoding steps.}
\label{fig:algo}
\end{figure}
\end{minipage}

Figure~\ref{fig:algo} presents an illustrative example of our $\mathsf{H_2}$ Eviction Algorithm. We assume that the budget size of $\kv$ $\cache$ is $3$. Following the completion of the fourth decoding step, the $\kv$ embeddings associated with the third token are evicted based on the accumulated attention score. Consequently, these evicted $\kv$ embeddings become inaccessible in the subsequent decoding steps.





\subsection{Theoretical Guarantee and System Implementation}
We state a theoretical result as follows. The proofs and more details are provided in Appendix~\ref{sec:theory}.
\begin{theorem}[informal]\label{thm:main_submodular_greedy:informal}
Under the mild assumption, let $k$ denote the budget of space limitation. If for each token, we greedily compute the attention score based on top-$k$ choice, then we can show the set $\wt{S}_i$ we generate each for token $i$ satisfy that $f(\wt{S}_i) \geq (1-\alpha) (1-1/e)\max_{|S|=k} f(S) - \beta$, where $\alpha, \beta > 0$ are parameters.
\end{theorem}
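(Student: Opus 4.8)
The plan is to reduce the dynamic, cache-constrained generative process to a sequence of static submodular maximization instances, one per token, and then invoke the classical greedy guarantee (the $(1-1/e)$ bound) at each step while tracking the error introduced by the fact that at step $i$ we do \emph{not} re-optimize from scratch but rather inherit the set $\wt{S}_{i-1}$ and are allowed to change it by only one element. First I would fix a token index $i$ and consider the score function $f_i(\cdot) := F(\wt{S}_{i-1}, \cdot) = F_{\mathrm{score}}$ restricted to the relevant ground set $[i]$; by Definition~\ref{def:weak_submodular:informal} (the dynamic submodular framework), with the choice $Z = \wt{S}_{i-1}$ this $f_i$ is a genuine submodular (and, one checks, monotone) function on subsets of $[i]$ containing $\wt{S}_{i-1}$. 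So the ``one-step'' problem is: starting from $\wt{S}_{i-1}$ with $|\wt{S}_{i-1}| = k$, add the new element $i$ and remove the element $u$ minimizing the loss, i.e. maximizing $f_i(\wt{S}_{i-1} \cup \{i\} \setminus \{v\})$ over $v$. This is exactly one greedy swap.

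The key steps, in order: (1) Show that a single greedy swap on a monotone submodular function preserves a multiplicative approximation factor relative to the best size-$k$ set $f_i$ can achieve — this is where the $(1-1/e)$ factor enters, either by viewing the whole trajectory $\wt{S}_1, \wt{S}_2, \dots$ as an incremental build-up (the first $k$ steps are pure greedy additions, which give $(1-1/e)$ by the standard Nemhauser--Wolsey--Fisher argument) and then arguing the subsequent swaps do not degrade the guarantee, or by a direct exchange argument. (2) Introduce the parameter $\alpha$: in practice we only have access to an \emph{approximate} attention score (local statistics instead of global, finite-precision $\exp$, the $1_{[i]\setminus S}$ correction terms in Algorithm~\ref{alg:main}), so the greedy choice is made w.r.t. a function $\wt{f}_i$ with $\|\wt{f}_i - f_i\|_\infty$ or a multiplicative $(1\pm\alpha)$ bound; standard robustness of greedy under approximate oracles converts this into the $(1-\alpha)$ prefactor. (3) Introduce $\beta$: because the submodular function \emph{changes} from step to step (the conditioning set $Z$ grows, and the ground set $[i]$ grows), the set $\wt{S}_{i-1}$ that is near-optimal for $f_{i-1}$ need not be near-optimal for $f_i$; I would bound the per-step drift $|f_i(S) - f_{i-1}(S)|$ using Lipschitz-type control on the attention scores (e.g. the new token contributes a bounded additional mass after renormalization), sum these drifts, and fold the total into the additive error $\beta$. (4) Combine: $f(\wt{S}_i) \ge (1-\alpha)(1-1/e)\max_{|S|=k} f(S) - \beta$ by chaining the one-step bound with the accumulated drift and approximation terms.

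The main obstacle I expect is Step (3): controlling the interaction between the \emph{dynamic} nature of the problem and the \emph{cardinality constraint with only one swap allowed per step}. In ordinary submodular maximization the greedy algorithm is free to pick any size-$k$ set; here the feasible region at step $i$ is a tiny Hamming ball around $\wt{S}_{i-1}$, so the near-optimality can only be maintained if the optimum itself does not move too fast — which is precisely what the ``mild assumption'' in the theorem statement must encode (some stability / bounded-influence condition on the attention matrix, ensuring $\max_{|S|=k} f_i(S)$ and its argmax are slowly-varying in $i$). Making that assumption explicit and quantitative, and verifying that the greedy swap rule of Definition~\ref{def:eviction_policy_H_2} actually tracks the moving optimum closely enough to keep the error $\beta$ from accumulating faster than linearly (ideally sublinearly, or at least with a controllable constant), is the technical heart of the argument. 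A secondary subtlety is checking monotonicity of $f_i$: the renormalization by $D_i^{-1}$ in the definition of $o_i$ means adding a token both adds a positive term and changes the normalizer, so I would need a short lemma showing the net effect on $F_{\mathrm{score}}(T) = \sum_{s\in T} o_s$ is still monotone (or monotone up to an additional additive slack that again gets absorbed into $\beta$).
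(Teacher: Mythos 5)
Your proposal captures the paper's overall architecture: reduce the generative process to a sequence of per-step submodular maximization problems using the dynamic framework of Definition~\ref{def:weak_submodular:informal}, seed the $(1-1/e)$ guarantee via the classical greedy bound (Theorem~\ref{thm:greedy}) for the first $k$ steps, then propagate it through the swap steps via an induction that charges a small multiplicative and a small additive loss per step, and fold those losses into $\alpha$ and $\beta$. You also correctly diagnose where the ``mild assumption'' is hiding: it is exactly a slow-drift/stability condition on the attention scores, which the paper formalizes as the ``Universal Dynamic Conditions'' (Definitions~\ref{def:universal_dynamic_condition_1_exact}--\ref{def:universal_dynamic_condition_2}) and ``Universal Monotone Condition.'' And your worry that the greedy swap might be trapped in a small Hamming ball while the optimum moves is precisely what those conditions rule out.

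The one substantive difference is how you attribute the two error parameters, which is essentially swapped relative to the paper. In the paper's proof (Lemma~\ref{lem:induction_approximate}, instantiated in Theorem~\ref{thm:main_submodular_greedy:formal} with $\theta=\gamma=\alpha/(10n)$, $\epsilon_0=\beta/(10n)$), the multiplicative factor $(1-\alpha)$ comes from the drift: each step costs a relative factor $(1-\theta)(1-\gamma)$ from the two dynamic conditions, and $n$ such steps compound to roughly $1-\alpha$. The additive $\beta$ comes from the approximate oracle: Corollary~\ref{cor:error_approximate_greedy_algo} gives an additive $O(k\epsilon)$ error for an additive-$\epsilon$-accurate oracle, and the $n$ per-step approximation errors of size $\epsilon_0$ sum to $\beta$. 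You instead propose a multiplicative $(1\pm\alpha)$ oracle giving the $(1-\alpha)$ factor, and an additive Lipschitz bound on $|f_i-f_{i-1}|$ summed over steps giving $\beta$. Both decompositions are coherent; the paper's version is cleaner because a relative drift bound $f_{S_i,i}(S_i)\geq(1-\theta)f_{S_{i-1},i-1}(S_i)$ and $\opt_i\geq(1-\gamma)\opt_{i+1}$ composes more directly under multiplication through the induction, while additive oracle error is what the robust-greedy lemma (Corollary~\ref{cor:error_approximate_greedy_algo}) already supplies. One small caution on your step (1): a single greedy swap on a monotone submodular function does not, in general, ``preserve a $(1-1/e)$ factor relative to the best size-$k$ set'' — what the paper actually proves is weaker and more careful: the swap cannot decrease the current score (by the argmax choice), the optimum cannot rise by more than a $(1-\gamma)^{-1}$ factor per step, and the function cannot drop by more than a $(1-\theta)$ factor when the index shifts; chaining these three facts is what keeps the $(1-1/e)$ established at the base case from eroding. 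Your ``main obstacle'' paragraph says essentially this, so you had the right idea, but the phrasing of step (1) overstates what the swap itself guarantees.
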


\vspace{-3mm}
\begin{remark}
We remark the above theorem provides a theoretical explanation of why can we hope our greedy algorithm (with cache limitation) can provide a good solution to the problem.
\end{remark}

\vspace{-2mm}
\label{main:section_system_details}
\paragraph{Implementation Details.} We provide a general framework that can support any $\kv$ $\cache$ eviction algorithm and enhance throughput and reduce the latency of LLM generation with careful implementation. For example, to ensure I/O efficiency, we do not swap memory when stored $\kv$ is evicted, but directly fill with newly-added $\kv$. More details are included in Appendix~\ref{app:imp_detail}.

\vspace{-1mm}
\section{Empirical Evaluation}
\vspace{-2mm}
In this section, our goal is to demonstrate that $\mathsf{H_2O}$, a remarkably simple $\kv \cache$ eviction policy is capable of enhancing end-to-end throughput and reducing latency in wall-clock while maintaining generation quality across a broad spectrum of domains and tasks.
\begin{itemize}[itemsep=0.1pt,topsep=0pt,leftmargin=*]
\item In Section~\ref{subsec:mainexp}, we show that $\mathsf{H_2O}$ can reduce the memory footprint of $\kv \cache$ by up to $5\times$ without accuracy degradation on a wide range of model architectures (OPT, LLaMA, GPT-NeoX), sizes (from 6.7B to 175B) and evaluation benchmarks (HELM and lm-eval-harness). More importantly, can enhance the performance of existing $\kv \cache$ sparsification techniques.
\item In Section~\ref{sec:throughput-exp}, we demonstrate that $\mathsf{H_2O}$ can increase the inference throughput by up to $3\times$,  $29\times$,  $29\times$ compared to the state-of-the-art inference engine FlexGen, DeepSpeed and the widely used Hugging Face Accelerate without compromising model quality.
\item In Section~\ref{subsec:ablation}, we present extensive ablation studies to show the effectiveness of $\mathsf{H_2O}$ under different sequence lengths, especially the input with infinite sequence length and its compatibility with quantization.
\end{itemize}
\label{main:experiment_details}
All details (hyperparameters, data splits, etc.), along with additional experiments, are in Appendix~\ref{app:imp_detail}.

\begin{figure}
\centering
\includegraphics[width=\linewidth]{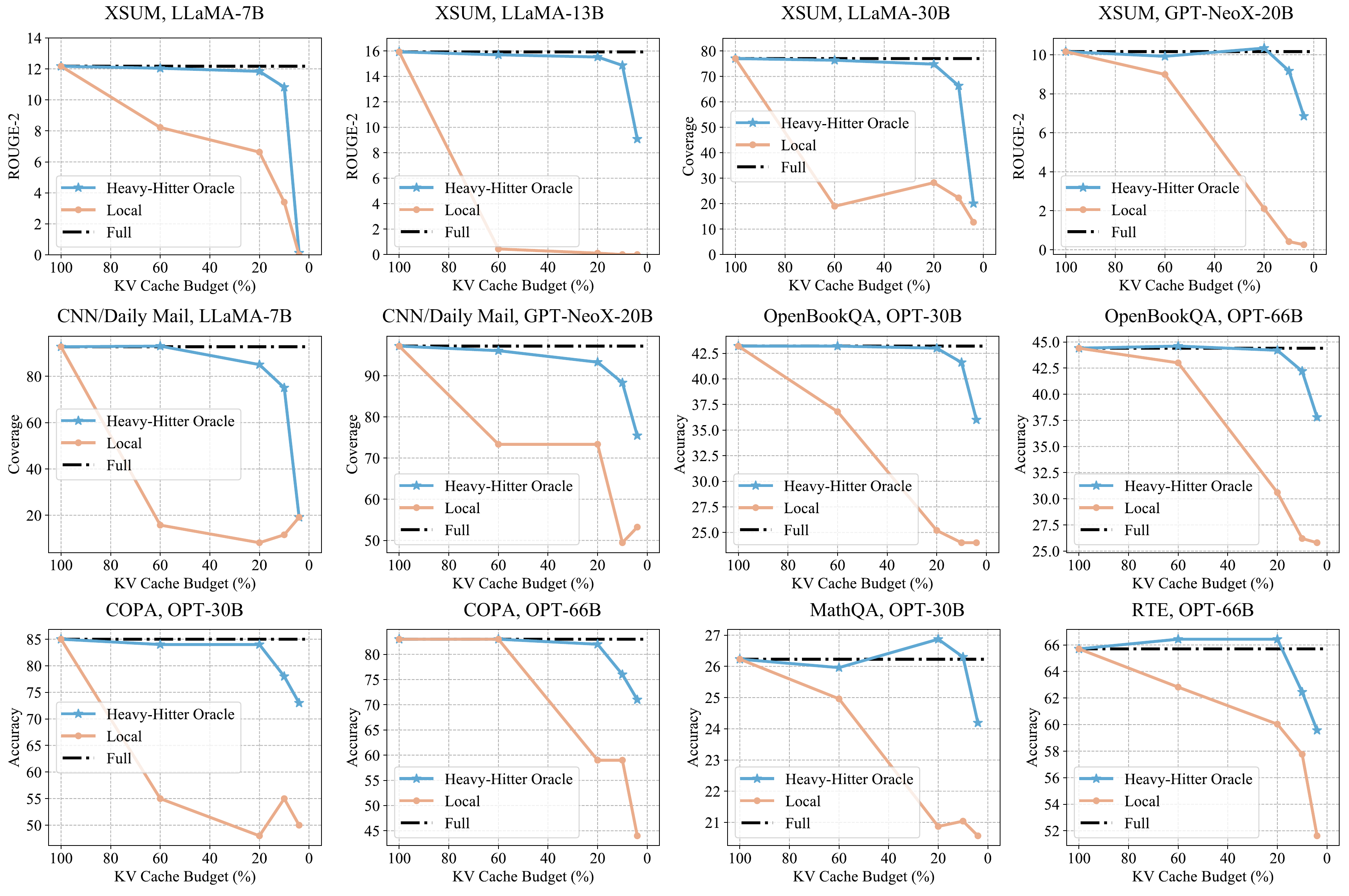}
\caption{Comparsion results between the baseline model with full cache, our $\mathsf{H_2O}$, and the "Local" strategy that utilizes the most recent $\kv$ embeddings.}
\vspace{-4mm}
\label{fig:main}
\end{figure}
\vspace{-1mm}
\subsection{End-to-End Results}
\label{subsec:mainexp}

We demonstrate that $\mathsf{H_2O}$ can reduce $\kv$ $\cache$ memory footprint by $5$-$10 \times$ while achieving comparable accuracy on a majority of tasks.

\vspace{-2mm}
\paragraph{Setup.} Our experiments are based on three representative model families of LLMs, including the OPT~\cite{zhang2022opt} with model sizes, LLaMA~\cite{touvron2023llama}, and GPT-NeoX-20B~\cite{gpt-neox-20b}. We sample eight tasks from two popular evaluation frameworks (HELM~\cite{liang2022holistic} and lm-eval-harness~\cite{eval-harness}): COPA~\cite{roemmele2011choice}, MathQA~\cite{amini-etal-2019-mathqa}, OpenBookQA~\cite{OpenBookQA2018}, PiQA~\cite{Bisk2020}, RTE~\cite{wang2018glue}, Winogrande~\cite{ai2:winogrande}, XSUM~\cite{narayan2018don}, CNN/Daily Mail~\cite{nallapati2016abstractive}. Also, we evaluate our approach on recent generation benchmarks, AlpaceEval~\cite{alpaca_eval} and MT-bench~\cite{zheng2023judging}, and the details are included in Appendix. 
We use NVIDIA A$100$ $80$GB GPU. 
\vspace{-2mm}
\paragraph{Baselines.} Since $\mathsf{H_2O}$ evenly assigns the caching budget to $\mathsf{H_2}$ and the most recent $\kv$, except for full $\kv$ $\cache$, we consider the "Local" strategy as a baseline method. In addition, we also provide two different variants of Sparse Transformers (strided and fixed) as strong baselines. Also, the full $\kv$ $\cache$ with fewer shots ($0$/$1$-shot) prompts are considered as the baseline, which has a similar sequence length of the $5$-shot tasks with $20\%$ $\kv$ $\cache$ budget.

\begin{wraptable}{r}{0.5\linewidth}
\centering
\vspace{-3mm}
\caption{\small Quantatively comparison between $\mathsf{H_2O}$ with Full methods of different number of shots.}
\resizebox{1\linewidth}{!}{
\begin{tabular}{c|cccc}
\toprule
Methods & PiQA &  COPA & OpenbookQA & Winogrande \\ \midrule
Full & 80.09 & 81.00 & 44.80 & 71.51 \\
$0$-shot Full & 78.89 & 76.00 & 41.40 & 70.00 \\
$1$-shot Full & 79.11 & 76.00 & 43.60 & 70.24 \\
Local & 57.94 & 56.00 & 28.40 & 51.30 \\
$\mathsf{H_2O}$ & 79.22 & 85.00 & 43.80 & 71.67 \\ \bottomrule
\end{tabular}
}
\vspace{-4mm}
\label{tab:rebuttal_full_baseline}
\end{wraptable}

\paragraph{Main Results.} We evaluate LLMs with $\kv$ $\cache$ budget ranging from $4\%$ to $100\%$ on $5$-shot downstream tasks. Results are summarized in Figure~\ref{fig:main} and Table~\ref{tab:rebuttal_full_baseline}\&~\ref{tab:stride_fix}. The following observations can be drawn: (1) With different $\kv$ $\cache$ budgets, our $\mathsf{H_2O}$ demonstrates consistent and significant improvements against the "Local" strategy across various model sizes, model types, and downstream tasks. We can draw similar conclusions comparing $\mathsf{H_2O}$ with other baselines like Sparse Transformer; (2) Meanwhile, with less than $20\%$ $\kv$ $\cache$ budget(\textit{i.e.}, more than $5\times$ memory reduction), $\mathsf{H_2O}$ achieves comparable performance as the model with full $\kv$ embeddings; (3) $\mathsf{H_2O}$ with $20\%$ $\kv$ $\cache$ budget approximately uses $1.2$ samples per input and show consistent improvement over zero-shot and one-shot full model that use $1$ and $2$ samples, respectively. (4) Our $\mathsf{H_2O}$ shows consistent effectiveness in the more challenging long sequence generation tasks, XSUM, and CNN/Daily Mail. 

\begin{wraptable}{r}{0.65\linewidth}
\centering
\vspace{-2mm}
\caption{\small Results of different sparsification methods \textit{w.} or \textit{w.o.} $\mathsf{H_2}$. Experiments are conducted with OPT-$30$B with $20\%$ $\kv$ $\cache$ budget.}
\resizebox{1\linewidth}{!}{
\begin{tabular}{c|cccc}
\toprule
Models & COPA & OpenBookQA & PiQA & Winogrande \\ \midrule
Full & $85.00$ & $43.20$ & $78.51$ & $70.24$ \\ \midrule
Local \textit{w.o.} $\mathsf{H_2}$ & $48.00$ & $25.20$ & $55.82$ & $49.17$ \\
\rowcolor[gray]{0.9} 
Local \textit{w.} $\mathsf{H_2}$ & $84.00$ & $43.00$ & $78.45$ & $69.06$ \\ \midrule
Sparse Transformer (strided) \textit{w.o.} $\mathsf{H_2}$ & $50.00$ & $24.60$ & $56.20$ & $47.59$ \\
\rowcolor[gray]{0.9} 
Sparse Transformer (strided) \textit{w.} $\mathsf{H_2}$ & $83.00$ & $42.60$ & $78.24$ & $69.61$ \\ \midrule
Sparse Transformer (fixed) \textit{w.o.} $\mathsf{H_2}$ & $61.00$ & $23.80$ & $58.60$ & $49.88$ \\
\rowcolor[gray]{0.9} 
Sparse Transformer (fixed) \textit{w.} $\mathsf{H_2}$ & $76.00$ & $41.40$ & $77.80$ & $64.96$ \\ 
\bottomrule
\end{tabular}}
\label{tab:stride_fix}
\end{wraptable}

\paragraph{Analysis.} Since the evicted $\kv$ will not be seen in the future steps, dropping certain critical $\kv$ embeddings can cause a severe functional collapse, resulting in significant performance degradation, \textit{e.g.}, in $\{$LLaMA-$13$B, XSUM$\}$ $\{$LLaMA-$7$B, CNN/Daily Mail$\}$, the "Local" strategy collapses at $60\%$ budgets while our $\mathsf{H_2O}$ can still match the full $\cache$ performance with $20\%$ budgets. In some tasks, our methods even surpass the baseline models, which demonstrates a regularization effect of our $\mathsf{H_2O}$. For example, in $\{$OPT-$66$B, RTE$\}$, $\{$OPT-$30$B, MathQA$\}$ and $\{$GPT-NeoX-$20$B, XSUM$\}$, our $\mathsf{H_2O}$ achieves an extra performance improvement of $0.73\%$, $0.64\%$ and $0.18$ with $20\%$ $\kv$ $\cache$ budget, respectively. These consistent results validate the effectiveness of our $\mathsf{H_2O}$ framework.

\vspace{-2mm}
\paragraph{Enhancing Baseline Techniques.} 
Importantly, we observe other sparsification baselines fail under an extremely low cache budget while combining the most recent $\kv$ embeddings with the ones of heavy hitters successfully achieves comparable performance as using full $\kv$ embeddings. From Table~\ref{tab:stride_fix}, we can observe that both "strided" and "fixed" sparse attention fail under $20\%$ $\kv$ $\cache$ budgets, encountering a significant performance drop (up to $35\%$ compared with the full cache). After combining with $\mathsf{H_2}$, both approaches reach a similar performance as using full $\kv$ embeddings.

\subsection{Heavy Hitter for High-Throughput Generative Inference}
\label{sec:throughput-exp}

\begin{wraptable}{c}{1\linewidth}
\centering
\vspace{-4mm}
\caption{Generation throughput (token/s) on a T4 GPU with different systems. In the sequence length row, we use ``512 + 32'' to denote a prompt length of 512 and a generation length of 32. ``OOM'' means out-of-memory. The gray text in the bracket denotes the effective batch size and the lowest level of the memory hierarchy that the system needs for offloading, where ``C'' means CPU and ``G'' means GPU.}
\label{table:e2e_throughput_t4}
\footnotesize
\resizebox{1\linewidth}{!}{
\begin{tabular}{lllllll}
\toprule
Seq. length & \multicolumn{2}{c}{512+32} & \multicolumn{2}{c}{512+512} & \multicolumn{2}{c}{512+1024}  \\
\cmidrule(lr){2-3}\cmidrule(lr){4-5}\cmidrule(lr){6-7}
Model size  & 6.7B  & 30B   & 6.7B  & 30B   & 6.7B  & 30B   \\
\midrule
Accelerate  & 20.4 \color{gray}{(2, G)} & 0.6 \color{gray}{(8, C)} & 15.5 \color{gray}{(1, G)} & 0.6 \color{gray}{(8, C)} & 5.6 \color{gray}{(16, C)} & 0.6 \color{gray}{(8, C)}  \\
DeepSpeed   & 10.2 \color{gray}{(16, C)} & 0.6 \color{gray}{(4, C)}  & 9.6 \color{gray}{(16, C)}   & 0.6 \color{gray}{(4, C)}  & 10.1 \color{gray}{(16, C)} & 0.6 \color{gray}{(4, C)}  \\
FlexGen     & 20.2 \color{gray}{(2, G)}  & 8.1 \color{gray}{(144, C)}  & 16.8 \color{gray}{(1, G)}  & 8.5 \color{gray}{(80, C)}   & 16.9  \color{gray}{(1, G)}  & 7.1 \color{gray}{(48, C)}  \\
\midrule

\rowcolor[gray]{0.9} 
$\mathsf{H_2O}$ (20\%) & 35.1 \color{gray}{(4, G)}  &  12.7 \color{gray}{(728, C)}  & 51.7 \color{gray}{(4, G)} & 18.83 \color{gray}{(416, C)} & 52.1 \color{gray}{(4, G)} & 13.82 \color{gray}{(264, C)} \\



\bottomrule
\end{tabular}
}
\vspace{-1mm}
\end{wraptable}

\begin{wraptable}{r}{0.45\linewidth}
\centering
\caption{Results of generation throughput (token/s) on a T4 GPU with different systems on real-world datasets, XSUM.}
\label{table:e2e_throughput_xsum}
\footnotesize
\resizebox{1\linewidth}{!}{
\begin{tabular}{lll}
\toprule
Model size  & 6.7B  & 30B \\
\midrule
Accelerate  & 11.98 \color{gray}{(1, G)} & 0.23 \color{gray}{(2, C)} \\
DeepSpeed   & 3.52 \color{gray}{(6, C)} & 0.31 \color{gray}{(2, C)}  \\
FlexGen     & 10.80 \color{gray}{(1, G)}  & 3.29 \color{gray}{(44, C)} \\
\midrule

\rowcolor[gray]{0.9} 
$\mathsf{H_2O}$ (20\%) & 30.40 \color{gray}{(1, G)}  &  6.70 \color{gray}{(180, C)} \\

\bottomrule
\end{tabular}
}
\vspace{-3mm}
\end{wraptable}

We implement our $\kv$ $\cache$ eviction policy in a state-of-the-art inference engine, FlexGen~\cite{sheng2023high}, and report the throughput and latency improvements. $\mathsf{H_2O}$ is orthogonal to existing optimizations in FlexGen, such as offloading and quantization, so they can be combined to achieve better performance.

\vspace{-2mm}
\begin{table}[t]
\centering
\vspace{-2mm}
\caption{Generation throughput and latency on an A100 GPU. In the sequence length row, we use ``7000 + 1024'' to denote a prompt length of 7000 and a generation length of 1024. ``OOM'' means out-of-memory.}
\label{table:e2e_throughput_a100}
\footnotesize
\resizebox{1\linewidth}{!}{
\begin{tabular}{rrrrrr}
\toprule
Seq. length & Model size & Batch size & Metric & FlexGen & $\mathsf{H_2O}$ (20\%)\\
\midrule
7000+1024 & 30B & 1 & latency (s) & 57.0 & 50.4 \\
5000+5000 & 13B & 4 & latency (s) & 214.2 & 155.4\\
2048+2048 & 6.7B & 24 & latency (s) & 99.5 & 53.5\\
\midrule
2048+2048 & 6.7B & 24 & throughput (token/s) & 494.1 & 918.9 \\
2048+2048 & 6.7B & 64 & throughput (token/s) & OOM & 1161.0 \\
\bottomrule
\end{tabular}
}
\end{table}
\vspace{-2mm}

\paragraph{Setup}
We conducted experiments on two GPUs: an NVIDIA T4 (16GB) GPU and an NVIDIA A100 (80GB) GPU.
On the T4 GPU, we evaluate the generation throughput following the settings in the FlexGen paper.
The evaluated models are OPT-6.7B and OPT-30B.
When the model and $\kv$ $\cache$ do not fit into a single GPU, we turn on CPU offloading. The results of both pure GPU and GPU with CPU offloading are reported. All the speedup results are tested in an end-to-end setting, including both the pre-filling and generation phases. And it includes the time for constructing the $\mathsf{H_2O}$ $\kv$ $\cache$. We use synthetic datasets where all prompts are padded to the same length. The system is then required to generate the same number of tokens for each prompt. We test different combinations of prompt and generation lengths. We also test our method on real-world datasets (XSUM) for further assessment. The evaluation metric is generation throughput, which is the number of generated tokens / (prompt time + decoding time).
We use DeepSpeed ZeRO-Inference~\cite{aminabadi2022deepspeed}, Hugging Face Accelerate~\cite{huggingfaceAccelerate}, and FlexGen~\cite{sheng2023high} as baselines. 
On the A100 GPU, with more GPU memory, we evaluate the performance of the systems with sequence lengths up to 10K. Although OPT is only trained on 2K sequence length, we benchmark the throughput and latency performance to show the potential of $\mathsf{H_2O}$ for better models in the future.

\vspace{-2mm}
\paragraph{Results.}
Table~\ref{table:e2e_throughput_t4}\&~\ref{table:e2e_throughput_xsum} shows the generation throughput of all systems on the T4 GPU.
With our $\kv$ $\cache$ eviction policy, the memory usage is reduced, which brings two advantages: 1) we can use a much larger batch size;  2) we can make a setting from requiring offloading to not requiring offloading.
As shown in Table~\ref{table:e2e_throughput_t4}\&~\ref{table:e2e_throughput_xsum}, $\mathsf{H_2O}$ with a 20\% budget improves the generation throughput over FlexGen, DeepSpeed, and Accelerate by up to $3 \times$, $29 \times$, and $29\times$, respectively, across both synthetic and real-world dataset. The results on the A100 GPU with sequence lengths from 4K to 10K are listed in Table~\ref{table:e2e_throughput_a100}. With the same batch size, $\mathsf{H_2O}$ can reduce the latency by $1.1 - 1.9\times$ compared to FlexGen. Additionally, $\mathsf{H_2O}$ saves memory so it allows a larger batch size, which brings $2.3\times$ improvement on generation throughput for OPT-6.7B.

\subsection{Ablation Results}
\label{subsec:ablation}
\vspace{-2mm}

We present extensive ablation studies of $\mathsf{H_2O}$ on (1) infinite-length input, (2) different number of shots, (3) compatibility with quantization methods on $\kv \cache$, and (4) dissecting the effectiveness of different components. We find a surprising property of  $\mathsf{H_2O}$ -- it not only improves the efficiency of LLMs, but also increases the diversity of the generated text.

\textit{\textbf{Q1:}} \textbf{Can $\mathsf{H_2O}$ empower LLMs to process infinite-length inputs?} \textit{{\textbf{A1:}}} \textbf{Effective generation with sequence length up to four million tokens.} Some recent works~\cite{xiao2023efficient,han2023lm} demonstrate the possibility of handling infinite-length inputs, a notable challenge in current LLMs. These methods employ an attention sink that retains the first few tokens and applies position rolling in the $\kv \cache$, empowering LLMs to process infinite-length inputs. Inspired by this progress, we further implement our $\mathsf{H_2O}$ for infinite-length inputs. Figure~\ref{fig:stream} showcases the positive results of $\mathsf{H_2O}$, \textit{i.e.}, $\mathsf{H_2O}$ can empower LLMs to tackle input with length up to four million tokens, achieving a better performance (lower perplexity) than the original StreamLLM method~\cite{xiao2023efficient} across various cache size. Further comparisons are reported in Appendix~\ref{app:stream}.
\label{main:stream}
\begin{wrapfigure}{r}{0.4\textwidth}
\centering
\includegraphics[width=1\linewidth]{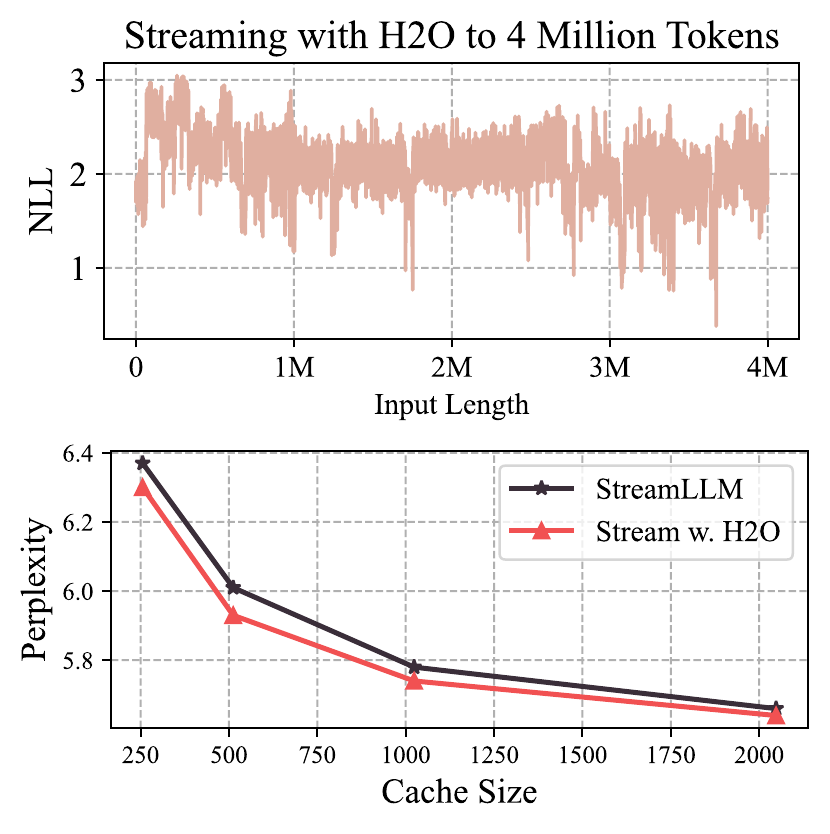}
\vspace{-3mm}
\caption{(Upper) streaming with $\mathsf{H_2O}$ to handle inputs with sequence lengths of four million tokens. (Bottom) Perplexity comparison between the original StreamLLM method and our $\mathsf{H_2O}$, results are collected on the first text sample of PG-19~\cite{rae2019compressive}.}
\label{fig:stream}
\end{wrapfigure}

\textit{\textbf{Q2:}} \textbf{Does the number of shots during inference effects the effectiveness of $\mathsf{H_2O}$?} \textit{\textbf{A2:}} \textbf{Effective across zero-shot to ten-shots inference.} We further examine $\mathsf{H_2O}$ under different numbers of shots during inference, and the results are reported in Table~\ref{tab:num_shots} and Figure~\ref{fig:zero-shot}. With different shots inference, our $\mathsf{H_2O}$ achieves matching performance (difference less than $1.00\%$) as the full model across different downstream tasks. The "Local" strategy encounters significant performance degradation (up to $37.00\%$. Such results demonstrate the effectiveness of our $\mathsf{H_2O}$ under different inference scenarios. More details about zero-shot and one-shot inference are reported in Appendix~\ref{sec:zero_shot}.


\textit{\textbf{Q3:}} \textbf{Compatible with Quatization?} \textit{\textbf{A3:}} \textbf{Yes.} To pursue further efficiency, we show the compatibility of $\mathsf{H_2O}$ with another orthogonal approach, \textit{i.e.}, quantization in Table~\ref{tab:quant}. We use OPT-30B as our base model and COPA, OpenBookWA, and PiQA as evaluation tasks. Intuitively sparsity and quantization are highly related so combining them might introduce larger errors. Surprisingly the combination almost always achieves better accuracy than $\mathsf{H_2O}$ or quantization alone. Experiments about throughput improvement are detailed in Appendix~\ref{throughput_q}.\label{main:quantization}

\textit{\textbf{Q4:}} \textbf{When does $\mathsf{H_2O}$ match the baseline with full $\kv$ embeddings?} \textit{\textbf{A4:}} \textbf{With both $\mathsf{H_2}$ and the recent tokens.} We investigate the separate effects of $\kv$ embeddings of $\mathsf{H_2}$ and the local tokens. We conduct experiments on $4$ tasks with OPT-$13$B and OPT-$30$B. For each task, we compare the performance of three $\kv$ $\cache$ eviction policies, including only the $\kv$ embeddings of $\mathsf{H_2}$, only the ones of local tokens, and our $\mathsf{H_2O}$ that keep both. As shown in Table~\ref{tab:ablation}, only retaining the embeddings of $\mathsf{H_2}$ or local tokens can't maintain a similar performance as the model using full embeddings, with a performance degradation from $2.85\%$ to $22.75\%$. Incorporating both components, our $\mathsf{H_2O}$ successfully retains the baseline performance with full embeddings. Besides, the model with only $\mathsf{H_2}$ shows a consistent improvement against the one with only local tokens, which indicates $\mathsf{H_2}$ might contribute more to maintaining the performance.


\textit{\textbf{Q5:}} \textbf{Extra benefits from $\mathsf{H_2O}$?} \textit{\textbf{A5:}} \textbf{Increased diversity of generated text.} Besides all the benefits of our $\mathsf{H_2O}$, we also observe an bonus introduced by $\mathsf{H_2O}$, \textit{i.e.}, the improved diversity of generated content. The results are reported in Appendix~\ref{app:diversity}. Given the same prompts, we visualize the generated text of the models with different $\kv$ $\cache$ budgets. Compared with the model of full $\kv$ $\cache$, our $\mathsf{H_2O}$ can generate sentences with fewer repeated words and more creativity.\label{main:diversity}

\vspace{-3mm}
\section{Conclusion and Discussion}
\vspace{-2mm}
In this paper, we study one of the key bottlenecks of LLM deployment, $\kv \cache$, particularly for long-content and large-batch generation applications. We propose $\mathsf{H_2O}$, a simple $\kv \cache$ eviction policy for significantly reducing its memory footprint. The main insight of our approach is the recognition of a subset of tokens, known as Heavy Hitters, which contribute the most value when computing attention scores. We formulate the $\kv \cache$ eviction as a dynamic submodular problem and provide the theoretical guarantees for our algorithm. Through extensive evaluations, we demonstrate that $\mathsf{H_2O}$ can significantly improve end-to-end throughput and decrease latency in wall-clock time, without compromising the generation quality of LLMs across a variety of tasks.

\vspace{-3mm}
\section{Acknowledgement}
\vspace{-2mm}
Ying Sheng and Clark Barrett are partly supported by NSF-2110397 and the Stanford Center for Automated Reasoning. Z. Wang is in part
supported by a Google Research Scholar Award and the NSF AI Institute for Foundations of Machine Learning (IFML).


\clearpage

\bibliographystyle{unsrt}
\bibliography{ref}

\newpage
\appendix
\addcontentsline{toc}{section}{Appendix} 
\renewcommand \thepart{} 
\renewcommand \partname{}
\newpage
\part{Appendix} 
\parttoc 

\newpage
\section{More Implementation Details}
\label{app:imp_detail}

In this section, our goal is to provide the details of system implementation (mentioned in Section~\ref{main:section_system_details}) and experiment settings (mentioned in Section~\ref{main:experiment_details}), as well as the pseudocode.

\paragraph{System Details.}
We implement $\mathsf{H_2O}$ on top of FlexGen. FlexGen is a white-box implementation of OPT models, and we have done some surgery on handling the $\kv \cache$.
Specifically, for the given parameter $K$, we always maintain a list of $\kv \cache$ with the first $K$ entries as heavy hitters, and the last $K$ entries as most recent tokens.
In order to avoid data movement in memory, the memory for $\kv \cache$ is preallocated. We use a circular queue to update the last $K$ entries efficiently.

\paragraph{Experiment Details.}
Our study involves the evaluation with varying sizes of $\kv \cache$ that encompass $4\%$, $10\%$, $20\%$, and $60\%$ of the prompt's length. We select two tasks (XSUM and CNN/Daily Mail) from the HELM framework~\cite{liang2022holistic} and present the performance based on $1000$ test samples. Additionally, we employ the lm-eval-harness framework~\cite{eval-harness} for six other tasks (COPA, MathQA, OpenBookQA, PiQA, RTE, and Winogrande). For the tasks derived from the HELM framework, we report the performance for zero-shot inference, while for the tasks from lm-eval-harness, we default to conduct five-shot inference.

\paragraph{Pseudocode.} We show a simplified pseudocode below to demonstrate our implementation skeleton. The function \verb|generation_loop()| is the base loop in FlexGen that controls prefetch and overlap the I/O streams and computation.
Then in function \verb|compute_layer()|, the function \verb|attention_forward()| will be called for attention layers.
During prefill iterations, the function \verb|compute_attention()| will return $K$ heavy hitters and $K$ recent tokens if the prompt has a length greater than or equal to $2K$, otherwise return all the $\kv \cache$.
The function \verb|compute_attention()| will return new $\kv \cache$ and the indices for evicted entries during decoding iterations, which would be the place to implement a customized eviction strategy.
(If the current number of stored $\kv \cache$ is less than $2K$, the eviction will be ignored.)
During each decoding iteration, the oldest one among the last $K$ tokens (if we have no less than $K$ tokens' $\kv\cache$ stored) will be removed from the reserved last $K$ entries, one heavy hitter will be evicted for each head, and the newest token will be added to the $\kv \cache$ with a position in the last $K$ entries.
This happens in the function \verb|store_cache()|.

\begin{lstlisting}[language=Python, caption=$\mathsf{H_2O}$ implementation skeleton]
def generation_loop(...):
  # Prologue
  ...
  # Generate
  for i in range(gen_len):
    for j in range(num_layers):
      for k in range(num_gpu_batches):
        load_weight(i, j+1, k)
        load_cache(i, j, k+1)
        store_hidden(i, j, k-1)
        load_hidden(i, j, k+1)
        compute_layer(i, j, k)
        store_cache(i, j, k-1)
        sync()
  # Epilogue
  ...

# h is the hidden states (activations)
def attention_forward(h, ...):
  # the read/write buffer are intermediate stops for prefetching
  if prefill:
    h, new_k_cache, new_v_cache = compute_attention(h, ...)
    # select K heavy hitters and K recent tokens
    new_k_cache, new_v_cache = select(new_k_cache, new_v_cache, K)
    cache_write_buffer.store(new_k_cache, new_v_cache)
  else:
    k_cache, v_cache = cache_read_buf.pop()
    # evict_ids track the entries that will be evicted
    h, new_k_cache, new_v_cache, evict_ids = 
            compute_attention(h, k_cache, v_cache, ...)
    cache_write_buffer.store(new_k_cache, new_v_cache, evict_ids)
  return h

def store_cache(...):
  if prefill:
    # store cache directly
    ...
  else:
    k_new, v_new, evict_ids = cache_write_buffer.pop()
    # circular queue for the last K entries
    # extract the index for the oldest token at i-th iteration
    oldest = ((i - 1) % K) - K
    # update the KV cache (k_home and v_home)
    cache_replace(k_home, evict_ids, k_new, K, oldest)
    cache_replace(v_home, evict_ids, v_new, K, oldest)
\end{lstlisting}

\section{Extended Related Works, Discussions, and Limitations}
\label{app:related_work}


The goal of this section is to first introduce more background and related works for Section~\ref{main:related_work}, then describe some previous attempts in our experiments as well as discuss the social impact and limitations of this work.

\subsection{Extended Related Works}

\paragraph{Quantization, Pruning, Distillation for Inference.} Previously, model compression algorithms have been extensively investigated as a viable approach for mitigating the computational resource requirements of model inference. These algorithms can be broadly categorized into three groups: ($1$) quantization~\cite{han2015deep, jacob2018quantization,nagel2019data,zhao2019improving}, which involves mapping model parameters or activations from high-precision data types to low-precision counterparts, such as using 8-bit integers instead of the commonly employed 32-bit floating point format; ($2$) pruning or sparsity~\cite{molchanov2016pruning,liu2018rethinking,he2019filter,hoefler2021sparsity}, which aims to eliminate unnecessary neurons or weights within the models; ($3$) and distillation~\cite{hinton2015distilling,cho2019efficacy,tang2019distilling,touvron2021training} where predictions from larger models are utilized as supervised information to train smaller models.

\paragraph{Transformer in NLP.} Transformers~\cite{Vaswani2017AttentionIA} as a popular option have been frequently adopted by plenty of natural language processing (NLP) applications with prevailing successes~\cite{yang2019xlnet,liu2019roberta,talmor2018commonsenseqa,Jaiswal2021RadBERTCLFC,yang2019end,wang2018glue,ding2019cognitive,chowdhery2022palm,wei2022chain,yang2023harnessing}. Roughly, modern transformer-based networks can be categorized into two groups: (1) Encoder-Decoder or Encoder-only (\textit{i.e.}, BERT-style models~\cite{devlin2018bert}). This type of transformers commonly leverages the Masked Language Modeling task which encourages models to capture the intrinsic relationship between words and their context. Notable examples include BERT~\cite{devlin2018bert}, RoBBERTa~\cite{liu2019roberta} and T5~\cite{raffel2020exploring}. (2) Decoder-only (\textit{i.e.}, GPT-style models~\cite{radford2019language}). Usually, this group of transformers adopts the Casual Language Modeling task, which is optimized to generate the next word/token in a sequence based on the preceding words/tokens. Such an autoregressive manner is highly preferred by downstream tasks like text generation and question answering. GPT-3~\cite{brown2020language}, OPT~\cite{zhang2022opt}, PaLM~\cite{chowdhery2022palm}, and BLOOM~\cite{scao2022bloom} are representative architectures within this huge family.

\paragraph{Training of Transformer.} Training a gigantic transformer-based model is not trivial. It notoriously suffers from various issues such as overfitting, instability, etc. \cite{zhang2019adam,liu2019variance,liu2020understanding} analyze these bottlenecks from the optimization perspective. To address the issues, a great amount of pioneering effort is devoted, including data augmentations~\cite{varivs2021sequence,zhang2021mixup}, a better initialization~\cite{liu2020very,liu2020understanding,xu2020optimizing,zhu2021gradinit}, customized optimizers~\cite{cohen2022adaptive}, improved normalization~\cite{wang2022deepnet,yang2022unified}, weight decay~\cite{loshchilov2017decoupled}, and early stopping. However, there is still a long way to go before we can fully clarify the mystery of transformer training.

\subsection{Discussions and Limitations}

\paragraph{Previous Attempts.} During our experiments, we find several noteworthy observations. In $\mathsf{H_2O}$, employing the accumulated attention score to evict $\kv$ embeddings can lead to a potential bias favoring the least recent tokens. This bias arises because most previous tokens have a higher number of attention scores, resulting in a higher accumulated attention score and, consequently, a greater likelihood of being retained. To address this concern, we conducted an additional experiment utilizing the averaged attention score to determine which $\kv$ embeddings should be retained. However, this alternative approach resulted in performance degradation. Additionally, we observed a significant proportion of $\mathsf{H_2}$ occurrences at the beginning of sentences. This finding suggests that the initial tokens play a substantial role in subsequent generation tasks.

\paragraph{Social Impact.} Our work represents an initial effort in designing a KV Cache policy, a realm that has been relatively unexplored and yet is a significant bottleneck in LLMs. The proposed Heavy Hitter Oracle ($\mathsf{H_2O}$) provide a solution to improve the efficiency of LLM generation, which can save energy cost and contribute to green AI. Besides, our approach also serves as a source of inspiration for future advanced algorithm designs. We envision $\mathsf{H_2O}$ as a foundational framework that could facilitate further innovation in this area. Moreover, long content generation is an area of growing importance that currently grapples with several efficiency issues. We hope our work that supports the generation of very long sequences will support further research in this direction, particularly in terms of enhancing consistency, devising superior evaluation methods, and establishing robust benchmarks.

Furthermore, another contribution of this study is the formulation of a dynamic submodular framework. We believe that this theoretical framework possesses the potential to be applicable beyond specific domains of interest. For instance, there may exist numerous other dynamic problems where the task involves solving a submodular problem with slight variations at each time.

\paragraph{Limitations.} Furthermore, despite the notable advancements in throughput of our $\mathsf{H_2O}$, implementing LLMs for generative inference remains challenging due to the immense parameter count. As a substantial portion of these parameters is encompassed within the MLP blocks, building upon our observations of $\mathsf{H_2}$ occurrences in the MLP blocks, future research efforts can be directed towards leveraging the characteristics of $\mathsf{H_2}$ to devise an offloading policy. Such a policy can potentially enhance the efficiency of LLM inference even further.

\newpage
\section{Extended Experiments}
\label{app:experiments}


In this section, our goal is to demonstrate that $\mathsf{H_2O}$ can improve the diversity of generated text (mentioned in Section~\ref{main:diversity}), throughput is further improved when combined with quantization (mentioned in Section~\ref{main:quantization}), and superior performance to handle extremely long length inputs (up to four middle tokens, mentioned in Section~\ref{main:stream}). Moreover, additional investigations about $\mathsf{H_2}$ are reported, including experiments under zero-shot/one-shot inference regime; $\mathsf{H_2O}$ can also enhance the "Top-K" baseline; extra results of $\mathsf{H_2}$ in attention blocks; the emergence of $\mathsf{H_2}$ in MLP blocks as well as its properties.

\subsection{Increased Diversity of Generated Text}\label{app:diversity}
\begin{figure}[h]
\centering
\vspace{-2mm}
\includegraphics[width=\linewidth]{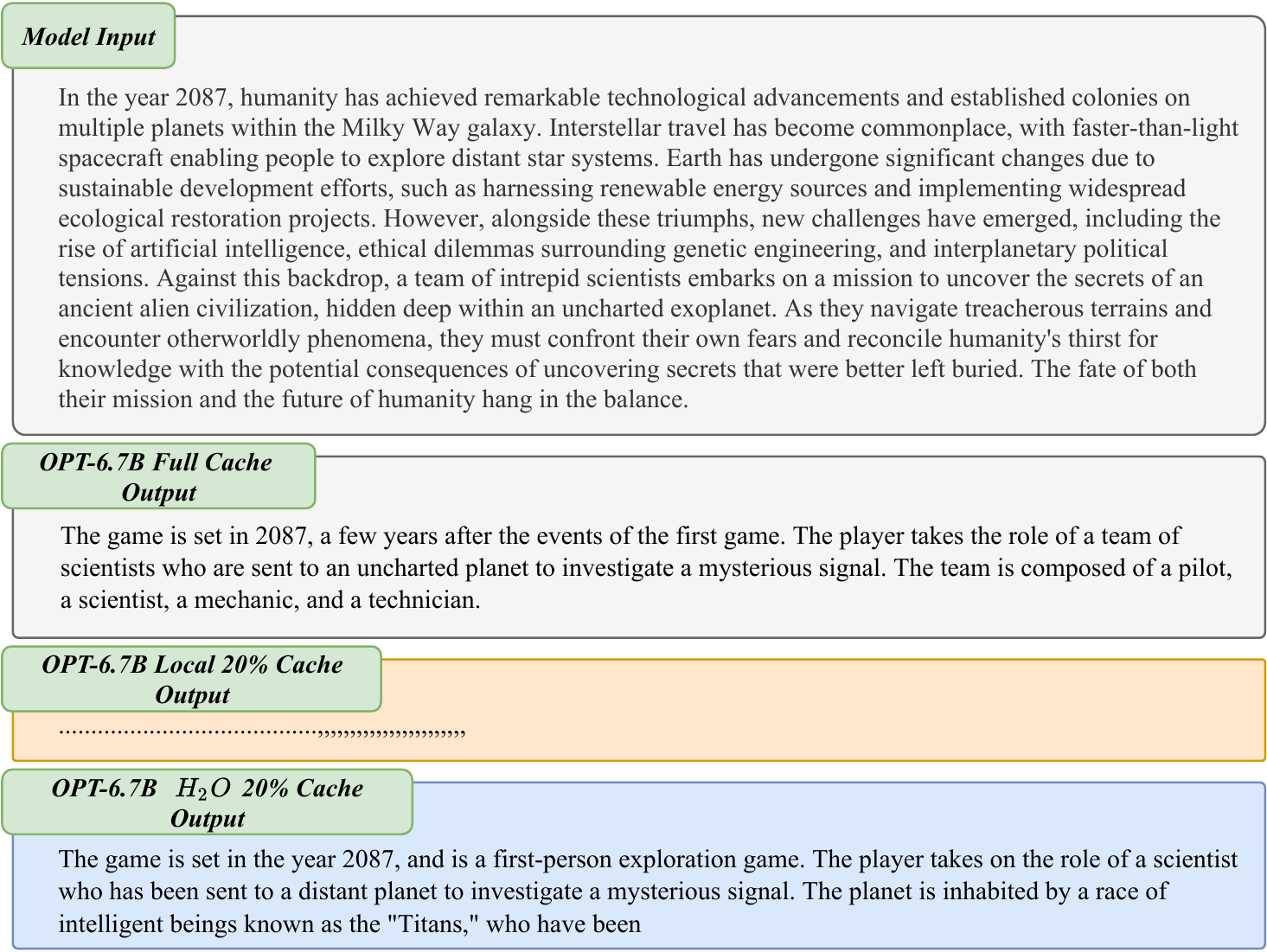}
\caption{Visualization of one generation example with OPT-$6.7$B. Results are compared between the baseline model with full cache, our $\mathsf{H_2O}$, and the "Local" strategy that utilizes the most recent $\kv$ embeddings.}
\vspace{-2mm}
\label{fig:example_opt}
\end{figure}

\begin{figure}
\centering
\vspace{-7mm}
\includegraphics[width=\linewidth]{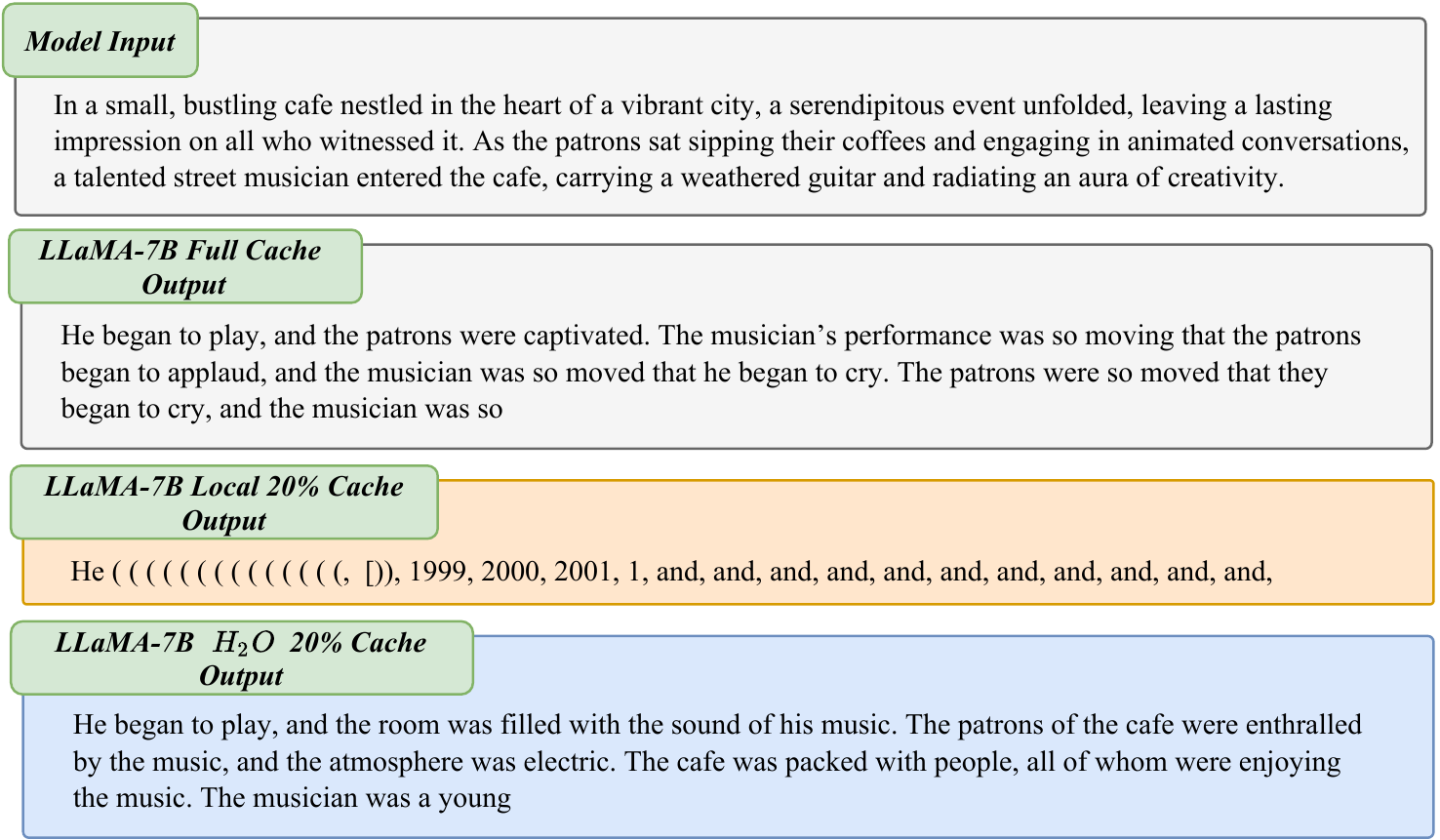}
\caption{Visualization of one generation example with LLaMA-$7$B. Results are compared between the baseline model with full cache, our $\mathsf{H_2O}$, and the "Local" strategy that utilizes the most recent $\kv$ embeddings.}
\vspace{-2mm}
\label{fig:example_llama}
\end{figure}

Given the same prompt text, we visualize the generated text with OPT-$6.7$B and LLaMA-$7$B across different methods, including the baseline model with full cache, our $\mathsf{H_2O}$, and the "Local" strategy. Results are reported in Figure~\ref{fig:example_opt} and ~\ref{fig:example_llama}. Even with less $\kv \cache$ budget, our $\mathsf{H_2O}$ can generate more diverse content. Specifically, with the OPT-$6.7$B, the full model generates some redundant works, like "a few years after the events of the first game" while our $\mathsf{H_2O}$ describes "the game is a first-person exploration game". As a comparison, when all $\kv \cache$ budget is assigned to the most recent tokens, the model fails to generate meaningful text and only repeats the word "." and ",". Similar observations can also be drawn from the results of LLaMA-$7$B, in which the full model repeatedly says "so moving that", "so moved that", and "began to cry" while our $\mathsf{H_2O}$ describes both the people and the environment.

Moreover, We conducted a quantitative comparison via diversity metric(Self-BELU~\cite{zhu2018texygen}). We randomly sample $100$ instances from the XSUM dataset, as prompts. Subsequently, we employed the LLaMa-$7$B to generate text of equal length. The full model reaches a Self-BELU score of $0.0057$ while $\mathsf{H_2O}$ and the local method achieve $0.0051$ and $0.0436$, respectively. The comparatively lower Self-BELU score of $\mathsf{H_2O}$ indicates the slightly increased diversity of generated text.

\newpage
\subsection{Throughput Improvement of \texorpdfstring{$\mathsf{H_2O}$}{} Combining with Quantization} \label{throughput_q}
Table~\ref{tab:quant} demonstrates $\mathsf{H_2O}$ are capable of quantization and perverse the full model's performance with even $4$-bit quantization. To further explore the compatibility of our $\mathsf{H_2O}$ with quantization with respect to throughput improvement, we conduct an additional evaluation with the quantization method implemented in FlexGen (Note that \cite{dettmers2022case} employed a different 4-bit quantization method). The corresponding results are presented in Table~\ref{table:e2e_throughput_t4_quant}. Notably, for OPT-6.7B, we observed extra performance enhancements in $\mathsf{H_2O}$ when utilizing quantization compared to the vanilla version. This improvement results from the GPU memory freed by weight quantization, which allows for a significant increase in batch size. However, it should be emphasized that the quantization method employed in FlexGen is not implemented most efficiently, resulting in considerable computational overhead. Despite the batch size being enlarged by $20$ times, the actual throughput improvement is less than $2$ times. Nevertheless, it is important to acknowledge the potential benefits of combining $\mathsf{H_2O}$ with quantization, as exemplified by the ability to increase the batch size further. For instance, the implementation of $4$-bit quantization could be accelerated by an optimized CUDA kernel.

\begin{table}
\centering
\caption{\small Compatibility with Quantization.}
\resizebox{0.5\linewidth}{!}{
\begin{tabular}{c|ccc}
\toprule
Models & COPA & OpenBookQA & PiQA  \\ \midrule
Full & $85.00$ & $43.20$ & $78.51$  \\ \midrule
$\mathsf{H_2O}$ & $84.00$ & $43.00$ & $78.45$ \\ \midrule
Quant-4bit & $84.00$ & $43.28$ & $78.67$  \\ \midrule
$\mathsf{H_2O}$ \textit{w.} Quant-4bit& $84.00$ & $43.20$ & $78.80$ \\
\bottomrule
\end{tabular}}
\vspace{-3mm}
\label{tab:quant}
\end{table}


\begin{table}[h]
\centering
\vspace{-2mm}
\caption{Generation throughput (token/s) with different systems without offloading.
We use $\mathsf{H_2O}$-c denotes the $\mathsf{H_2O}$ with 4-bit weights compression.
In the sequence length row, we use ``512 + 32'' to denote a prompt length of 512 and a generation length of 32. ``OOM'' means out-of-memory. The gray text in the bracket denotes the batch size. We run OPT-6.7B on a single T4 GPU.}
\label{table:e2e_throughput_t4_quant}
\footnotesize
\begin{tabular}{lllll}
\toprule
Seq. length & 512+32 &  512+512 &  512+1024 \\ 
Model size  & 6.7B  & 6.7B  & 6.7B \\ 
\midrule
Accelerate  & 20.4 \color{gray}{(2)} & 15.5 \color{gray}{(1)} & OOM \\ 
DeepSpeed   & OOM & OOM  & OOM \\ 
FlexGen     & 20.2 \color{gray}{(2)}  & 16.8 \color{gray}{(1)}  & 16.9  \color{gray}{(1)} \\ 
\midrule

$\mathsf{H_2O}$ (20\%) & 35.1 \color{gray}{(4)}  & 51.7 \color{gray}{(4)} & 52.1 \color{gray}{(4)} \\ 

$\mathsf{H_2O}$-c (20\%) & 50.5 \color{gray}{(70)} & 72.5 \color{gray}{(52)} & 62.3 \color{gray}{(44)} \\ 

\bottomrule
\end{tabular}
\vspace{-3mm}
\end{table}

\subsection{Effectiveness on Zero-shot and One-shot Inference}
\label{sec:zero_shot} When facing zero-shot and one-shot inference tasks, $\mathsf{H_2O}$ can successfully mitigate the memory requirements of the $\kv$ $\cache$ by up to $5\times$ under both one/zero-shot inference across different tasks, achieving matching performance as the model with full $\kv$ $\cache$ while the local method fails. However, unlike 5-shot, we also found that some tasks are more difficult, and the model requires more information to generate the correct content, resulting in a higher $\kv$ $\cache$ budget (30-40\%). This might be expected since 0/1 shots in our benchmarks have shorter sequence lengths ranging from $100$ to $300$.

\begin{figure}[h]
\centering
\vspace{-2mm}
\includegraphics[width=\linewidth]{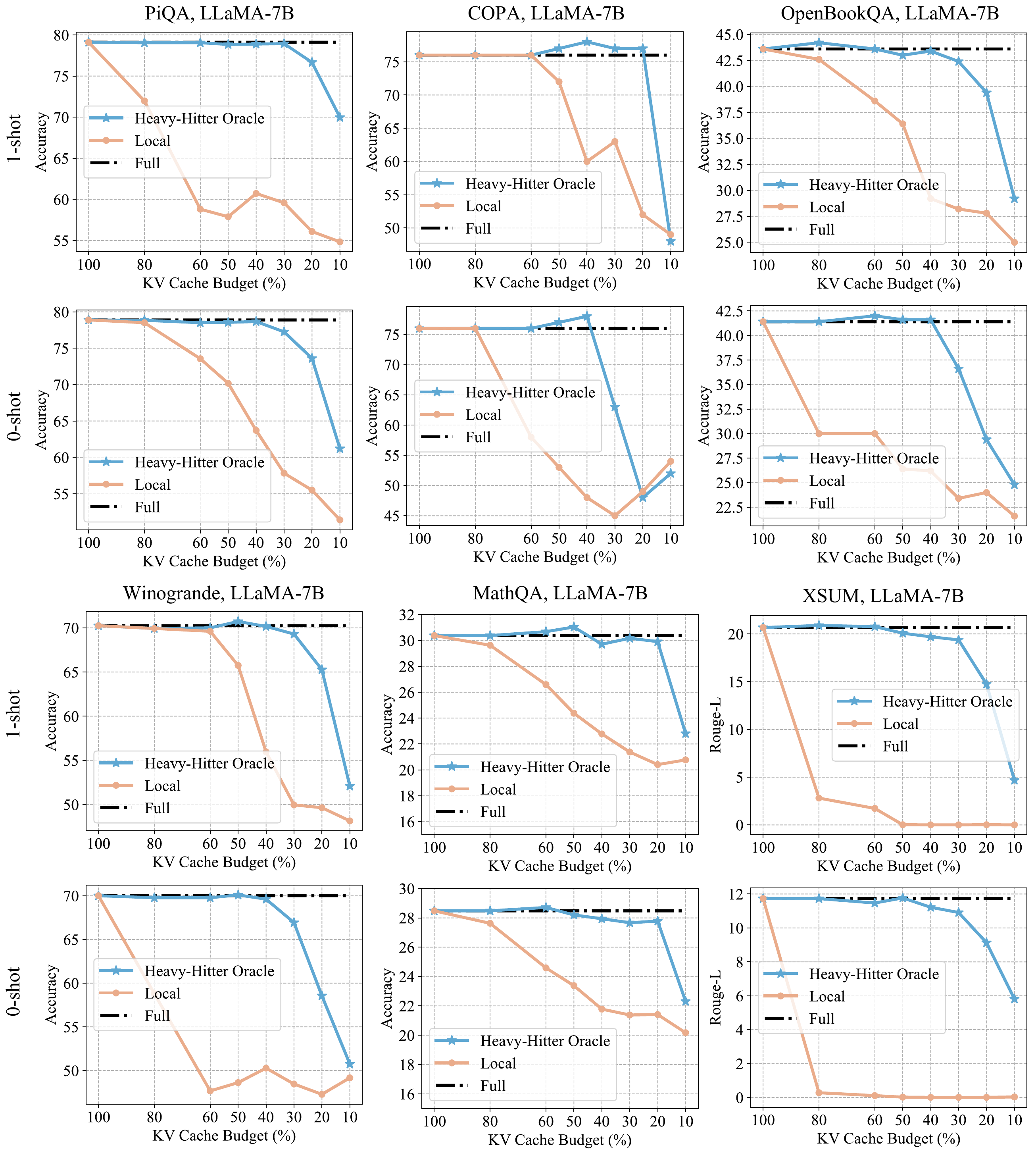}
\caption{Comparsion results on zero-shot and one-shot inference.}
\vspace{-2mm}
\label{fig:zero-shot}
\end{figure}

\subsection{Comparison with StreamingLLM} \label{app:stream}
Recent works, such as StreamLLM~\cite{xiao2023efficient} and LM-Infinite~\cite{han2023lm}, have shown promising potential in enabling Language Models (LLMs) to handle input of infinite length. They achieve this by only retaining the initial tokens and a limited local context. However, this approach may pose challenges for certain tasks where vital information lies within the middle of the input and would be lost using this strategy. We investigate it through two specific types of tasks: 

\textbf{Multi-document question answering}~\cite{liu2023lost}: In this task, each test sample comprises ten documents, followed by a question. The key information to answer this question is stored in one of the documents. We rearranged the crucial document’s position and found that the eviction strategy in StreamLLM or LM-Infinite can not perform well when the key document has been dropped.

\textbf{Text Summarization Task} (XSUM and CNN-DailyMail): Text summarization tasks require models to generate concise summaries of lengthy texts. Effective summarization demands a comprehensive understanding of the entire document, making it challenging when crucial information is dispersed throughout the input. In particular, summarization often relies on long-context attention, and critical information may not be effectively captured within the limited local tokens.

The results are reported in Figure~\ref{fig:streamllm_sum}, illustrating a consistent decline in the performance of StreamLLM. Since StreamLLM will always maintain the first few tokens as well as the local tokens, regardless of various input content, such a strategy will inevitably result in the loss of crucial information and subsequently lead to a decrease in performance. In contrast, our $\mathsf{H_2O}$ delivers markedly superior performance. Also, with $\mathsf{H_2O}$, The model can successfully stream to four million tokens.

\begin{figure}[h]
\centering
\vspace{-2mm}
\includegraphics[width=\linewidth]{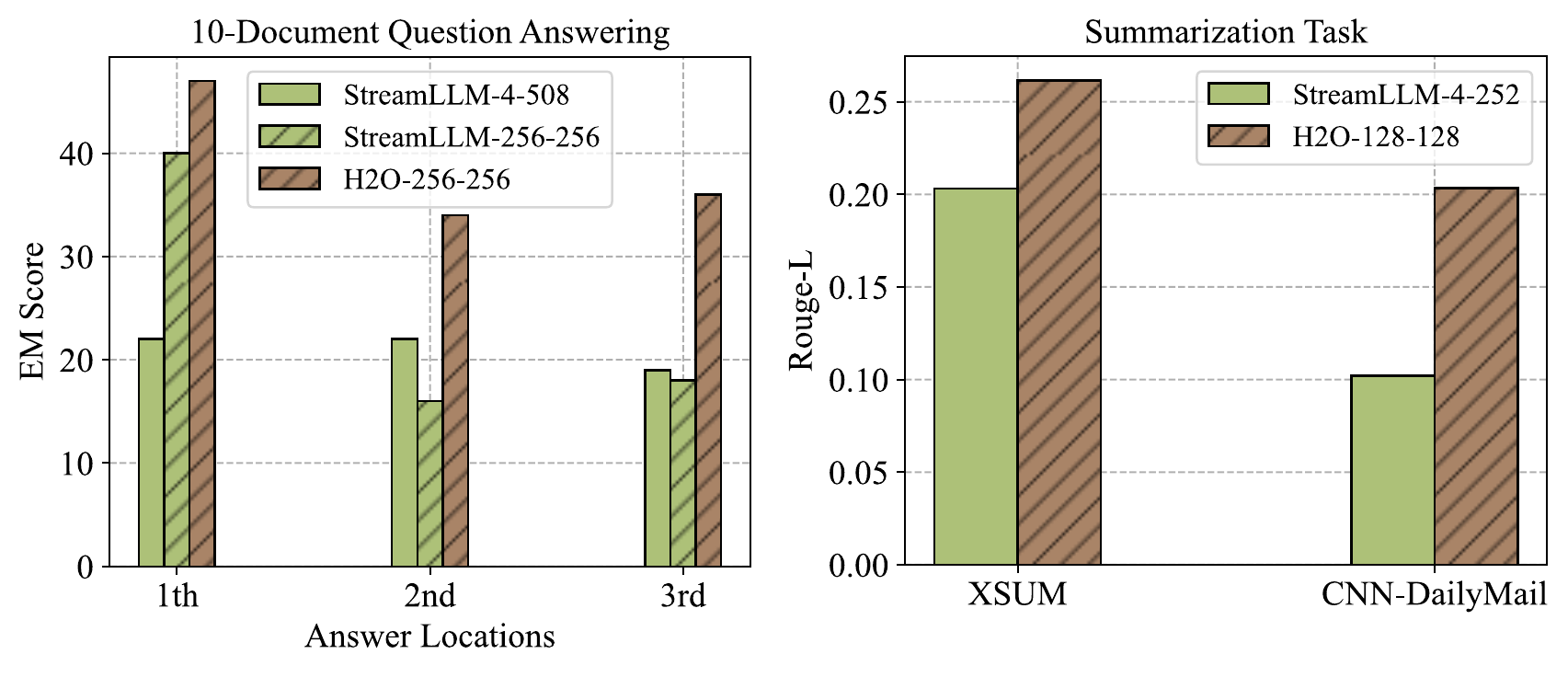}
\caption{Comparison results of StreamLLM~\cite{xiao2023efficient} and our $\mathsf{H_2O}$ on generization tasks. The number in each method represents the KV Cache budget of the start/heavy-hitter tokens and the local tokens, respectively. For example, H2O-256-256 means maintaining 256 Heavy-Hitters and 256 local tokens.}
\vspace{-2mm}
\label{fig:streamllm_sum}
\end{figure}

\subsection{Enhancing the "Top-K" Baseline}
We find $\mathsf{H_2}$ can further enhance another strong baseline with a "Top-K" strategy. The results are reported in Table~\ref{tab:top-k}. After combing with $\mathsf{H_2}$, the "Top-K" method achieves an extra improvement with up to $2.00\%$ accuracy across $4$ different tasks.

\begin{table}[!ht]
\centering
\caption{\small Results of the "Top-K" method \textit{w.} or \textit{w.o.} $\mathsf{H_2}$. Experiments are conducted with OPT-$30$B with $20\%$ $\kv$ $\cache$ budget.}
\resizebox{0.65\linewidth}{!}{
\begin{tabular}{c|cccc}
\toprule
Models & COPA & OpenBookQA & PiQA & Winogrande \\ \midrule
Full & $85.00$ & $43.20$ & $78.51$ & $70.24$ \\ \midrule
TopK \textit{w.o.} $\mathsf{H_2}$ & $80.00$ & $41.40$  & $76.96$ & $65.35$  \\ 
\rowcolor[gray]{0.9} 
TopK \textit{w.} $\mathsf{H_2}$ & $82.00$  & $42.80$  & $77.96$ & $66.48$  \\ 
\bottomrule
\end{tabular}}
\label{tab:top-k}
\end{table}

\subsection{Separate Effects of Heavy-Hitter and Local Tokens} Table~\ref{tab:ablation} demonstrates the separate effects of Heavy-Hitters and the local tokens. And we can observe Heavy-Hitters contribute more to maintaining the performance of models.

\begin{table}
\centering
\vspace{-3mm}
\caption{\small Ablation study of $\mathsf{H_2O}$ across different tasks.}
\resizebox{0.6\linewidth}{!}{
\begin{tabular}{c|c|cccc}
\toprule
\textbf{Tasks} & \textbf{Models} &Full & \textit{w.} Local & \textit{w.} $\mathsf{H_2}$ & \textit{w.} Local + $\mathsf{H_2}$ \\ \midrule
\multirow{2}{*}{PiQA} & OPT-$13$B & $77.37$ & $54.62$ & $76.12$ & $77.26$ \\
& OPT-$30$B & $78.51$ & $55.82$ & $67.25$ & $78.45$ \\ \midrule
\multirow{2}{*}{OpenBookQA} & OPT-$13$B & $41.40$ & $25.60$ & $30.40$ & $41.20$ \\
& OPT-$30$B & $43.20$ & $25.20$ & $26.60$ & $43.00$ \\ \midrule
\multirow{2}{*}{MathQA} & OPT-$13$B & $26.67$ & $22.04$ & $23.82$ & $26.93$ \\
& OPT-$30$B & $26.23$ & $20.87$ & $21.98$ & $26.87$ \\ \midrule
\multirow{2}{*}{Winogrande} & OPT-$13$B & $68.59$ & $49.96$ & $51.85$ & $67.32$ \\
& OPT-$30$B & $70.24$ & $49.17$ & $47.36$ & $69.06$ \\
\bottomrule
\end{tabular}}
\vspace{-4mm}
\label{tab:ablation}
\end{table}

\subsection{Performance of Inference with Different Number of Shots.} Table~\ref{tab:num_shots} demonstrates our $\mathsf{H_2O}$ achieves consistent improvements across different number of shots during inference and maintains the full model's performance with $20\%$ memory budget.
\begin{table}
\centering
\caption{\small Results under different sequence length of OPT-$30$B with $20\%$ $\kv \cache$ budget.}
\resizebox{.5\linewidth}{!}{
\begin{tabular}{c|c|cc|cc}
\toprule
\multirow{2}{*}{\textbf{Tasks}} & \multirow{2}{*}{\textbf{Methods}} &\multicolumn{2}{c|}{\textbf{$5$-shots}} & \multicolumn{2}{c}{\textbf{$10$-shots}} \\
& & OPT-$30$B & OPT-$66$B & OPT-$30$B & OPT-$66$B \\\midrule
\multirow{3}{*}{\textbf{OpenBookQA}} & Full & $43.20$ & $44.40$ & $43.00$ & $44.80$ \\
& Local & $25.20$ & $30.60$ & $26.60$ & $38.80$ \\
& $\mathsf{H_2O}$ & $43.00$ & $44.20$ & $42.80$ & $44.80$ \\\midrule
\multirow{3}{*}{\textbf{COPA}} & Full & $85.00$ & $83.00$ & $86.00$ & $85.00$ \\
& Local & $48.00$ & $59.00$ & $60.00$ & $76.00$ \\
& $\mathsf{H_2O}$ & $84.00$ & $82.00$ & $85.00$ & $86.00$ \\\midrule
\multirow{3}{*}{\textbf{MathQA}} & Full & $26.23$ & $27.87$ & $26.67$ & $27.00$ \\
& Local & $20.87$ & $25.49$ & $21.11$ & $23.08$ \\
& $\mathsf{H_2O}$ & $26.87$ & $27.67$ & $26.47$ & $27.30$\\
\bottomrule
\end{tabular}}
\vspace{-2mm}
\label{tab:num_shots}
\end{table}

\subsection{Heavy-Hitter in Attention Blocks}
The distribution of accumulated attention scores of all the tokens within attentions blocks is illustrated in Figure~\ref{fig:hh_opt1.3b_attn}. We can observe that $\mathsf{H_2}$ broadly exists in each layer.

\subsection{Comparsion with SpAtten}\label{Spattn} Compared with SpAtten~\cite{wang2021spatten}, the main differences of $\mathsf{H_2O}$ are i) They accumulate attention scores across attention heads and layers, while in our algorithm, each token can be kept or evicted independently across heads and layers, providing more flexibility for selecting critical $\kv$ embeddings; ii) We also use $\kv$ of the most recent tokens during generation and demonstrate that such $\mathsf{H_2}$ tokens can effectively enhance other sparse attention strategies; iii) we formulate the $\kv \cache$ eviction as a dynamic submodular problem and prove a theoretical guarantee (under mild assumptions) for our novel algorithms. Moreover, we also provide a quantitative comparison with SpAtten, and the results are reported in Table~\ref{tab:spatten}. 

\begin{table}[!ht]
\centering
\caption{\small Comparison between SpAtten~\cite{wang2021spatten} and $\mathsf{H_2O}$ across various tasks with OPT-$30$B.}
\resizebox{0.4\linewidth}{!}{
\begin{tabular}{c|ccc}
\toprule
Models & COPA & OpenBookQA & PiQA \\ \midrule
Full & $85.00$ & $43.20$ & $78.51$ \\ \midrule
SpAtten & $82.00$ & $41.90$ & $77.06$ \\ \midrule
\rowcolor[gray]{0.9} 
$\mathsf{H_2O}$ & $84.00$ & $43.00$ & $78.45$  \\ 
\bottomrule
\end{tabular}}
\label{tab:spatten}
\end{table}

\subsection{Heavy-Hitter in MLP Blocks}
\begin{figure}
\centering
\vspace{-7mm}
\includegraphics[width=0.9\linewidth]{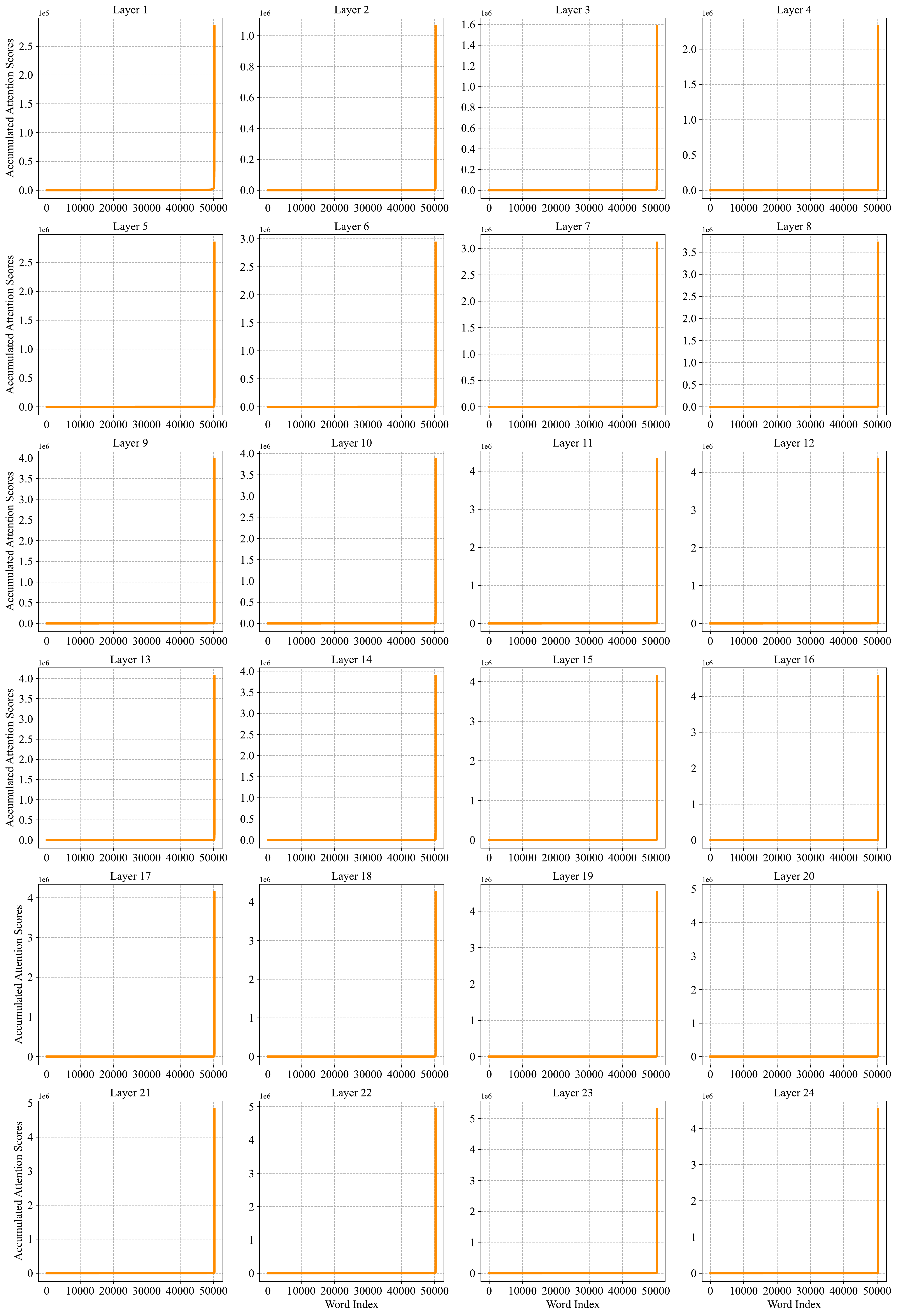}
\caption{The distribution of accumulated attention scores with respect to the corresponding word. The x-axis represents the word index in the vocabulary, and the y-axis represents the accumulated attention score. Results are obtained from OPT-$1.3$B.}
\vspace{-2mm}
\label{fig:hh_opt1.3b_attn}
\end{figure}


\begin{figure}
\centering
\vspace{-7mm}
\includegraphics[width=0.8\linewidth]{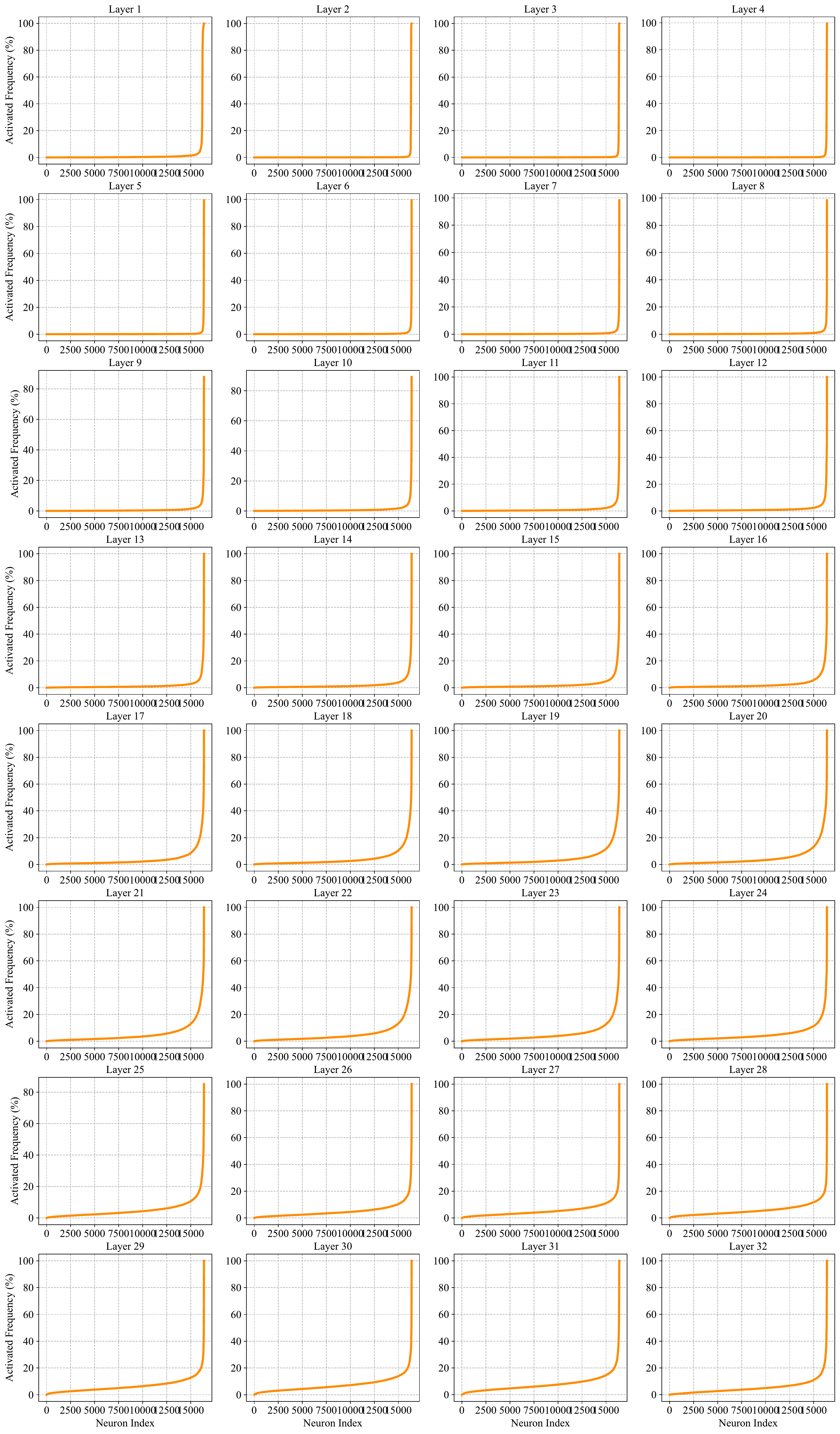}
\caption{The emergence of $\mathsf{H_2}$ in MLP blocks of OPT-$6.7$B. The x-axis represents the index of neurons in the hidden layers of MLP blocks, and the y-axis represents the activated frequency.}
\vspace{-2mm}
\label{fig:hh_opt6.7b}
\end{figure}

Besides the attention blocks, the presence of Heavy-Hitters ($\mathsf{H_2}$) is observed within the MLP blocks of LLMs. We utilize the Wiki-Text-103 dataset as the input and record the activated frequency of neurons in the hidden layer of MLP blocks. As depicted in Figure~\ref{fig:hh_opt6.7b}, the activated frequency of neurons follows a power-law distribution, wherein a small number of neurons are activated by nearly all input tokens (with a $100\%$ frequency) while the majority of other neurons are rarely activated. 

Subsequently, a thorough examination of various characteristics pertaining to $\mathsf{H_2}$ in MLP blocks is conducted, encompassing the following aspects: (1) The elimination of $\mathsf{H_2}$ leads to a substantial decline in performance, although such degradation can be easily recovered even with a mere $1\%$ of the training data; (2) $\mathsf{H_2}$ exhibits a significant degree of overlap across different type of input content; (3) The emergence of $\mathsf{H_2}$ occurs early in the training process, thus exhibiting an "early-bird" characteristic, and their positions undergo gradual changes during subsequent training phases.

\paragraph{Elimination of $\mathsf{H_2}$.} We first train a GPT-$2$ using Wiki-Text-$103$ dataset and subsequently identify and prune the neurons exhibiting an activation frequency exceeding $20\%$ (\textit{i.e.}, $\mathsf{H_2}$). This pruning operation leads to a substantial decline in performance, as evidenced by an increase in perplexity from $19.32$ to $31.78$. The results emphasize the criticality of $\mathsf{H_2}$ in preserving the functionality of the model. To assess the recoverability of the discarded information, we conduct a few-shot fine-tuning experiment, and the results are summarized in Table~\ref{tab:recovery}. The pruned model is fine-tuned with varying ratios of training data for $500$ iterations, and it successfully regains performance levels equivalent to those of the pre-trained model. In contrast, when training the model from scratch using only $1\%$ of the training data, the resulting model achieves a perplexity of $554.12$ only. These findings demonstrate that the knowledge encoded in $\mathsf{H_2}$ can be easily restored.

\begin{table}[h]
\centering
\caption{\small. Perplexity on the test-set of Wiki-Text-$3$ with GPT-$2$.}
\resizebox{0.5\linewidth}{!}{
\begin{tabular}{c|cccc}
\toprule
Settings & $1\%$ & $10\%$ & $40\%$ & $100\%$ \\ \midrule
Pretrained Model & \multicolumn{4}{c}{$19.32$} \\
Remove $\mathsf{H_2}$ & \multicolumn{4}{c}{$31.78$} \\ \midrule
Fine-tuning & $19.86$ & $19.84$ & $19.76$ & $19.83$ \\
\bottomrule
\end{tabular}}
\label{tab:recovery}
\end{table}

\paragraph{Overlap across Diverse Input Contents.}
Moreover, we conduct a comparative analysis of the activation frequencies acquired from various input contents. Specifically, utilizing the pretrained OPT-$1.3$B model, we evaluate three datasets, namely Wiki-Text-103, Penn Treebank, and Amazon Review. The positioning of $\mathsf{H_2}$ is depicted in Figure~\ref{fig:hh_opt1.3b_multidata}, revealing significant concurrence across multiple datasets.

\begin{figure}
\centering
\vspace{-5mm}
\includegraphics[width=1\linewidth]{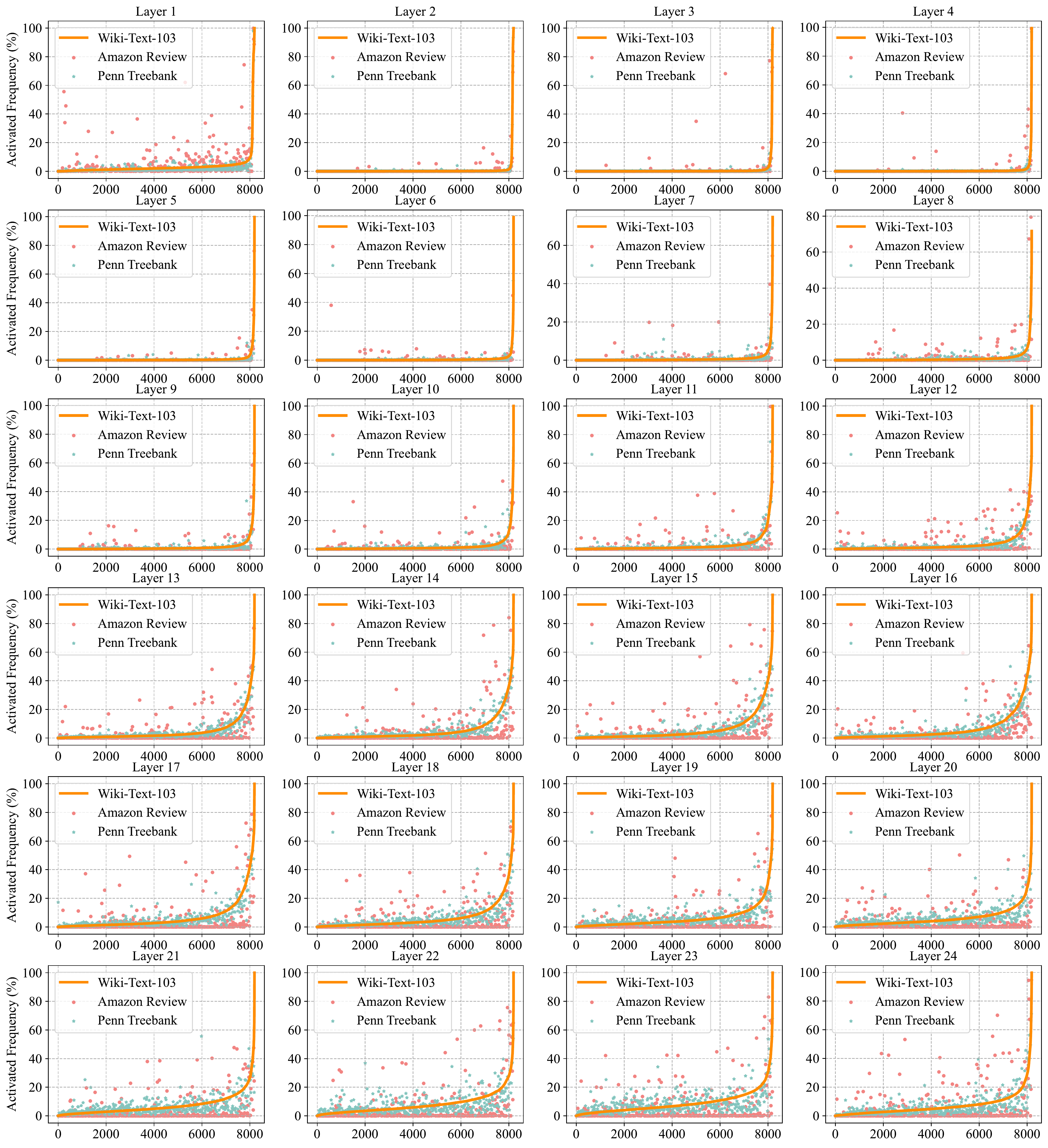}
\caption{The distribution of activated frequency across diverse input content. The x-axis represents the index of neurons, which is ordered by the activated frequency from Wiki-Text-103.}
\vspace{-2mm}
\label{fig:hh_opt1.3b_multidata}
\end{figure}

\begin{wrapfigure}{r}{0.37\linewidth}
\vspace{-6mm}
\centering
\includegraphics[width=\linewidth]{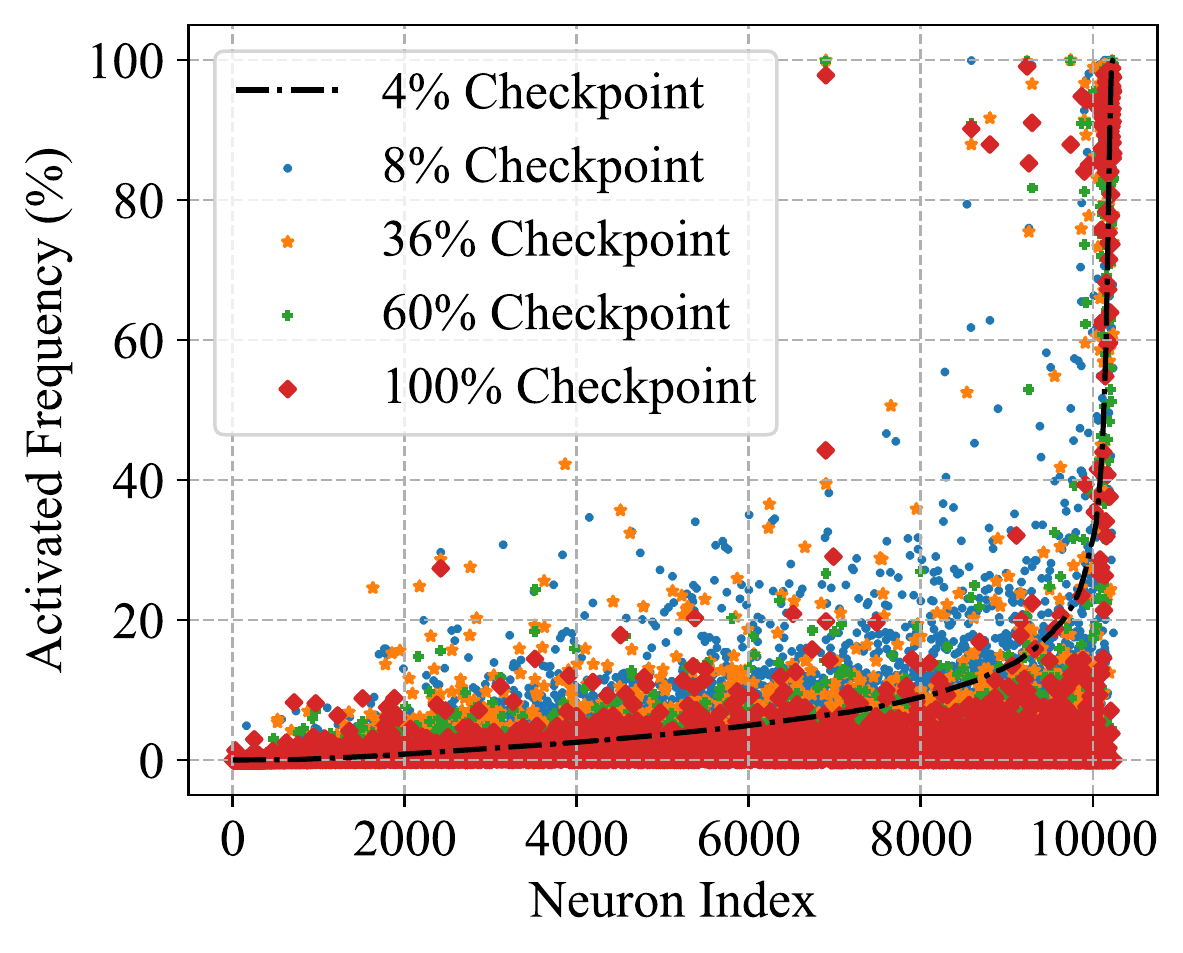}
\caption{The distribution of activated frequency during training. Experiments are conducted with different checkpoints of OPT-$2.7$B during training.}
\vspace{-2mm}
\label{fig:hh_opt2.7b_layers}
\end{wrapfigure}
\paragraph{Early-Bird Property.}
Furthermore, our investigation reveals that $\mathsf{H_2}$ displays an "early-bird" characteristic, as illustrated in Figure~\ref{fig:hh_opt2.7b_layers}. By visualizing the distribution of activation frequencies across various checkpoints throughout the training process, we observe the emergence of a power-law behavior at an initial stage, specifically as early as a training budget of $4\%$. Subsequently, the positions of $\mathsf{H_2}$ exhibit gradual and minimal changes.

\clearpage
\newpage
\section{Theoretical Analysis}\label{sec:theory}

Recently, a number of works have studied the attention scheme in LLMs from a theoretical perspective \cite{ag23,zhdk23,as23,dsz23,bsz23,gsy23_dp,sht23,dms23,lsz23,dls23,kmz23,gsy23_hyper,hjk+23,csy23a,ssz23,gswy23,mda22,gsyz23,dgtt23,gsy23_coin,dsxy23,dlms23,as23_tensor,dsx23,csy23b,pmxa23,zpga23}. In this work, we provide a different and novel angle compared to the previous work.
We present the concept of the submodular property and propose an eviction policy, known as greedy $\mathsf{H_2}$, which is a modification of dynamic submodular maximization. 
Furthermore, assuming the attention scheme to be submodular, we establish that constructing the set $S_i$ without any cache size limitation satisfies the near-optimal property in terms of submodularity. We provide theoretical guarantees for our robust and
approximate greedy eviction policy algorithm (Algorithm \ref{alg:main:formal}). Due to space limitation, we only give informal description of algorithm (Algorithm~\ref{alg:main:formal}) in Section \ref{sec:preli:submodular} In Section~\ref{sec:theory:eviction_policy}, we give an algorithm (Algorithm~\ref{alg:main:formal}) which has full and complete implementation details for Algorithm~\ref{alg:main}.
We also offer a mathematical formulation for sparsity preservation that is observed in Section \ref{sec:obs} and proposed an algorithm (Algorithm \ref{alg:main_greedy}) to solve the problem.

Specifically, in Section~\ref{sec:theory:notation}, we provide several basic definitions and notations. 
In Section~\ref{sec:theory:submodular}, we briefly the definition of submodular function.
In Section~\ref{sec:theory:dynamic_submodular}, we define the dynamic submodular framework, which gives the formal version of Definition~\ref{def:weak_submodular:informal}.
In Section~\ref{sec:theory:static}, we briefly review the static attention computation problem. In Section~\ref{sec:theory:recursive}, we formulate the attention computation in recursive fashion. 
In Section~\ref{sec:theory:eviction_policy}, we briefly review our eviction policy, which gives the formal version of Definition~\ref{def:eviction_policy_H_2}.
In Section~\ref{sec:theory:submodular_diminishing_return}, we discuss the diminishing return for submodular. In Section~\ref{sec:theory:submodular_high_level}, we discuss the high-level ideas for submodular. In Section~\ref{sec:theory:robust_greedy_with_error_propagation}, we analyze the robust greedy algorithm error propagation. In Section~\ref{sec:theory:robust_subomdular_adding_item}, we explain how to add items into sets via approximate function.
In Section~\ref{sec:theory:universal_conditions}, we provide several definitions related to dynamic properties. 
In Section~\ref{sec:theory:induction_exact}, we prove an induction lemma for the exact function. 
In Section~\ref{sec:theory:induction_approximate}, we prove an induction lemma for the approximation function. 
In Section~\ref{sec:theory:main_submodular_greedy_formal}, we provide theoretical guarantees for both the full-knowledge version (formal version of  Lemma ~\ref{lem:near_optimal_property_informal}) and the limited-cache-size version (formal version of  Theorem~\ref{thm:main_submodular_greedy:informal}).
In Section~\ref{sec:theory:related_work_theory_attention}, we provide a more detailed discussion of theoretical work about attention computation and regression-related problems. In Section~\ref{sec:theory:sparsity_preserving}, we provide a mathematical formulation for sparsity preserving. In Section~\ref{sec:theory:loss}, we provide the definition of loss function which can potentially generate sparse (heavy hitter type attention sore). In Section~\ref{sec:theory:gradient}, we explain how to compute the gradient of the loss function. In Section~\ref{sec:theory:hessian}, we show how to compute the Hessian of the loss function. In Section~\ref{sec:theory:hessian_positive_definite}, we show that Hessian is positive definite. In Section~\ref{sec:theory:hessian_lipschitz}, we prove the Lipschitz property for the Hessian matrix. In Section~\ref{sec:theory:main_greedy_train}, we show that using a gradient-type algorithm is sufficient to optimize that (heavy hitter type) loss function.

\subsection{Notations}\label{sec:theory:notation}

For a positive integer $n$, let $[ n ] := \{ 1, 2, \cdots, n\}$. 

For a vector $x \in \R^n$, let $\sqrt{x} \in \R^{n}$ denote the vector with the $i$-th entry being $\sqrt{x_i}$ and $\diag ( x ) \in \R^{n \times n}$ denote the diagonal matrix with the $i$-th digonal entry being $x_i$. For two matrices $A, W \in \R^{n \times n}$, let $\| A \|_W := (\sum_{i=1}^n \sum_{j=1}^n W_{i, j} A_{i, j}^2)^{1/2}$ and $W \circ A$ denote the matrix where $(W \circ A)_{i,j} = W_{i,j} A_{i,j}$. For matrix $W \in \R^{n \times n}$, let $D_{W_i} := \diag( W_{i, :} )$ with $i \in [n]$. 

For two vectors $x \in \R^n$ and $w \in \R^n_{\geq 0}$, let $\| x\|_w := (\sum_{i=1}^n w_i x_i^2)^{1/2}$.  
For a vector $x$, its $\ell_2$ norm is defined as  $\| x \|_2 := ( \sum_{i=1}^n x_i^2 )^{1/2}$ and its $\ell_p$ norm is defined as $\| x \|_p := (\sum_{i=1}^n |x_i|^p)^{1/p}$. For a square matrix $A$, we denote $\tr[A]$ as the trace of matrix $A$. 

For a matrix $A \in \R^{n \times k}$ (suppose $n \geq k$), we use $\| A \|$ to denote its spectral norm, i.e., $\| A \| = \sup_{x} \| A x \|_2 / \| x \|_2$. We use $\| A \|_F$ to denote its Frobenius norm $\| A \|_F : = (\sum_{i=1}^n \sum_{j=1}^k A_{i,j}^2 )^{1/2}$.

Suppose matrix $A \in \R^{n \times k}$ has SVD decomposition $U \Sigma V^\top$ where $U \in \R^{n \times k}$ (this matrix has orthonormal columns), $\Sigma \in \R^{k \times k}$ is a diagonal matrix, and $V \in \R^{k \times k}$. We call columns of $U$ singular vectors. We use $A^\dagger \in \R^{k \times n}$ to denote the Moore-Penrose pseudoinverse, then $A^\dagger = V \Sigma^{-1} U^\top$. Suppose $\Sigma \in \R^{k \times k}$ is sorted diagonal matrix, let $\sigma_1, \cdots, \sigma_k$ denote the diagonal entries of $\Sigma$. Then we call $\sigma_i$ the $i$-th singular value of the matrix, and we write it as $\sigma_i(A)$.

For any symmetric matrix $B \in \R^{k \times k}$, we denote its eigenvalue decomposition as $U \Lambda U^\top$, where $\Lambda$ is a diagonal matrix. Let $\lambda_1, \cdots, \lambda_k$ denote the entries on diagonal of $\Lambda \in \R^{k \times k}$. We say $\lambda_i$ is the $i$-th eigenvalue. Usually, we write it as $\lambda_{i}(B)$.

The connection between eigenvalues and singular values is 
\begin{align*}
\sigma_i^2(A) = \lambda_i (A^\top A)
\end{align*}

We use the notation $A \succeq 0$ to denote that matrix $A$ is positive semidefinite (psd). Mathematically, $A\succeq 0$ means for all vectors $x$, we have $x^\top A x \geq 0$.

Similarly, for two squarer matrices $A$ and $B$, we use $A \succeq B$ to denote the case where for all vectors $x$, $x^\top  Ax \geq x^\top B x$. 

Let $\Pr[]$ and $\E[]$ denote the probability and expectation. We define the maximum between $a$ and $b$ as $\max\{a,b\}$. We denote $\min \{a,b\}$ (resp. $\max\{a,b\}$) as the minimum (reps. maximum) between $a$ and $b$. 

 Throughout, for non-negative real numbers
a and b, we use the notation $a = (1 \pm \epsilon)b ~\text{if}~a \in [(1 - \epsilon)b,(1 + \epsilon)b]$.

\subsection{Submodular}\label{sec:theory:submodular}
We provide the standard definition of the submodular function.
\begin{definition}[Submodular function \cite{s03}]\label{def:submodular_full:formal}
For a finite set $\Omega$, a submodular function is defined as $f: 2^{\Omega} \rightarrow \R$, where $2^\Omega$ denotes the power set of $\Omega$. It is characterized by the fulfillment of any of the following equivalent criteria:
\begin{itemize}
    \item Condition 1.
    \begin{itemize}
        \item For $S, T, \subseteq \Omega$ with $S \subseteq T$ and every $a \in \Omega \backslash T$ we have that $f(S \cup \{a \} ) - f(S) \geq f(T \cup \{a \}) - f(T)$
    \end{itemize}
    \item Condition 2.
    \begin{itemize}
        \item For every $S, T \subseteq \Omega$ we have that $f(S) + f(T) \geq f(S \cup T) + f(S \cap T)$.
    \end{itemize}
    \item Condition 3.
    \begin{itemize}
        \item For every $S \subseteq \Omega$ and $a_1, a_2 \in \Omega \backslash S$ such that $a_1 \neq a_2$ we have that $f(S \cup \{a_1\}) + f(S \cup \{a_2 \}) \geq f( S \cup \{ a_1, a_2 \} ) + f(S)$.
    \end{itemize}
\end{itemize}
\end{definition}
For convenience of discussion, in this paper, we always choose $\Omega = [n]$ when we want to discuss the submodular function.

Next, we provide some examples/types of submodular functions. One important class is called monotone,
\begin{definition}[Monotone]\label{def:monotone}
A set function $f$ is monotone if for every $T \subseteq S$ we have $f(T) \leq f(S)$.
\end{definition}

Here are a number of monotone submodular functions

\begin{itemize}
    \item Linear (Modular) functions
    \begin{itemize}
        \item A linear function can be represented as $f(S) = \sum_{i \in S} w_i$. If all the weights $w_i$ are nonnegative, then the function $f$ is considered monotone.
    \end{itemize} 
    \item Budget-additive functions
    \begin{itemize}
        \item A budget-additive function has the form $f(S) = \min{ B , \sum_{i\in S} w_i }$ where each weight $w_i$ and the budget $B$ are nonnegative.
    \end{itemize}
    \item Coverage functions
    \begin{itemize}
        \item We have a set $\Omega=\{ E_1, E_2, \cdots, E_n \}$ where each $E_i$ is a subset of a broader set $\Omega'$. The coverage function can be expressed as $f(S) = | \cup_{E_i \in S} E_i |$ for any $S \subset \Omega$. This function can be generalized by assigning non-negative weights to the elements.
    \end{itemize}
\end{itemize}

\subsection{Dynamic Submodular}\label{sec:theory:dynamic_submodular}

The standard submodular function is mapping $2^{[n]}$ to real. Here we need a more general definition that maps $2^{[n]} \times 2^{[n]}$ to real.
\begin{definition}[Strong submodular]\label{def:strong_submodular}
We define function $F:2^{[n]} \times 2^{[n]} \rightarrow \R$, then for any set $Z \subset [n]$, we assume that $F(Z,\cdot) : 2^{[n]} \rightarrow \R$ is a submodular function.
\end{definition}
In fact, the above definition is stronger than we want, what we really need can be written as follows. We remark that in Definition~\ref{def:weak_submodular:informal} we provide an informal definition. Here we provide a more detailed version of the definition.

\begin{definition}[Dynamic submodular framework, formal version of Definition~\ref{def:weak_submodular:informal}]\label{def:weak_submodular:formal}
Define function 
\begin{align*}
F:2^{[n]} \times [n] \times 2^{[n]} \rightarrow \R.
\end{align*}
Then for any index $i \in [n]$, any set $Z \subseteq [i-1]$, we assume that 
\begin{align*}
F(Z,i,\cdot) : 2^{[n]} \rightarrow \R
\end{align*}
is a submodular function w.r.t. to $Z$, i.e.,
\begin{itemize}
    \item For all sets $X,Y \subset[n]$ satisfy that $Z \subset X \subset Y$,
    \item For all element $x \in [n]$ satisfy that $x \in [n] \backslash Y$, 
\end{itemize}
we have
    \begin{align*}
        f_{Z,i}(X \cup \{x\} ) - f_{Z,i}(X) \geq f_{Z,i}( Y \cup \{x \}) - f_{Z,i}(Y),
    \end{align*}
     where $f_{Z,i}(\cdot):=F(Z,i,\cdot)$. 
\end{definition}

\begin{remark}
We remark that Definition~\ref{def:weak_submodular:formal} is a weaker version of Definition~\ref{def:strong_submodular}. We also like to mention that the informal definition (see Definition~\ref{def:weak_submodular:informal}) only contains two input parameters, but in fact we need three input parameters (see Definition~\ref{def:weak_submodular:formal}).

In the later, when we use $f_{Z,i}(\cdot)$, we will replace $Z$ by $S_i$ for convenient of analysis, for example see Definition~\ref{def:universal_monotone_condition_exact}, Definition~\ref{def:universal_monotone_condition_approximate}, Definition~\ref{def:universal_dynamic_condition_1_exact} and Definition~\ref{def:universal_dynamic_condition_1_approximate}.
\end{remark}

\subsection{Static Attention}\label{sec:theory:static}

Before we describe the recursive attention computation, we will first describe the static version of attention computation as follows (for examples, see Definition 1.1 in \cite{as23} and others \cite{zhdk23,bsz23,gsy23_dp,kmz23,dms23,as23_tensor,hjk+23}):
\begin{definition}
Given three matrices $Q, K, V \in \R^{d \times d}$, the goal is to compute
\begin{align*}
    \mathsf{Att}(Q,K,V):= D^{-1} A \cdot V
\end{align*}
where square matrix $A \in \R^{n \times n}$ can be rewritten as follows
\begin{align*}
A = \exp(Q K^\top)
\end{align*}
and diagonal matrix $D \in \R^{n \times n}$ can be written as follows
\begin{align*}
 D = \diag ( A {\bf 1}_n )
\end{align*}
Here we apply $\exp()$ to a matrix entry-wisely to a matrix. ${\bf 1}_n$ is a length-$n$ vector where all the entries are ones. The operator $\diag()$ is turning a vector into a diagonal matrix.
\end{definition}

\subsection{Recursive Attention Definition}\label{sec:theory:recursive}

We first provide some definitions.
\begin{figure*}[!ht]
\centering
\subfloat[]{\includegraphics[width=0.48\textwidth]{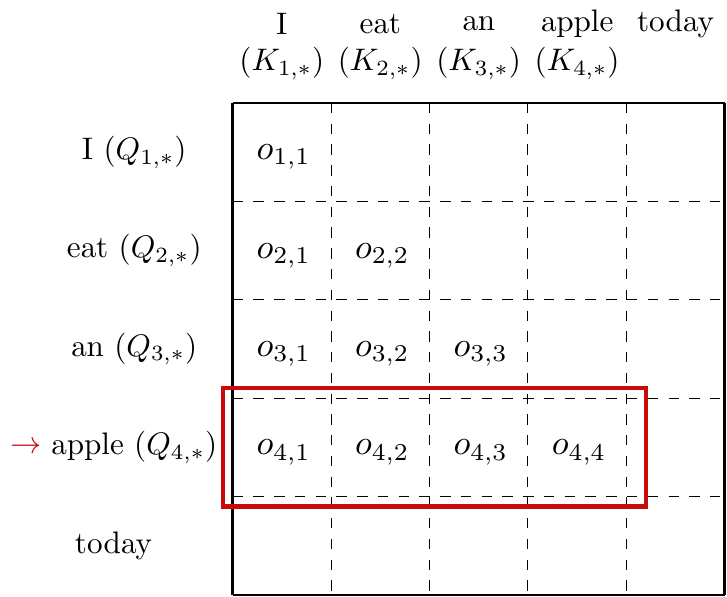}}
\hspace{2mm}
\subfloat[]{\includegraphics[width=0.48\textwidth]{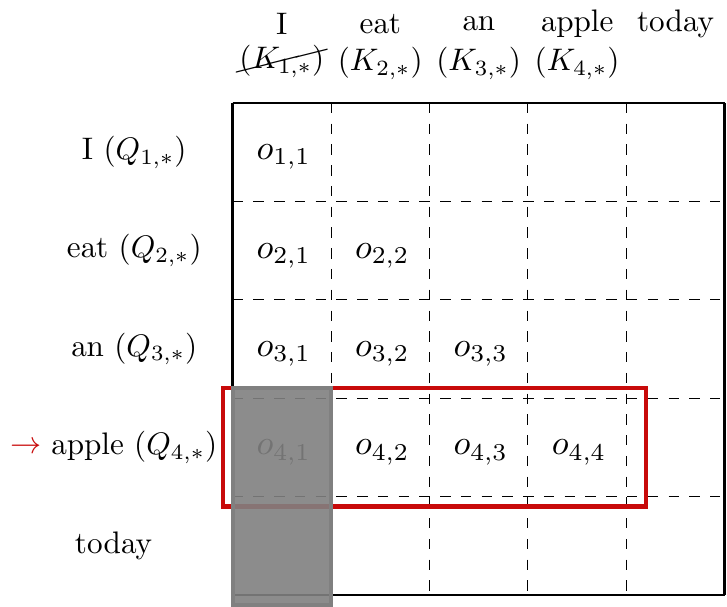}}
\caption{
(a) Exact version of the attention computation. Here, our example is about the language modeling task. At this stage, the model predicts the word 'apple' and computes the exact attention vector $o_{4}$.
(b) Approximate version of the attention computation. Let the budget of the number of tokens we can track be $3$. We truncate the key matrix for the first token $K_{1,*}$ and only keep $K_{2,*}, K_{3,*}$ and $K_{4,*}$ in the memory. Compared with the exact version, we don't compute $o_{4,1}$.
} 
\label{fig:recursiv_att}
\end{figure*}

\begin{definition}\label{def:Q}
Given a query matrix $Q \in \R^{n \times d}$, we use $Q_{i,*}$ to denote a length-$d$ row vector that represents the $i$-th row of $Q$ for each $i \in [n]$.
\end{definition}

\begin{definition}\label{def:K}
Given a key matrix $K \in \R^{n \times d}$, we use $K_{\leq i, *}$ to denote a $i \times d$ matrix that selects the first $i$ rows from $K$.
\end{definition}

Next, we show the exact version of computing attention.
\begin{definition}[Exact Version]
Given $Q, K \in \R^{n \times d}$
\begin{itemize}
    \item For each $i \in [n]$, we use $o_i \in \R^i$ to denote a length-$i$ vector.
    \begin{itemize}
        \item For each $j \in [i]$, we use $o_{i,j}$ to denote the $j$-th coordinate of vector $o_i \in \R^i$
    \end{itemize}
\end{itemize}

For the first layer, we have 
\begin{itemize}
    \item $o_{1,1} = 1$.
\end{itemize}

For the second layer, we have 
\begin{itemize}
    \item length-$2$ vector $o_{2} := \underbrace{ D_2^{-1} }_{ \mathrm{scalar} } \cdot \exp( \underbrace{ Q_{2,*} }_{1 \times d} \underbrace{ ( K_{\leq 2,*} )^\top }_{d \times 2} )$
    \item scalar $D_2 := \underbrace{ \exp( Q_{2,*} ( K_{\leq 2,*} )^\top ) }_{1 \times 2} \cdot \underbrace{ {\bf 1}_2 }_{2 \times 1}$
\end{itemize}

For each $i \in [n]$, for the $i$-th layer, we have
\begin{itemize}
    \item length-$i$ vector $o_{i} := \underbrace{ D_i^{-1} }_{ \mathrm{scalar} } \cdot \exp( \underbrace{ Q_{i,*} }_{1 \times d} \underbrace{ ( K_{\leq i,*} )^\top }_{d \times i} )$
    \item scalar $D_i := \underbrace{ \exp( Q_{i,*} ( K_{\leq i,*} )^\top ) }_{1 \times i} \cdot \underbrace{ {\bf 1}_i }_{i \times 1}$
\end{itemize}

\end{definition}

Now, we show the approximate version of computing attention. Instead of computing the entire attention $o_i$, we only compute the attention of the tokens that are being tracked.
\begin{definition}[Approximate Version]
Given $Q, K \in \R^{n \times d}$
\begin{itemize}
    \item For each $i \in [n]$, we use $o_i \in \R^i$ to denote a length-$i$ vector.
    \begin{itemize}
        \item For each $j \in [i]$, we use $o_{i,j}$ to denote the $j$-th coordinate of vector $o_i \in \R^i$
    \end{itemize}
    \item Let $k \in [n]$ be the budget of the number of tokens we can track (due to the memory issue). 
\end{itemize}

For the first layer, we have 
\begin{itemize}
    \item $o_{1,1} = 1$.
\end{itemize}

For the second layer, we have 
\begin{itemize}
    \item length-$2$ vector $o_{2} := \underbrace{ D_2^{-1} }_{ \mathrm{scalar} } \cdot \exp( \underbrace{ Q_{2,*} }_{1 \times d} \underbrace{ ( K_{\leq 2,*} )^\top }_{d \times 2} )$
    \item scalar $D_2 := \underbrace{ \exp( Q_{2,*} ( K_{\leq 2,*} )^\top ) }_{1 \times 2} \cdot \underbrace{ {\bf 1}_2 }_{2 \times 1}$
\end{itemize}

For each $i \in [n]$, for the $i$-th token, we have
\begin{itemize}
    \item Let $S_i \subset [n]$ denote the tokens we're tracking when we are predicting the $i$-th token. 
    \item $|S_i| = k$
    \item $| S_i \backslash S_{i-1} | \leq 1$ or equivalently $|S_i \cap S_{i-1}| \geq k-1$
    \item length-$i$ vector $o_{i} := \underbrace{ D_i^{-1} }_{ \mathrm{scalar} } \cdot \exp( \underbrace{ Q_{i,*} }_{1 \times d} \underbrace{ ( K_{S_i,*} )^\top }_{d \times i} )$
    \item scalar $D_i := \underbrace{ ( \exp( Q_{i,*} ( K_{S_i,*} )^\top ) - 1_{[i] \backslash S_i} ) }_{1 \times i} \cdot \underbrace{ {\bf 1}_i }_{i \times 1}$
\end{itemize}

\end{definition}
In a certain sense, the above definition  is related to finding heavy hitters in compressive sensing (for more background on compressive sensing, we refer readers to \cite{lnnt16,ns19,nsw19}).

\subsection{Eviction Policy}\label{sec:theory:eviction_policy}

The goal of this section is to our eviction policy under space limitation. We start by giving a process without having space limitations.
We denote the attention query matrix as $Q \in \R^{n \times d}$ and the key matrix as $K \in \R^{n \times d}$. $Q_{i,*}$ represents the $i$-th row of $Q$ and $K_{\leq i, *}$ represents the first $i$ rows of $K$.  For simplicity, $K_{S_i,*}$ denotes a sub-matrix of $K$ which selects $S_i$ rows from $K$. 

\begin{definition}[The generative process with Full knowledge]
For each $i \in [n]$, for the $i$-th token, we have
\begin{itemize}
    \item length-$i$ vector $o_{i} :=  D_i^{-1}  \cdot \exp(  Q_{i,*}    ( K_{\leq i,*} )^\top  )$, which denotes the attention of the $i$-th word.
    \item scalar $D_i :=   \exp( Q_{i,*} ( K_{\leq i,*} )^\top )   \cdot  {\bf 1}_i  $, which denotes the normalization factor.
\end{itemize}
\end{definition}
In the above process, since we have no space limitation, then we can keep all the scores. In the following process, we show how to handle things when there is a space limitation. In this case, we need to dynamically maintain set $S_i$ such that $|S_i| \leq k$ (Compared to the full knowledge case, we can think of $|S_i| = i$).

\begin{definition}[$\mathsf{H_2O}$ Eviction Policy, Formal version of  Definition~\ref{def:eviction_policy_H_2}]\label{def:eviction_policy_H_2:formal}
Let $k$ denote the budget of space and $k < n$.
Let $F_{\mathrm{score}} : 2^{[n]} \rightarrow \R$ denote certain score function. 
   Let $S_{i-1}$ denote the source set. Let $S_{i}$ denote the target set. We defined the eviction policy $g: S_{i-1} \to S_{i}$ s.t. \begin{itemize}
        \item  $|S_i| \leq k$ ($\kv$ $\cache$ size is not changing over the time)
        \item  $| S_i \backslash S_{i-1} | \leq 1$ 
        (we can evict at most $1$ $\kv$ in the $\kv$ $\cache$)
        \item We construct $S_i \gets (S_{i-1} \cup  \{i\} ) \backslash \{ u\}$ as $u \gets \arg\max_{v \in ( S_{i-1} \cup \{i \} ) } F_{\mathrm{score}}(S_{i-1} \cup \{i\} \backslash \{v\} \} $
    \end{itemize}
\end{definition}

 \begin{algorithm}[ht]
 \caption{$\mathsf{H_2}$ Eviction Algorithm, Query matrix $Q \in \R^{n \times d}$, key matrix $K \in \R^{n \times d}$, budget size of $\kv$ $\cache$ $k\in \mathbb{N}$. Formal and detailed version of Algorithm~\ref{alg:main}.}\label{alg:main:formal}
 \begin{algorithmic}[1]
 \Procedure{$\mathsf{H}_2$\_Eviction}{$Q \in \R^{n \times d}, K \in \R^{n \times d}, k \in \mathbb{N}$}
     \State $S_0 \gets \emptyset$, $\wt{o}_0 \gets 0$ \label{line:wt_o_init}
     \Comment{Initialization the algorithm}
     \For{$i=1 \to n$} \Comment{Linear scan each token}
         \If{$i \leq k$} \Comment{If the $\kv$ $\cache$ is not full}
             \State $S_i \gets S_{i-1} \cup \{i\}$ \Comment{Expand the set directly}
         \Else \Comment{If the $\kv$ $\cache$ is full}
             \State $D_i \gets  ( \exp( Q_{i,*} ( K_{S_{i-1},*} )^\top ) - 1_{[i] \backslash S_{i-1}} )   \cdot   {\bf 1}_i $  \Comment{Compute the normalization factor}
             \State $o_{i} \gets D_i^{-1}   \cdot \exp(   Q_{i,*}  ( K_{S_{i-1},*} )^\top   )$ \Comment{Compute score vector}
             \State $\wt{o}_i \gets \wt{o}_{i-1} + o_i$ \Comment{Accumulate the scores (Remark~\ref{rem:wt_o})}\label{line:wt_o_acc}
             \State Let score function be $F_{\mathrm{score}}(T):= h( \sum_{s \in T} \wt{o}_{i,s} )$ \Comment{Score function (Remark \ref{rem:h})} \label{line:score_function}
             \State $u \gets \arg\max_{v \in ( S_{i-1} \cup \{i \} ) } F_{\mathrm{score}}(S_{i-1} \cup \{i\} \backslash \{v\} \} $ \Comment{Find an entry to swap}
             \State $S_i \gets (S_{i-1} \cup  \{i\} ) \backslash \{ u\}$  \Comment{Construct $S_i$}
         \EndIf
     \EndFor
    \EndProcedure
 \end{algorithmic}
 \end{algorithm}

\begin{remark}
We remake that, in Definition~\ref{def:eviction_policy_H_2}, we introduce a simplified notation where we state that $|S_i| = k$ in the first bullet point, but in general, we consider the case where $|S_i| \leq k$ for the first $k$ tokens. To accommodate this more general scenario, we present Definition~\ref{def:eviction_policy_H_2:formal}, which provides a broader definition that handles the situation where $|S_i| \leq k$.
\end{remark}
\begin{remark}\label{rem:h}
We remark the above function $F_{\mathrm{score}}$ can have multiple choices. In the later analysis, we provide two choices. If we use the exact function, then $F_{\mathrm{score}}$ is $f_{S_i,i}$ (see Definition~\ref{def:universal_monotone_condition_exact} and Definition~\ref{def:universal_dynamic_condition_1_exact}). If we use the approximate function, then $F_{\mathrm{score}}$ if $\wt{f}_{S_i,i}$ (see Definition~\ref{def:universal_monotone_condition_approximate} and Definition~\ref{def:universal_dynamic_condition_1_approximate}). 
Let $h: \R \rightarrow \R$ (the $h$ being used Line \ref{line:score_function} in Algorithm \ref{alg:main:formal}) denote function. We hope to choose $h$ such that it gives the submodular property for the function $F_{\mathrm{score}}$ in the sense of our dynamic framework. For example, $h(z)$ is usually chosen to be some non-decreasing concave function such as $ \sqrt{z+1}$ and $\log (z+1)$. For simplicity, in Algorithm \ref{alg:main} (which is the informal version of Algorithm \ref{alg:main:formal}), we choose $h (z) = z$ for the purpose of providing readers with an intuitive understanding and due to space limitation.
\end{remark}

\begin{remark}\label{rem:wt_o}
We remark that, in Line \ref{line:score_function} in Algorithm~\ref{alg:main:formal}, the $\wt{o}_{i,s}$ is the accumulation of attention score for the token $s$ in set $T$. In our Algorithm~\ref{alg:main}, we only use $o_{s}$ in Line \ref{line:main:informal:o_s} (see Algorithm~\ref{alg:main}) to represent that accumulation of the attention score for simplicity.

In Algorithm~\ref{alg:main:formal}, for simplicity of the presentation, we can create multiple vectors $\wt{o}_0, \cdots \wt{o}_n$ (see Line~\ref{line:wt_o_init}, Line~\ref{line:wt_o_acc}, Line~\ref{line:score_function}). However, every time at $i$-th, we only need to use the information for $\wt{o}_i$ which has at most length $n$ size. Thus, a straightforward and better implementation for only using one length-$n$ to store accumulative score can reduce $n^2$ usage to $n$.
\end{remark}

\subsection{Explaining Submodular Diminishing Return Property in Attention Scheme}\label{sec:theory:submodular_diminishing_return}

In the standard submodular problem \cite{s03}, we are given a ground set $[n]$ and a function $f: 2^{[n]} \rightarrow \R$. The goal is to find a set of elements $S$ such that $f(S)$ is maximized.

We say function $f$ is submodular (Recall the formal definition in Definition~\ref{def:submodular_full:formal}), if for every $X, Y \subset [n]$ with $X \subseteq Y$ and every $x \in [n] \backslash Y$ we have
\begin{align*}
    f(X \cup \{ x \} ) - f(X) \geq f(Y \cup \{ x \}) - f(Y)
\end{align*}

Submodular functions can represent the cost of items, due to their diminishing returns property \cite{kg08,b15}.
It suggests that the increase in information obtained from selecting a candidate object, such as a word or sentence, becomes smaller as more objects have already been chosen for the summary.

For instance, we introduce a new token, denoted as ``w'', into two sets, $S$ and $S_0$, where the concepts covered by $S_0$ are a subset of those covered by $S$.
By intuition, the information added  to $S_0$ by ``w'' should be larger compared to adding it to $S$, as the new concepts carried by ``w'' might have already been covered by the concepts present in $S$ but not in $S_0$. This property is known as the diminishing return property.
Hence, we propose that the neural coverage function \cite{hlj19} should exhibit a desirable characteristic called submodularity. 


\subsection{Submodular: High Level Ideas}\label{sec:theory:submodular_high_level}

We formalize the submodular function maximization problem with cardinality constraint in this section.
Informally, a set function is submodular if it has decreasing marginal increment.
\begin{definition}[Submodular function]
We denote $f:2^{[n]}\rightarrow\mathbb{R}$ as a set function.
The discrete derivative $\Delta_f$ is defined as follows:
\begin{align*}
    \Delta_f(i~|~S):=f(S\cup\{i\})-f(S).
\end{align*}
A function $f$ is submodular if, for any $S\subseteq T$ and $i \in [n] \backslash T$,
the following inequality holds:
\begin{align*}
    \Delta_f(i~|~T) \le \Delta_f(i~|~S).
\end{align*}
\end{definition}
For simplicity, we present the problem of maximizing a submodular function with a cardinality constraint \eqref{eqn:def_submodular_maximization}. Our goal is to solve the optimization problem efficiently.
\begin{equation}\label{eqn:def_submodular_maximization}
\begin{aligned}
\max_{S\subseteq [n]} \quad & f(S)\\
\textrm{s.t.} \quad & |S| \le k
\end{aligned}
\end{equation}

\paragraph{Representation of $f(S)$}
One challenge in designing the algorithm is determining the representation of input instances.
Since the constraint in optimization problem \eqref{eqn:def_submodular_maximization} is straightforward, we need to decide how to represent $f(S)$.
Suppose $S={i_1,i_2,\cdots,i_m} \subseteq [n]$, we can decompose $f(S)$ into a sum of increments as follows:
\begin{align}\label{eqn:decompose_f_1}
    f(S) = f(S_0) + \sum_{j=1}^m ( f(S_j) - f(S_{j-1}) ),
\end{align}
where $S_0 = \emptyset$ and $S_j = S_{j-1} + {i_j}$.
Without loss of generality, we assume $f(\emptyset) = 0$.
By the definition of $\Delta_f(i|S)$, we have $f(S_j) - f(S_{j-1}) = \Delta_f(i_j | S_{j-1})$.
Therefore, the decomposition \eqref{eqn:decompose_f_1} can be simplified as follows:
\begin{align}\label{eqn:decompose_f_2}
    f(S) = \sum_{j=1}^m \Delta_f(i_j | S_{j-1})
\end{align}
To introduce our advanced data structure later, we further represent $\Delta_f(i | S)$ in the form of
\begin{align}\label{eqn:decompose_f_3}
    \Delta_f(i | S) = u_i^\top h(S) u_i
\end{align}
where $u_i \in \mathbb{R}^d$ is a $d$-dimensional vector and $h(S)\in\mathbb{R}^{d\times d}$ is a $d$-by-$d$ matrix.

In practice, a significant subclass of submodular functions is the monotone submodular functions, i.e. functions $f$ satisfying $f(A) \le f(B)$ for all $A\subseteq B \subseteq [n]$.
When $f$ is monotone, we could restrict all $h(S)$ to be positive semidefinite (PSD) matrixes.
When the matrix $h(S)$ is positive semidefinite (PSD), it makes us achieve a faster acceleration.

\subsection{Robust Greedy with Error Propagation}\label{sec:theory:robust_greedy_with_error_propagation}

The procedure for the greedy selection algorithm initiates with empty set $S_0 = \emptyset$. For each iteration in the main loop, the algorithm chooses the element that maximizes the marginal increment to add to the set. When the size of the set eventually reaches $k$, the algorithm return that set $S$.
Specifically, for iteration $t\in\{1,2,\cdots,k\}$, we let
$$S_t \leftarrow S_{t-1} \cup \{j_t\}$$
where the element in the singleton is $j_t=\arg\max_{j\in\ov{S}_{t-1}} f(S_{t-1} \cup \{j\})$.
The greedy strategy is effective in the sense that the approximation error of it is $1-1/e$.

\begin{theorem}[\cite{nwf78}] \label{thm:greedy}

For a monotone submodular function $f$, the greedy algorithm (Algorithm \ref{alg:greedy}) guarantees to output a set $S$ satisfying 
\begin{align*}
f(S) \ge (1-1/e)\max_{|T|=k}\{f(T)\}.
\end{align*}
\end{theorem}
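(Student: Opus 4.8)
The plan is to run the classical gap‑tracking argument of Nemhauser, Wolsey and Fisher, adapted to our notation. Fix an optimal set $T^\star$ with $|T^\star| = k$ achieving $\opt := \max_{|T| = k} f(T)$, and let $\emptyset = S_0 \subseteq S_1 \subseteq \cdots \subseteq S_k = S$ be the chain of sets produced by the greedy algorithm (Algorithm~\ref{alg:greedy}), where at step $t$ we add $j_t = \arg\max_{j \in [n] \setminus S_{t-1}} \Delta_f(j \mid S_{t-1})$. The quantity I will control is the optimality gap $\opt - f(S_t)$, and the goal is to show it shrinks by a factor $(1 - 1/k)$ in each of the $k$ iterations.

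The key step is the one‑step progress inequality: for every $t \in \{1, \dots, k\}$,
\[
\opt - f(S_{t-1}) \;\le\; k \cdot \bigl( f(S_t) - f(S_{t-1}) \bigr).
\]
To prove it, write $T^\star \setminus S_{t-1} = \{o_1, \dots, o_m\}$ with $m \le k$. By monotonicity (Definition~\ref{def:monotone}), $\opt = f(T^\star) \le f(T^\star \cup S_{t-1})$, and telescoping along the $o_\ell$'s gives
\[
f(T^\star \cup S_{t-1}) - f(S_{t-1}) \;=\; \sum_{\ell = 1}^m \Delta_f\bigl( o_\ell \mid S_{t-1} \cup \{o_1, \dots, o_{\ell-1}\} \bigr).
\]
Submodularity (Definition~\ref{def:submodular_full:formal}, Condition~1) bounds each summand by $\Delta_f(o_\ell \mid S_{t-1})$, which in turn is at most $\Delta_f(j_t \mid S_{t-1}) = f(S_t) - f(S_{t-1})$ by the greedy choice, since $o_\ell \in [n] \setminus S_{t-1}$ is a feasible candidate. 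Summing the $m \le k$ terms and using $f(S_t) - f(S_{t-1}) \ge 0$ (again monotonicity) yields the displayed inequality.

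Rearranging gives $\opt - f(S_t) \le (1 - 1/k)\,(\opt - f(S_{t-1}))$; iterating this from $t = k$ down to $t = 0$ and using $f(S_0) = f(\emptyset) = 0$ gives $\opt - f(S) \le (1 - 1/k)^k\, \opt \le e^{-1}\, \opt$ via $1 - x \le e^{-x}$, hence $f(S) \ge (1 - 1/e)\,\opt$, as claimed. I do not expect a real obstacle here — this is a textbook argument; the only points needing a moment of care are in the telescoping step: when $|T^\star \setminus S_{t-1}| < k$ the bound only improves because the omitted marginals are nonnegative by monotonicity, and the greedy maximum must be taken over a set containing all of $T^\star \setminus S_{t-1}$ so that the comparison $\Delta_f(o_\ell \mid S_{t-1}) \le \Delta_f(j_t \mid S_{t-1})$ is legitimate.
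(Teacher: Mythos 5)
The paper does not reproduce a proof for this theorem; it is stated by citation to Nemhauser, Wolsey and Fisher~\cite{nwf78}. Your argument is the canonical proof of that result — the gap $\opt - f(S_t)$ contracts by $(1-1/k)$ each step via the telescoping-plus-submodularity bound $\opt - f(S_{t-1}) \le k\,(f(S_t)-f(S_{t-1}))$ — and it is correct, with the (standard and paper-consistent) normalization $f(\emptyset)=0$ used to close the induction.
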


\begin{algorithm}[!ht]\caption{Greedy algorithm benchmark}\label{alg:greedy}
\begin{algorithmic}[1]
\Procedure{GreedyAlgorithm}{submodular function $f$}
    \State $S_0 \gets \emptyset$ \Comment{Initialize an empty set $S_0$}
    \For{$t=0 \to k-1$} 
        \State $j \gets \arg\max_i\{ f(S_t \cup \{i\}) \}$ \Comment{ Find the element $i \in [n]$ that maximize $ f(S_t \cup \{i\}) $}
        \State $S_{t+1} \gets S_t \cup \{j\}$ \Comment{Append element $i$ to set $S_t$ to get $S_{t+1}$}
    \EndFor
    \State \Return $S_k$
\EndProcedure
\end{algorithmic}
\end{algorithm}

\begin{corollary}[\cite{qsw23}]\label{cor:error_approximate_greedy_algo}
Given
\begin{itemize}
    \item accuracy parameter $\epsilon > 0$.
    \item integer $k \geq 1$
    \item Let $O$ denote an oracle that takes an arbitrary set $S\subseteq[n]$ and $i \in [n]\backslash S$, returns a value $O(S,i)$ with guarantee that $\Delta(i|S)-\epsilon \le O(S,i) \le \Delta(i|S)+\epsilon$.
    \item 
    Let $A$ denote an algorithm that each time step $t = 1,2,\cdots,k$, it selects $j_t = \arg\max_{j}\{O(S_{t-1},j)\}$ and lets $S_t \gets S_{t-1} \cup \{j_t\}$.
\end{itemize}
Then this algorithm $A$
\begin{itemize}
\item returns a set $S_k$
\item the $S_k$ satisfy that 
\begin{align*}
    f(S_k) \ge (1-1/e) \max_{|T|=k}\{f(T)\} - k(2-1/e)\epsilon.
\end{align*}
\end{itemize}
\end{corollary}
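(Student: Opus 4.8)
The plan is to adapt the classical Nemhauser--Wolsey--Fisher argument behind Theorem~\ref{thm:greedy}, charging an extra additive $2\epsilon$ of slack at every greedy step and then propagating it through the usual contraction recursion. Throughout, $f$ is the monotone submodular objective and, following the convention already in force, $f(\emptyset)=0$ (shifting $f$ by a constant preserves monotonicity, submodularity, and both sides of the inequality), so $f\ge 0$. Fix an optimal set $T^*$ with $|T^*|=k$ and write $\opt:=f(T^*)=\max_{|T|=k}f(T)$.

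First I would establish the one-step progress bound. Fix $t\in\{1,\dots,k\}$ and let $S_{t-1}$ be the set $A$ has built so far. Adding the elements of $T^*\setminus S_{t-1}$ to $S_{t-1}$ one at a time, a telescoping argument as in~\eqref{eqn:decompose_f_2} together with submodularity (marginal gains only shrink as the base set grows) gives $\opt\le f(S_{t-1}\cup T^*)\le f(S_{t-1})+\sum_{j\in T^*\setminus S_{t-1}}\Delta(j\mid S_{t-1})$; since $|T^*\setminus S_{t-1}|\le k$, some $j^*\in T^*\setminus S_{t-1}$ satisfies $\Delta(j^*\mid S_{t-1})\ge \tfrac1k(\opt-f(S_{t-1}))$. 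The oracle does the rest: $A$ picks $j_t$ with $O(S_{t-1},j_t)\ge O(S_{t-1},j^*)$, and chaining the two halves of the oracle guarantee yields $\Delta(j_t\mid S_{t-1})\ge O(S_{t-1},j_t)-\epsilon\ge O(S_{t-1},j^*)-\epsilon\ge \Delta(j^*\mid S_{t-1})-2\epsilon\ge \tfrac1k(\opt-f(S_{t-1}))-2\epsilon$.

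Next I would set $\delta_t:=\opt-f(S_t)$; since $f(S_t)-f(S_{t-1})=\Delta(j_t\mid S_{t-1})$, the bound above rearranges to $\delta_t\le(1-\tfrac1k)\delta_{t-1}+2\epsilon$. Unrolling over $t=1,\dots,k$ gives $\delta_k\le(1-\tfrac1k)^k\delta_0+2\epsilon\sum_{i=0}^{k-1}(1-\tfrac1k)^i=(1-\tfrac1k)^k\delta_0+2k\epsilon\,(1-(1-\tfrac1k)^k)$. Regrouping the right-hand side as $(1-\tfrac1k)^k(\delta_0-2k\epsilon)+2k\epsilon$ and using $(1-\tfrac1k)^k\le 1/e$ together with $\delta_0=\opt$, a short case split finishes the proof: if $\opt\ge 2k\epsilon$ the factor $\delta_0-2k\epsilon$ is nonnegative, so $\delta_k\le \tfrac1e\opt+2k\epsilon(1-\tfrac1e)$ and hence $f(S_k)=\opt-\delta_k\ge(1-1/e)\opt-(2-2/e)k\epsilon\ge(1-1/e)\opt-k(2-1/e)\epsilon$; in the complementary case $\opt<2k\epsilon$ the claimed right-hand side is already negative while $f(S_k)\ge f(\emptyset)=0$ by monotonicity, so the bound holds trivially.

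I expect the error-propagation bookkeeping in the last step to be the only real obstacle: the $2\epsilon$ slack introduced at each of the $k$ rounds must be summed against the geometric weights $(1-1/k)^i$, and squeezing the constant down to $2-1/e$ rather than the crude $2$ requires exploiting $(1-1/k)^k\le 1/e$ simultaneously in the leading $\delta_0$ term and in the accumulated-error sum, combined with the small case split on the size of $\opt$ relative to $k\epsilon$. Everything else is a direct transcription of the exact-oracle greedy analysis of Theorem~\ref{thm:greedy}.
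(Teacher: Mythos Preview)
Your proof is correct. You open up the Nemhauser--Wolsey--Fisher argument directly, charging $2\epsilon$ of slack per round and then pushing it through the contraction recursion $\delta_t\le(1-1/k)\delta_{t-1}+2\epsilon$. The paper's (commented-out) argument takes a different route: it treats Theorem~\ref{thm:greedy} as a black box by positing a surrogate set function $f'$ whose discrete derivative is exactly the oracle value $O(S,i)$, \emph{assumes} $f'$ is well-defined and submodular, applies the exact greedy guarantee to $f'$, and then transfers back to $f$ via $|f'(T)-f(T)|\le k\epsilon$ for $|T|\le k$. That yields the chain $\alg\ge\alg'-k\epsilon\ge(1-1/e)\opt'-k\epsilon\ge(1-1/e)(\opt-k\epsilon)-k\epsilon=(1-1/e)\opt-k(2-1/e)\epsilon$, which is precisely where the constant $2-1/e$ arises there. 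Your approach has two modest advantages: it avoids the unjustified step of assuming the oracle values are the marginals of some submodular $f'$ (in general an arbitrary oracle need not be consistent with any set function, let alone a submodular one), and it in fact delivers the sharper constant $2-2/e$ before you relax it to match the stated $2-1/e$. The paper's approach, on the other hand, is shorter once one grants the surrogate-function premise and makes the origin of the $2-1/e$ more transparent.
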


\subsection{Robust Submodular and Adding Items}\label{sec:theory:robust_subomdular_adding_item}

We first propose a lemma that shows the robustness of the submodular. 

\begin{lemma}\label{lem:error_bound_for_shift}
Suppose that for each $i \in [n]$, we have 
\begin{itemize}
    \item for all $S_i \subset[ n]$ with $|S_i| \leq k$
    \item for all $X \subset [n]$, $X \subseteq S_i \cup \{i \}$ and $| X \cap S_i | \leq |S_i|$, 
    \begin{align*}
     | (\wt{f}_{ S_i, i }(X )) - \wt{f}_{S_i,i}(S_i)  - ( f_{ S_i, i }(X ) - f_{S_i,i}(S_i) )  |    \leq \epsilon/(2n)
    \end{align*}
\end{itemize}
Let $\mathsf{opt}_i$ denote the optimal cost that can be achieved by using $f$ in $i$-th iteration. 
If the greedy algorithm use $f$ to find the solution has performance at least $(1-1/e) \cdot \mathsf{opt}_i $, then using $\wt{f}$, we can obtain a solution that has performance at least $(1-1/e) \cdot \mathsf{opt}_i - \epsilon$
\end{lemma}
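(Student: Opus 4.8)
The plan is to recognize the statement as a robustness-of-greedy claim and reduce it directly to Corollary~\ref{cor:error_approximate_greedy_algo}: running the greedy submodular maximization with the approximate objective $\wt f_{S_i,i}$ in place of the exact $f_{S_i,i}$ should cost only an additive $\epsilon$ in the value guarantee, because the per-step oracle error accumulates gracefully over the at most $k$ greedy steps. Fix the outer iteration $i$ and consider the inner greedy that builds $S_i$ by repeatedly adding the element of largest marginal value; each partial set $S$ it passes through, and each one-element extension $S \cup \{x\}$ it evaluates, lies inside $S_i \cup \{i\}$, so the hypothesis of the lemma is available for all of them.

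First I would set up the approximate oracle. Let $O(S,x) := \wt f_{S_i,i}(S \cup \{x\}) - \wt f_{S_i,i}(S)$ be the marginal increment under the approximate objective, and write $g(X) := \wt f_{S_i,i}(X) - f_{S_i,i}(X)$ for the discrepancy. The hypothesis says precisely that $|g(X) - g(S_i)| \le \epsilon/(2n)$ for every relevant $X$, i.e. on the family of sets the greedy ever inspects, $g$ equals the constant $g(S_i)$ up to $\epsilon/(2n)$. Hence, for any such $S$ and $x$, $|O(S,x) - \Delta_{f_{S_i,i}}(x \mid S)| = |g(S \cup \{x\}) - g(S)| \le \epsilon/(2n) + \epsilon/(2n) = \epsilon/n$ by the triangle inequality through the common anchor $g(S_i)$. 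So $O$ is an $(\epsilon/n)$-accurate oracle for the submodular function $f_{S_i,i}$ in the sense of Corollary~\ref{cor:error_approximate_greedy_algo}, and the inner greedy driven by $\wt f_{S_i,i}$ is exactly the approximate-oracle algorithm of that corollary.

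Then I would invoke the corollary and finish with error bookkeeping. It produces a set $\wt S_i$ with $f_{S_i,i}(\wt S_i) \ge (1-1/e)\max_{|T|=k} f_{S_i,i}(T) - k(2-1/e)\cdot \epsilon/(2n)$; the first term equals $(1-1/e)\,\mathsf{opt}_i$ by definition of $\mathsf{opt}_i$ (equivalently, the guarantee the exact greedy is assumed to meet, cf.\ Theorem~\ref{thm:greedy}), so no value is lost there. For the additive term, $k < n$ and $2-1/e < 2$ give $k(2-1/e)\cdot \epsilon/(2n) \le \epsilon$, yielding $f_{S_i,i}(\wt S_i) \ge (1-1/e)\,\mathsf{opt}_i - \epsilon$ as claimed. (Carrying the slightly weaker per-step bound $\epsilon/n$ through literally gives $2\epsilon$ instead; this is absorbed by running the argument with $\epsilon/(4n)$ in the hypothesis and is immaterial.)

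The one genuinely non-mechanical point -- and the thing to get right -- is the middle step: the lemma controls $\wt f_{S_i,i} - f_{S_i,i}$ only \emph{relative to the single anchor} $S_i$, not pointwise, yet that suffices, because the anchor offset $\wt f_{S_i,i}(S_i) - f_{S_i,i}(S_i)$ is constant across every set the greedy compares and therefore cancels in all marginal increments (and a constant shift changes neither submodularity, nor the greedy trajectory, nor the optimum). Once that cancellation is spelled out, the rest is the standard accumulation-of-oracle-error bookkeeping already packaged in Corollary~\ref{cor:error_approximate_greedy_algo}, so I do not anticipate any serious obstacle beyond stating the reduction carefully.
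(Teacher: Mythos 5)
Your proposal is correct and matches the paper's own proof, which is literally the one-line citation ``The proof follows from Lemma~\ref{cor:error_approximate_greedy_algo}''; you have usefully spelled out that reduction, in particular the key observation that the hypothesis only controls the discrepancy $\wt{f}_{S_i,i}-f_{S_i,i}$ relative to the anchor $S_i$, which nonetheless suffices because the anchor offset cancels in every marginal increment the greedy compares, turning $\wt{f}_{S_i,i}$ into an $(\epsilon/n)$-accurate oracle for $f_{S_i,i}$ in the sense of Corollary~\ref{cor:error_approximate_greedy_algo}. The one loose end you flag yourself --- that the $\epsilon/(2n)$ hypothesis, after the triangle inequality through the anchor, feeds $\epsilon/n$ into the corollary and hence gives a $k(2-1/e)\epsilon/n < 2\epsilon$ rather than $\epsilon$ loss --- is real but purely a constant-factor mismatch; the paper does not track constants at this point either, so this is not a substantive gap.
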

\begin{proof}
The proof follows from Lemma~\ref{cor:error_approximate_greedy_algo}.
\end{proof}
Next, we explain how to add items into sets based on exact values.
\begin{definition}[Expanding items based on exact value]\label{def:u_S_i+1_exact}
If the following conditions hold
\begin{itemize}
    \item Let $S_i \subseteq [i-1]$.
    \item Let ${f}_{S_i,i} : 2^{[i]} \rightarrow \R$.
\end{itemize}
Then, we can define $S_{i+1}$ as follows
\begin{itemize}
    \item If $|S_i | = k$ then $S_{i+1} = S_i \cup \{i\} \backslash u$ where $u = \arg\max_{v \in S_i \cup \{i\} }  {f}_{S_i,i} ( S_i \cup \{i\} \backslash v ) - {f}_{S_i,i}( S_i ) $
    \item If $|S_i| < k$, then $S_{i+1} = S_i \cup \{i\}$.
\end{itemize}
\end{definition}
\begin{remark}
We remark that $u = \arg\max_{v \in S_i \cup \{i\} }  {f}_{S_i,i} ( S_i \cup \{i\} \backslash v ) - {f}_{S_i,i}( S_i ) $ is the same as $u = \arg\max_{v \in S_i \cup \{i\} }  {f}_{S_i,i} ( S_i \cup \{i\} \backslash v ) $. For the convenience of discussion and analysis, we will switch to using both cases in different places.
\end{remark}

Here, we explain how to add items into sets via approximate values.
\begin{definition}[Expanding items based on approximate value]\label{def:u_S_i+1_approximate}
If the following conditions hold
\begin{itemize}
    \item Let $S_i \subseteq [i-1]$.
    \item Let $\wt{f}_{S_i,i} : 2^{[i]} \rightarrow \R$.
\end{itemize}
Then, we can define $S_{i+1}$ as follows
\begin{itemize}
    \item If $|S_i | = k$ then $S_{i+1} = S_i \cup \{i\} \backslash u$ where $u = \arg\max_{v \in S_i \cup \{i\} }  \wt{f}_{S_i,i} ( S_i \cup \{i\} \backslash v ) - \wt{f}_{S_i,i}( S_i ) $
    \item If $|S_i| < k$, then $S_{i+1} = S_i \cup \{i\}$.
\end{itemize}
\end{definition}

\subsection{Universal Conditions}\label{sec:theory:universal_conditions}
We state several definitions for both the exact function and approximation function. 
\begin{definition}[Universal Monotone Condition (for exact function)]\label{def:universal_monotone_condition_exact}
We say $f$ has universal monotone condition, if for all $i \in [n]$ for all $S_i \subseteq [i-1]$, we have
\begin{align*}
    f_{S_i,i} (X) \geq f_{S_i,i}(Y), ~~~\forall Y \subset X
\end{align*}
\end{definition}

\begin{definition}[Universal Monotone Condition (for approximate function)]\label{def:universal_monotone_condition_approximate}
We say $\wt{f}$ has universal monotone condition, if for all $i \in [n]$ for all $S_i \subseteq [i-1]$, we have
\begin{align*}
    \wt{f}_{S_i,i} (X) \geq \wt{f}_{S_i,i}(Y), ~~~\forall Y \subset X
\end{align*}
\end{definition}

\begin{definition}[Universal Dynamic Condition 1(for exact function)]\label{def:universal_dynamic_condition_1_exact}
We say $f$ has universal dynamic condition 1(for exact function), if for all $i \in [n]$ for all $S_i \subseteq [i-1]$, $S_{i-1} \subseteq [i-2]$, $| S_{i} \backslash S_{i-1} | \leq 1$, we have
\begin{align*}
    f_{S_i,i}(S_i) \geq (1-\theta) \cdot f_{S_{i-1}, i-1} (S_i)
\end{align*}
\end{definition}

\begin{definition}[Universal Dynamic Condition 1 
(for approximate function)]\label{def:universal_dynamic_condition_1_approximate}

We say $\wt{f}$ has universal dynamic condition 1(for approximate function), if for all $i \in [n]$ for all $S_i \subseteq [i-1]$, $S_{i-1} \subseteq [i-2]$, $| S_{i} \backslash S_{i-1} | \leq 1$, we have
\begin{align*}
    \wt{f}_{S_i,i}(S_i) \geq (1-\theta) \cdot \wt{f}_{S_{i-1}, i-1} (S_i)
\end{align*}
\end{definition}

We define $\opt_i$ as follows:
\begin{definition}\label{def:opt_i}
Let $k$ denote the budget length. For each $i \in [n]$, we define $\opt_i$ as 
\begin{align*}
\max \{ f_{X,i}(Y) ~|~ X \subseteq [i-1], Y \subseteq [i], |X| \leq k, |Y|\leq k, |Y \backslash X| \leq 1 \}.
\end{align*}
\end{definition}

\begin{definition}[Universal Dynamic Condition 2]\label{def:universal_dynamic_condition_2}
For $i \in [n]$, we define $\opt_i$ as Definition~\ref{def:opt_i}. 
We say it has universal dynamic condition if for each $i \in [n]$, we have
\begin{align*}
    \opt_i \geq (1-\gamma) \cdot \opt_{i-1}.
\end{align*}
\end{definition}

\subsection{Induction Lemma for Exact Function}\label{sec:theory:induction_exact}

The goal of this section is to prove Lemma~\ref{lem:induction_exact}
\begin{lemma}[Induction Lemma]\label{lem:induction_exact}
For a fixed $i$, suppose the following conditions hold
\begin{itemize}
    \item {\bf Set Condition.} $S_i \subseteq [i-1]$ 
    \item {\bf Budget Condition.} $|S_i| \leq k$
    \item {\bf Value Condition.} $f_{S_{i-1},i-1}(S_i) \geq (1-1/e) \cdot (1-\theta)^i (1-\gamma)^i  \opt_i$
    \item {\bf Universal Dynamic Condition 1 (for exact function).} (See Definition~\ref{def:universal_dynamic_condition_1_exact})  $ {f}_{ S_{i} ,i }( S_{i}) \geq (1-\theta) \cdot {f}_{ S_{i-1} ,i-1 }( S_{i})$
    \item {\bf Universal Dynamic Condition 2.} (See Definition~\ref{def:universal_dynamic_condition_2}) $\opt_i \geq (1-\gamma) \cdot \opt_{i+1}$
    \item {\bf Universal Monotone Condition (for exact function).} (See Definition~\ref{def:universal_monotone_condition_exact}) ${f}_{S_i,i}(X) \geq {f}_{S_i,i}(Y)$ for all $Y \subset X$
\end{itemize}
Then if we construct $S_{i+1}$ as Definition~\ref{def:u_S_i+1_exact}, then we have
\begin{itemize}
    \item {\bf Set Condition.} $S_{i+1} \subseteq [i]$
    \item {\bf Budget Condition.} $|S_{i+1}| \leq k$
    \item {\bf Value Condition.} $f_{S_{i},i} ( S_{i+1} ) \geq (1-1/e) \cdot (1-\theta)^{i+1} (1-\gamma)^{i+1} \opt_{i+1}  $
\end{itemize}

\end{lemma}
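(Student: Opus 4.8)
The plan is to verify the three conclusions (\textbf{Set}, \textbf{Budget}, \textbf{Value}) in turn, with essentially all the work concentrated in the \textbf{Value} condition, which will follow from a short chain of inequalities that threads the hypotheses together. The \textbf{Set} and \textbf{Budget} conditions are immediate from the construction in Definition~\ref{def:u_S_i+1_exact}: if $|S_i| < k$ then $S_{i+1} = S_i \cup \{i\} \subseteq [i-1] \cup \{i\} = [i]$ and $|S_{i+1}| \le |S_i| + 1 \le k$; if $|S_i| = k$ then, since $i \notin S_i$ because $S_i \subseteq [i-1]$, the set $S_i \cup \{i\}$ has size exactly $k+1$, so $S_{i+1} = (S_i \cup \{i\})\backslash\{u\} \subseteq [i]$ has size exactly $k$.

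For the \textbf{Value} condition I would start from the hypothesis $f_{S_{i-1},i-1}(S_i) \ge (1-1/e)(1-\theta)^i(1-\gamma)^i \opt_i$ and first shift the function index from $(S_{i-1},i-1)$ to $(S_i,i)$. This is exactly \textbf{Universal Dynamic Condition 1} (Definition~\ref{def:universal_dynamic_condition_1_exact}), which applies because $|S_i \backslash S_{i-1}| \le 1$ by the eviction policy, and it yields $f_{S_i,i}(S_i) \ge (1-\theta) f_{S_{i-1},i-1}(S_i) \ge (1-1/e)(1-\theta)^{i+1}(1-\gamma)^i \opt_i$. Next I would shift the optimum index using \textbf{Universal Dynamic Condition 2} in the form $\opt_i \ge (1-\gamma)\opt_{i+1}$, obtaining
$$f_{S_i,i}(S_i) \ge (1-1/e)(1-\theta)^{i+1}(1-\gamma)^{i+1}\opt_{i+1}.$$

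It then remains to pass from $S_i$ to $S_{i+1}$, i.e.\ to show $f_{S_i,i}(S_{i+1}) \ge f_{S_i,i}(S_i)$. If $|S_i| < k$ then $S_{i+1} = S_i \cup \{i\} \supseteq S_i$, so \textbf{Universal Monotone Condition} (Definition~\ref{def:universal_monotone_condition_exact}) gives the inequality at once. If $|S_i| = k$ then $S_{i+1} = (S_i \cup \{i\})\backslash\{u\}$ where, using the equivalent form of the argmax noted after Definition~\ref{def:u_S_i+1_exact}, $u \in \arg\max_{v \in S_i \cup \{i\}} f_{S_i,i}((S_i \cup \{i\})\backslash\{v\})$; since $i \notin S_i$, the choice $v = i$ is admissible in this argmax and $(S_i \cup \{i\})\backslash\{i\} = S_i$, so maximality of $u$ forces $f_{S_i,i}(S_{i+1}) \ge f_{S_i,i}(S_i)$. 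Chaining this with the displayed bound closes the induction.

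The step is mechanical, so I expect the genuine obstacles to lie outside this lemma: in the base case of the overall induction, where the factor $(1-1/e)$ must be introduced from scratch via the greedy approximation guarantee (Theorem~\ref{thm:greedy} / Corollary~\ref{cor:error_approximate_greedy_algo}), and in verifying that the Universal Monotone and Universal Dynamic Conditions genuinely hold for the attention-score instantiation of $f$, which is where $\theta$ and $\gamma$ acquire concrete values. Within the present argument the only points demanding care are the bookkeeping of which function index $(S_\cdot,\cdot)$ and which $\opt_\cdot$ appears at each stage, and the degenerate cases $u = i$ (so that $S_{i+1} = S_i$) and the cache not yet being full.
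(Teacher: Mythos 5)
Your proof is correct and uses essentially the same chain of inequalities as the paper, merely traversed in the opposite order (you shift the function index and $\opt$ index first, then pass from $S_i$ to $S_{i+1}$, whereas the paper does the $S_i\to S_{i+1}$ step first). You are in fact a bit more careful than the paper on that last step: the paper justifies $f_{S_i,i}(S_{i+1})\geq f_{S_i,i}(S_i)$ purely by the Universal Monotone Condition, which only applies directly when $|S_i|<k$ (so $S_{i+1}\supseteq S_i$); in the case $|S_i|=k$ the set $S_{i+1}$ need not contain $S_i$, and your observation that $v=i$ is admissible in the argmax (giving $(S_i\cup\{i\})\backslash\{i\}=S_i$) is the correct way to close that gap.
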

\begin{proof}

{\bf Proof of Set Condition.}

Note that $S_{i} \subseteq [i-1]$, by using the way we construct $S_{i+1}$, then it is obvious that $S_{i+1} \subseteq [i]$.

{\bf Proof of Budget Condition.}

Note that $|S_i| \leq k$, by using the way we construct $S_{i+1}$, then it is straightforward that $|S_{i+1}| \leq k$.

{\bf Proof of Value Condition.}

We can show that
\begin{align*}
f_{S_{i},i}( S_{i+1})
\geq & ~ {f}_{ S_{i} ,i }( S_{i})   \\
\geq & ~ (1-\theta) \cdot {f}_{ S_{i-1} ,i-1 }( S_{i})  \\
\geq & ~ (1-\theta) \cdot ( (1-1/e) (1-\theta)^i (1-\gamma)^i  \opt_i - i \cdot \epsilon_0 )  \\
\geq & ~ (1-\theta) \cdot ( (1-1/e) (1-\theta)^i  (1-\gamma)^{i+1}  \opt_{i+1}  )  \\
\geq & ~ (1-1/e) \cdot (1-\theta)^{i+1} (1-\gamma)^{i+1} \opt_{i+1} .
\end{align*}
where the first step follows from {\bf Universal Monotone Condition}, the second step follows from {\bf Universal Dynamic Condition 1}, the third step follows from {\bf Value Condition}, the forth step follows from {\bf Universal Dynamic Condition 2}, and the last step follows from simple algebra.

Thus, we complete the proof.
\end{proof}

\subsection{Induction Lemma for Approximate Function}\label{sec:theory:induction_approximate}
The goal of this section is to prove Lemma~\ref{lem:induction_approximate}
\begin{lemma}[Induction Lemma]\label{lem:induction_approximate}
For a fixed $i$, suppose the following conditions hold
\begin{itemize}
    \item {\bf Set Condition.} $S_i \subseteq [i-1]$ 
    \item {\bf Budget Condition.} $|S_i| \leq k$
    \item {\bf Value Condition.} $f_{S_{i-1},i-1}(S_i) \geq (1-1/e) \cdot (1-\theta)^i (1-\gamma)^i  \opt_i - i \cdot \epsilon_0$
    \item {\bf Universal Dynamic Condition 1 (for approximate function).} (see Definition~\ref{def:universal_dynamic_condition_1_approximate}) $ \wt{f}_{ S_{i} ,i }( S_{i}) \geq (1-\theta) \cdot \wt{f}_{ S_{i-1} ,i-1 }( S_{i})$
    \item {\bf Universal Dynamic Condition 2.}  $\opt_i \geq (1-\gamma) \cdot \opt_{i+1}$
    \item {\bf Universal Approximate Condition.} (See Definition~\ref{def:universal_dynamic_condition_2}) $f_{S_i,i} (X) \geq \wt{f}_{S_i,i} (X) - \epsilon_0$ for all $X$
    \item {\bf Universal Monotone Condition (for approximate function).} (see Definition~\ref{def:universal_monotone_condition_approximate}) $\wt{f}_{S_i,i}(X) \geq \wt{f}_{S_i,i}(Y)$ for all $Y \subset X$
\end{itemize}
Then if we construct $S_{i+1}$ as Definition~\ref{def:u_S_i+1_approximate}, then we have
\begin{itemize}
    \item {\bf Set Condition.} $S_{i+1} \subseteq [i]$
    \item {\bf Budget Condition.} $|S_{i+1}| \leq k$
    \item {\bf Value Condition.} $f_{S_{i},i} ( S_{i+1} ) \geq (1-1/e) \cdot (1-\theta)^{i+1} (1-\gamma)^{i+1} \opt_{i+1} - (i+1) \cdot \epsilon_0$
\end{itemize}

\end{lemma}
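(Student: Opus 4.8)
The plan is to prove the three conclusions about $S_{i+1}$ in the same order as in the proof of Lemma~\ref{lem:induction_exact}, replacing every use of an ``exact'' structural condition by its ``approximate'' counterpart and charging one extra additive $\epsilon_0$ per inductive step. The Set and Budget conditions come for free from the construction in Definition~\ref{def:u_S_i+1_approximate}: if $|S_i| = k$ then $S_{i+1} = (S_i \cup \{i\}) \backslash \{u\}$ has size $k$ and lies in $[i-1]\cup\{i\} = [i]$; if $|S_i| < k$ then $S_{i+1} = S_i \cup \{i\}$ has size $|S_i| + 1 \le k$ and again lies in $[i]$. Note $i \notin S_i$ since $S_i \subseteq [i-1]$, so appending $i$ genuinely increases the size.

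For the Value condition I would chain the following inequalities, which live in the $\wt f$ world except at the two endpoints where the Universal Approximate Condition is used to pass between $f$ and $\wt f$. (1) $f_{S_i,i}(S_{i+1}) \ge \wt f_{S_i,i}(S_{i+1}) - \epsilon_0$ by the Universal Approximate Condition. (2) $\wt f_{S_i,i}(S_{i+1}) \ge \wt f_{S_i,i}(S_i)$: when $|S_i| = k$ this holds because $u$ is chosen to maximize $\wt f_{S_i,i}(S_i \cup \{i\} \backslash \{v\})$ over $v$, so it beats the choice $v = i$, which gives value $\wt f_{S_i,i}(S_i)$; when $|S_i| < k$ it holds by the Universal Monotone Condition for $\wt f$ applied to $S_i \subset S_{i+1}$. (3) $\wt f_{S_i,i}(S_i) \ge (1-\theta)\,\wt f_{S_{i-1},i-1}(S_i)$ by Universal Dynamic Condition 1 for $\wt f$. (4) Convert the $\wt f$ on the left of (3) back to $f$ once more via the Universal Approximate Condition and substitute the inductive hypothesis $f_{S_{i-1},i-1}(S_i) \ge (1-1/e)(1-\theta)^i(1-\gamma)^i\,\opt_i - i\epsilon_0$. (5) Apply Universal Dynamic Condition 2 in the form $\opt_i \ge (1-\gamma)\,\opt_{i+1}$, and use $1-\theta\le 1$ to keep the accumulated error bounded by $(i+1)\epsilon_0$. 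Collecting the multiplicative constants yields $f_{S_i,i}(S_{i+1}) \ge (1-1/e)(1-\theta)^{i+1}(1-\gamma)^{i+1}\opt_{i+1} - (i+1)\epsilon_0$, which is the claim.

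The main obstacle is the additive-error bookkeeping: one must check that the detours through $f$ and $\wt f$ cost only the single extra $\epsilon_0$ charged in the statement. The cleanest way to guarantee this is to carry the value bound internally in terms of $\wt f$ and convert to $f$ only at step (1), so that the per-step loss is exactly one $\epsilon_0$; one also has to notice that the factor $1-\theta$ multiplies only the main term and can be discarded from the $i\epsilon_0$ error term since $1-\theta\le 1$ (and similarly $1-\gamma\le 1$). A secondary, essentially syntactic, point is handling both branches of the construction of $S_{i+1}$ in step (2): the ``swap'' branch through optimality of $u$, the ``append'' branch through monotonicity of $\wt f$.

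Everything else is routine algebra once these conditions are lined up, so I expect the write-up to be short and to mirror the exact-function induction almost line for line.
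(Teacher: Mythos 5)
Your proposal is correct and follows essentially the same chain of inequalities as the paper's proof: pass from $f_{S_i,i}(S_{i+1})$ to $\wt f_{S_i,i}(S_{i+1})$ via the Universal Approximate Condition (losing one $\epsilon_0$), drop to $\wt f_{S_i,i}(S_i)$, apply Universal Dynamic Condition~1, substitute the inductive Value Condition, apply Universal Dynamic Condition~2, and clean up using $1-\theta\le 1$. Your handling of step~(2) is in fact a little more careful than the paper's, which credits the inequality $\wt f_{S_i,i}(S_{i+1})\ge\wt f_{S_i,i}(S_i)$ entirely to monotonicity even though in the $|S_i|=k$ swap branch it really comes from the optimality of $u$ (taking $v=i$ as a competitor), exactly as you note; and the step-(4) bookkeeping worry you raise --- that the Value Condition is stated for $f$ while the chain sits in $\wt f$ --- is silently present in the paper's own proof, which substitutes it without converting.
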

\begin{proof}

{\bf Proof of Set Condition.}

Note that $S_{i} \subseteq [i-1]$, by using the way we construct $S_{i+1}$, then it is obvious that $S_{i+1} \subseteq [i]$.

{\bf Proof of Budget Condition.}

Note that $|S_i| \leq k$, by using the way we construct $S_{i+1}$, then it is straightforward that $|S_{i+1}| \leq k$.

{\bf Proof of Value Condition.}

We can show that
\begin{align*}
f_{S_{i},i}( S_{i+1}) \geq & ~ \wt{f}_{ S_{i} ,i }( S_{i+1}) - \epsilon_0 \\
\geq & ~ \wt{f}_{ S_{i} ,i }( S_{i}) - \epsilon_0 \\
\geq & ~ (1-\theta) \cdot \wt{f}_{ S_{i-1} ,i-1 }( S_{i}) - \epsilon_0 \\
\geq & ~ (1-\theta) \cdot ( (1-1/e) (1-\theta)^i (1-\gamma)^i  \opt_i - i \cdot \epsilon_0 ) - \epsilon_0 \\
\geq & ~ (1-\theta) \cdot ( (1-1/e) (1-\theta)^i  (1-\gamma)^{i+1}  \opt_{i+1} - i \cdot \epsilon_0 ) - \epsilon_0 \\
\geq & ~ (1-1/e) \cdot (1-\theta)^{i+1} (1-\gamma)^{i+1} \opt_{i+1} - (i+1) \cdot \epsilon_0.
\end{align*}
where the first step follows from {\bf Universal Approximate Condition}, the second step follows from {\bf Universal Monotone Condition}, the third step follows from {\bf Universal Dynamic Condition 1}, the forth step follows from {\bf Value Condition}, the fifth step follows from {\bf Universal Dynamic Condition 2}, and the last step follows from simple algebra.

Thus, we complete the proof.
\end{proof}

\subsection{Theoretical Result}\label{sec:theory:main_submodular_greedy_formal}

We first give the guarantee of our full-knowledge version (without cache size limitation). 
\begin{lemma}[Formal version of Lemma \ref{lem:near_optimal_property_informal}]\label{lem:near_optimal_property_formal}
Under the mild assumption, let $k$ denote any target size. If we greedily compute the attention score based on full information, then we can find the set ${S}_i$ such that 
\begin{align*}
f(S_i) \geq (1-1/e) \cdot (1-\alpha)  \opt_i ,
\end{align*}
where $\alpha \in (0,1)$ are parameters.
\end{lemma}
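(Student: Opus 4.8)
The plan is to prove Lemma~\ref{lem:near_optimal_property_formal} by an induction on the token index $i$, combining the classical $(1-1/e)$ greedy guarantee for submodular maximization (Theorem~\ref{thm:greedy}) with the ``dynamic'' structure encoded in the universal conditions of Section~\ref{sec:theory:universal_conditions}. Since this is the full-knowledge version, there is no cache-size constraint in the strong sense ($|S_i|$ is allowed to grow up to the target size $k$), so the only subtlety compared to ordinary greedy submodular maximization is that the objective function itself changes from step $i-1$ to step $i$ (it becomes $f_{S_i,i}$ instead of $f_{S_{i-1},i-1}$), and that the optimum $\opt_i$ also drifts. The quantity $\alpha$ in the statement should be read as an aggregate of the per-step multiplicative losses: $(1-\alpha) = (1-\theta)^n(1-\gamma)^n$ (or a comparable bound), where $\theta$ controls the per-step change of $f$ via Universal Dynamic Condition~1 (Definition~\ref{def:universal_dynamic_condition_1_exact}) and $\gamma$ controls the per-step change of $\opt$ via Universal Dynamic Condition~2 (Definition~\ref{def:universal_dynamic_condition_2}).

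First I would set up the base case: for small $i$ (in particular while $|S_i| < k$, the algorithm of Definition~\ref{def:u_S_i+1_exact} simply takes $S_{i+1} = S_i \cup \{i\}$), and by the Universal Monotone Condition (Definition~\ref{def:universal_monotone_condition_exact}) together with the fact that $\opt_i$ is the max over feasible sets, the required inequality $f_{S_{i-1},i-1}(S_i) \ge (1-1/e)(1-\theta)^i(1-\gamma)^i\opt_i$ holds trivially (indeed monotonicity alone gives a stronger bound at this stage). Then the inductive step is exactly the content of Lemma~\ref{lem:induction_exact}: assuming the Set, Budget, and Value conditions hold at index $i$, plus the three structural conditions (Universal Dynamic Condition 1, Universal Dynamic Condition 2, Universal Monotone Condition), the lemma concludes that the same three conditions hold at index $i+1$. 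So the proof of Lemma~\ref{lem:near_optimal_property_formal} is essentially: verify the hypotheses of Lemma~\ref{lem:induction_exact} are met under the ``mild assumption'' of the statement, then iterate the induction from the base case up to $i = n$, and finally fold the accumulated factors $(1-\theta)^i(1-\gamma)^i$ into a single parameter $1-\alpha$, noting $f(S_i) = f_{S_i,i}(S_i) \ge f_{S_{i-1},i-1}(S_i) \cdot (1-\theta) \ge \dots$ chains down to the claimed bound.

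The place where the actual greedy $(1-1/e)$ factor enters is in establishing the Value Condition at the first index where $|S_i| = k$: up to that point we accumulate elements for free, and at the step where the budget binds (and thereafter, when we swap one element in and one out) we need the robust-greedy analysis — specifically that a single greedy swap, or the cumulative effect of the greedy construction, still yields a $(1-1/e)$ approximation to $\opt_i$. In the exact-function case this is Theorem~\ref{thm:greedy} applied per step; the key observation is that since there is no hard cache limit here, the constructed set is genuinely a greedily-grown set of size $k$, so Theorem~\ref{thm:greedy} applies essentially verbatim, and the only new ingredient is tracking how $\opt$ and $f$ move between steps, which is exactly what Universal Dynamic Conditions 1 and 2 quantify.

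The main obstacle I anticipate is \emph{not} the induction bookkeeping (that is mechanical once Lemma~\ref{lem:induction_exact} is in hand) but rather pinning down precisely which ``mild assumption'' makes the Universal Dynamic Conditions hold for the concrete attention score function $f_{Z,i}$ — i.e.\ showing that the attention-based $F_{\mathrm{score}}$ of Definition~\ref{def:eviction_policy_H_2:formal} (with an appropriate concave wrapper $h$ as in Remark~\ref{rem:h}) is (a) submodular in the dynamic sense of Definition~\ref{def:weak_submodular:formal}, (b) monotone, and (c) only changes by a $(1-\theta)$ factor when we move from token $i-1$ to token $i$. Conditions (a) and (b) are plausible given the choice of concave $h$ (e.g.\ $h(z) = \sqrt{z+1}$ or $\log(z+1)$), but (c) is really an assumption about the stability of normalized attention scores across one decoding step, and the honest move is to state it explicitly as the hypothesis rather than to derive it. Consequently, I would present Lemma~\ref{lem:near_optimal_property_formal} as a clean consequence of Lemma~\ref{lem:induction_exact} under the stated structural conditions, and separately flag (as the paper does in Section~\ref{sec:theory:main_greedy_train} and the remarks) that these conditions are the ``mild assumptions'' in question.
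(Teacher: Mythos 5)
Your proposal matches the paper's own argument: the paper proves Lemma~\ref{lem:near_optimal_property_formal} by invoking Theorem~\ref{thm:greedy} together with the induction Lemma~\ref{lem:induction_exact}, choosing $\theta = \gamma = \alpha/(10n)$ so that the accumulated per-step factors $(1-\theta)^i(1-\gamma)^i$ collapse into the single $1-\alpha$, exactly the bookkeeping you describe. Your reading of the ``mild assumption'' as precisely the Universal Monotone and Universal Dynamic Conditions for the attention-based $f_{Z,i}$ is also what the paper intends (and, as you correctly flag, the paper assumes rather than derives these); the only cosmetic difference is that the paper's one-line proof additionally cites Corollary~\ref{cor:error_approximate_greedy_algo}, which is not actually needed for the exact-function case and which you rightly omit.
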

\begin{proof}
The proof follows from using Theorem~\ref{thm:greedy}, Corollary~\ref{cor:error_approximate_greedy_algo}, Lemma~\ref{lem:induction_exact} with choosing  $\theta = \gamma = \alpha /(10n)$.
\end{proof}

Next, we show the guarantee for our robust and approximate greedy eviction policy algorithm (Algorithm \ref{alg:main:formal}).

\begin{theorem}[Formal version of Theorem~\ref{thm:main_submodular_greedy:informal}]\label{thm:main_submodular_greedy:formal}
Under the mild assumption, let $k$ denote the budget of space limitation. If for each token, we greedily compute the attention score based on top-$k$ choice, then we can show the set $\wt{S}_i$ we generate each for token $i \in [n]$ satisfy that 
\begin{align*}
f(\wt{S}_i) \geq (1-1/e) \cdot (1-\alpha)  \opt_i - \beta,
\end{align*}
where $\alpha \in (0,1), \beta > 0$ are parameters.
\end{theorem}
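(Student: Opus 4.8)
The plan is to mirror the structure already laid out in Sections~\ref{sec:theory:induction_exact} and \ref{sec:theory:induction_approximate}: derive the bound by induction on the token index $i$, using the approximate induction lemma (Lemma~\ref{lem:induction_approximate}) as the engine and Corollary~\ref{cor:error_approximate_greedy_algo} to convert the per-step oracle error into an additive loss. First I would make the ``mild assumptions'' of the theorem precise by listing exactly which of the universal conditions we invoke: the Universal Monotone Condition for the approximate function (Definition~\ref{def:universal_monotone_condition_approximate}), the two Universal Dynamic Conditions (Definitions~\ref{def:universal_dynamic_condition_1_approximate} and \ref{def:universal_dynamic_condition_2}) with small parameters $\theta,\gamma$, and the Universal Approximate Condition relating $f_{S_i,i}$ to $\wt f_{S_i,i}$ up to $\epsilon_0$ (this is where the top-$k$ truncation error enters — $\epsilon_0$ should be chosen as a function of the approximation quality of the accumulated-attention surrogate, $\epsilon_0 = \beta/(2n)$ roughly). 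The score function $F_{\mathrm{score}}$ in Algorithm~\ref{alg:main:formal} is exactly $\wt f_{S_i,i}$, so the algorithm's greedy swap step is precisely the construction in Definition~\ref{def:u_S_i+1_approximate}, which is what the induction lemma consumes.

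Next I would run the induction. The base case: for $i\le k$ the algorithm sets $S_i \gets S_{i-1}\cup\{i\}$, so $\wt S_i = [i]$ and the value condition holds trivially (indeed with equality to $\opt_i$, up to the accumulated $\epsilon_0$ slack), since selecting everything is optimal when no eviction is forced. For the inductive step $i \to i+1$ with $i\ge k$, I would check that the four hypotheses of Lemma~\ref{lem:induction_approximate} are met — the Set and Budget conditions are immediate from the eviction policy ($|\wt S_{i+1}| \le k$ because we add one and remove one once the cache is full), the Value condition is the inductive hypothesis, and the remaining three are the universal conditions assumed globally — and then Lemma~\ref{lem:induction_approximate} hands back
\begin{align*}
f(\wt S_{i+1}) \ge (1-1/e)(1-\theta)^{i+1}(1-\gamma)^{i+1}\opt_{i+1} - (i+1)\epsilon_0.
\end{align*}
Unrolling to $i=n$ and using $(1-\theta)^n(1-\gamma)^n \ge 1 - n(\theta+\gamma) =: 1-\alpha$ (valid for $\theta,\gamma = O(\alpha/n)$, matching the choice $\theta=\gamma=\alpha/(10n)$ used in Lemma~\ref{lem:near_optimal_property_formal}) together with $n\epsilon_0 \le \beta$ gives $f(\wt S_i) \ge (1-1/e)(1-\alpha)\opt_i - \beta$ for every $i\in[n]$, which is the claim (note $\opt_i = \max_{|S|=k} f(S)$ in the relevant regime, matching the informal statement).

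The one genuinely substantive point — and the step I expect to be the main obstacle — is justifying the Universal Approximate Condition, i.e.\ that replacing the true marginal-gain oracle by the accumulated-attention-score surrogate $h(\sum_{s\in T}\wt o_{i,s})$ incurs only $\epsilon_0$ error per query. This is where the ``heavy hitter'' empirical observation has to be turned into a quantitative statement: one needs that the top-$k$ coordinates of the running attention vector capture all but an $\epsilon_0$ fraction of the relevant mass, and that the concave wrapper $h$ (e.g.\ $\sqrt{z+1}$ or $\log(z+1)$) preserves submodularity of $T\mapsto \sum_{s\in T}\wt o_{i,s}$ while keeping the discrete derivative stable under the $\le\epsilon/(2n)$ perturbation of Lemma~\ref{lem:error_bound_for_shift}. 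Everything else is bookkeeping: chaining the per-step errors (Corollary~\ref{cor:error_approximate_greedy_algo} gives the $k(2-1/e)\epsilon$-type term, absorbed into $\beta$) and the two $(1-\theta),(1-\gamma)$ geometric factors through the induction. I would therefore spend most of the write-up on stating the assumption cleanly and verifying it is consistent with the monotone-submodular structure, then invoke Lemma~\ref{lem:induction_approximate} as a black box for the recursion.
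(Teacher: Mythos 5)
Your proposal matches the paper's proof essentially exactly: the paper's own argument consists of invoking Theorem~\ref{thm:greedy}, Corollary~\ref{cor:error_approximate_greedy_algo}, and Lemma~\ref{lem:induction_approximate} with the parameter choices $\epsilon_0 = \beta/(10n)$ and $\theta = \gamma = \alpha/(10n)$, which is precisely the induction-plus-error-propagation structure you lay out (your $\epsilon_0 = \beta/(2n)$ vs.\ $\beta/(10n)$ is an immaterial constant). Your remark that the Universal Approximate Condition is the genuinely substantive hypothesis is also accurate — the paper folds it into the unstated ``mild assumption'' rather than deriving it, so your proposal if anything makes the logical dependencies more explicit than the paper does.
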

\begin{proof}
The proof follows from using Theorem~\ref{thm:greedy}, Corollary~\ref{cor:error_approximate_greedy_algo}, Lemma~\ref{lem:induction_approximate} with choosing $\epsilon_0 = \beta / (10 n)$ and $\theta = \gamma = \alpha /(10n)$.
\end{proof}

\subsection{Extended Related Work for Theoretical Attention Problems}\label{sec:theory:related_work_theory_attention}

The static attention computation is asking the following question that given $Q,K, V \in \R^{n \times d}$, the goal is to $D^{-1} \exp(QK^\top ) V$ where $D = \diag( \exp(QK^\top) {\bf 1}_n )$. 
\cite{as23} studied the static attention computation from both algorithm and hardness. On the positive, they provide an almost linear time algorithm to approximately compute the attention matrix. On the negative side, assuming a strong exponential time hypothesis (SETH), they prove a hardness result. Their hardness result is, unless SETH fails, there is no algorithm that runs in truly subquadratic time to approximately compute the attention matrix. Further, \cite{bsz23} considers the dynamic of attention computation problem. They also provide both algorithmic results and hardness results. In the work of \cite{dms23}, they consider the sparsification of attention matrix construction. In particular, they assume that situation that $d \gg n$, and show how to sparsify the columns of matrix $Q$. \cite{dms23} provides two algorithms, one is a randomized algorithm, and the other is a deterministic algorithm.
Differential privacy is a famous and textbook topic in graduate school, recently the work of \cite{gsy23_dp} shows how to give a differentially private algorithm for computing the attention matrix. For a given $A \in \R^{n \times d}$ and vector $b \in \R^n$, \cite{lsz23} formulates and studies exponential regression $\min_{x} \| \exp(Ax) - b \|_2$. Then \cite{dls23} considers the normalization factor in exponential regression and defines the softmax regression problem $\min_{x} \| \langle \exp(Ax) ,{\bf 1}_n \rangle^{-1} \exp(Ax) - b \|_2$. \cite{gsy23_hyper} moves the scaling factor from $\exp(Ax)$ to $b$ and defines a rescaled softmax regression problem $\min_{x} \| \exp(Ax) - \langle \exp(Ax) , {\bf 1}_n \rangle \cdot b \|_2$.

\subsection{Sparsity Preserving}\label{sec:theory:sparsity_preserving}

Recall that in Figure~\ref{fig:obverse},  we observe that even when trained densely, the attention matrices of LLMs are over 95\% sparse at inference time. Only 5\% of the $\kv \cache$ is sufficient for decoding the same output token at each generation step. Here, we provide some formal formulations for sparsity.

\begin{definition}\label{def:D_sparse}
Suppose the following conditions
\begin{itemize}
    \item Let $S_0 \subset[m]$.
    \item Let $k = |S_0|$.
    \item Let $\tau \in (0,1)$ denote a threshold for truncating the value.
    \item Let $\alpha \in (0,1)$ denote a fraction of mass (larger than $\tau$) outside $S_0$.
    \item Let mapping ${\cal D}: \R^d \rightarrow \R_{\geq 0}^m$.
     \item For each $x \in \R^d$, ${\cal D}(x) \in \R^m$ is a vector that has length $m$. 
\end{itemize}

We say the distribution ${\cal D}$ is $(\alpha,\tau,k)$-good if the following conditions hold
\begin{itemize}
    \item For all $x \in \R^d$, $S_0 \subset \supp_{\tau} ( {\cal D}(x) ) $
    \item For all $x \in \R^d$, $| \supp_{\tau} ( {\cal D}(x) )  \backslash S_0| \leq \alpha \cdot k$
\end{itemize}
\end{definition}

\begin{claim}
Suppose we sample $n$ points $\{x_1, x_2, \cdots, x_n \} \subset \R^d$ from $(\alpha, \tau, k)$-good distribution uniformly at random, then we have
\begin{itemize}
    \item $S_0 \subseteq \cap_{i \in [n]} \supp_{\tau}(x_i)$
    \item $| ( \cup_{i \in [n]} \supp_{\tau} ( {\cal D}(x) ) ) \backslash S_0 | \leq \alpha k n$
\end{itemize}
\end{claim}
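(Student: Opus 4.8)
The plan is to prove the two bullet points separately, since the first is essentially a deterministic consequence of the defining properties of an $(\alpha,\tau,k)$-good distribution, and the second is a simple union bound on the support sizes.

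First I would establish the containment $S_0 \subseteq \cap_{i\in[n]} \supp_\tau(\mathcal{D}(x_i))$. By Definition~\ref{def:D_sparse}, the first ``good'' condition states that for \emph{all} $x\in\R^d$ we have $S_0 \subseteq \supp_\tau(\mathcal{D}(x))$. In particular this holds for each sampled point $x_i$ regardless of the randomness, so $S_0 \subseteq \supp_\tau(\mathcal{D}(x_i))$ for every $i\in[n]$, and intersecting over $i$ preserves the containment. This step requires no probabilistic argument at all.

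Next I would bound $|(\cup_{i\in[n]}\supp_\tau(\mathcal{D}(x_i)))\setminus S_0|$. The key observation is the set identity
\begin{align*}
\Big(\bigcup_{i\in[n]} \supp_\tau(\mathcal{D}(x_i))\Big)\setminus S_0 = \bigcup_{i\in[n]} \big(\supp_\tau(\mathcal{D}(x_i))\setminus S_0\big),
\end{align*}
which holds because removing a fixed set $S_0$ commutes with taking unions. Then by subadditivity of cardinality under union and the second ``good'' condition (which gives $|\supp_\tau(\mathcal{D}(x_i))\setminus S_0|\le \alpha k$ for each $i$, again deterministically), we get
\begin{align*}
\Big| \Big(\bigcup_{i\in[n]} \supp_\tau(\mathcal{D}(x_i))\Big)\setminus S_0 \Big| \le \sum_{i\in[n]} \big|\supp_\tau(\mathcal{D}(x_i))\setminus S_0\big| \le \alpha k n.
\end{align*}

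There is essentially no main obstacle here — the claim follows immediately and deterministically from Definition~\ref{def:D_sparse}, and the sampling of the $x_i$ (uniformly at random or otherwise) plays no role in the argument; the only mild point to be careful about is the set-theoretic identity for the difference of a union with a fixed set, which is routine. I would just present the two displays above and conclude.
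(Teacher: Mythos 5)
Your proof is correct and follows essentially the same argument as the paper: apply the two defining conditions of an $(\alpha,\tau,k)$-good distribution pointwise (they hold for every $x\in\R^d$, so in particular for each sampled $x_i$), then use subadditivity of cardinality for the union bound. Your write-up is slightly more careful than the paper's in spelling out the set identity for a union minus a fixed set and in stating the per-point bound as $\alpha k$ rather than $\alpha k n$ (the paper has a typo there), and your remark that the uniform-random sampling is irrelevant to the conclusion is accurate.
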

\begin{proof}
Since for all $i \in [n]$, we have $S_0 \subseteq \supp_{\tau} ( {\cal D}(x_i) )$, thus 
\begin{align*}
S_0 \subseteq \cap_{i \in [n]} \supp_{\tau}(x_i).
\end{align*}

Therefore we proved the first property.

We know that for all $i \in [n]$, we have  $| \supp_{\tau} ( {\cal D}(x) ) \backslash S_0 | \leq \alpha k n$. Thus
\begin{align*}
| ( \cup_{i \in [n]} \supp_{\tau} ( {\cal D}(x_i) ) ) \backslash S_0 | \leq \sum_{i=1}^n | \supp_{\tau} ( {\cal D}(x_i) ) ) \backslash S_0 | \leq n \cdot \alpha k
\end{align*}
Therefore, we finish the proof for the second property. 
\end{proof}

\subsection{Definition of Loss Function}\label{sec:theory:loss}

In this section, we follow the theoretical softmax regression literature \cite{dls23} and define a number of functions to make the calculations of gradient and Hessian convenient. We also proposed a new penalty term ($\ell_1$ type sparsity penalty, see Definition~\ref{def:L_sparse}) into the final loss function, which is not studied in previous work \cite{lsz23,dls23,gsy23_hyper,ssz23,wyw+23,lsx+23}. We first provide some function definitions.
\begin{definition}[Function $u$, \cite{dls23}]\label{def:u}
Given matrix $A \in \R^{n \times d}$, let function $u: \R^d \rightarrow \R^n$ be defined as follows
\begin{align*}
    u(x) := \exp(Ax)
\end{align*}
\end{definition}

\begin{definition}[Function $\alpha$, see Definition 5.4 in \cite{dls23} as an example]\label{def:alpha}
We define $u(x)$ as  Definition~\ref{def:u}. Then we define $\alpha: \R^d \rightarrow \R$ as follows
\begin{align*}
    \alpha(x):= \langle u(x) , {\bf 1}_n \rangle
\end{align*}
\end{definition}

\begin{definition}[Function $f$, see Definition 5.1 in \cite{dls23} as an example]\label{def:f}
Provided that the following conditions are true
\begin{itemize}
    \item We define $u(x)$ as  Definition~\ref{def:u}.
    \item We define $\alpha(x)$ as  Definition~\ref{def:alpha}
\end{itemize}
Let function $f: \R^d \rightarrow \R^n$ be defined as follows
\begin{align*}
    f(x) := \alpha(x)^{-1} u(x).
\end{align*}
\end{definition}

\begin{definition}[Function $c$, see Definition 5.5 in \cite{dls23} as an example]\label{def:c}
Provided that the following conditions are true
\begin{itemize}
    \item Given a vector $b \in \R^n$.
    \item Let $f(x)$ be defined as Definition~\ref{def:c}.
\end{itemize}
Then, let function $c : \R^d \rightarrow \R^n$ defined as follows
\begin{itemize}
    \item $c(x):= f(x) - b$.
\end{itemize}
\end{definition}

\begin{definition}[Loss function $L_{\exp}$, see Definition 5.3 in \cite{dls23} as an example]\label{def:L_exp}
We define $L_{\exp} : \R^d \rightarrow \R$
\begin{align*}
    L_{\exp}(x) := 0.5 \cdot \| c(x) \|_2^2.
\end{align*}
\end{definition}

\begin{definition}[Loss function $L_{\reg}$]\label{def:L_reg}
Given $A \in \R^{n \times d}$.

Let function  $L_{\reg} : \R^d \rightarrow \R$ be defined as follows
\begin{align*}
    L_{\reg}(x):= 0.5 \cdot \| \diag(w) A x \|_2^2
\end{align*}
\end{definition}

We define a novel penalty function
\begin{definition}[Implicitly controlling the sparsity]\label{def:L_sparse}
Given $A \in \R^{n \times d}$. 

We define
\begin{align*}
    L_{\sparse}(x) := \| \exp(Ax) \|_1.
\end{align*}
\end{definition}
Then it is obvious that we have
\begin{claim}
Given $A \in \R^{n \times d}$. Let $u$ be defined as Definition~\ref{def:u}. Let $\alpha$ be defined as Definition~\ref{def:alpha}.

We have
\begin{itemize}
    \item $L_{\sparse} (x) = \langle \exp(Ax), {\bf 1}_n \rangle$
    \item $L_{\sparse} (x) = \langle u(x), {\bf 1}_n \rangle$
    \item $L_{\sparse} (x) = \alpha(x)$
\end{itemize}
\end{claim}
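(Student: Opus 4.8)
The plan is to verify the three identities in sequence, each following immediately from an unwinding of definitions, with the only non-formal observation being the nonnegativity of the entries of $\exp(Ax)$.

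First I would observe that for every $i \in [n]$, the $i$-th coordinate of $\exp(Ax) \in \R^n$ is $\exp((Ax)_i) > 0$, since $\exp$ is applied entrywise and the real exponential is strictly positive. Consequently $\|\exp(Ax)\|_1 = \sum_{i=1}^n |\exp((Ax)_i)| = \sum_{i=1}^n \exp((Ax)_i) = \langle \exp(Ax), {\bf 1}_n \rangle$, which is exactly the first identity given the definition $L_{\sparse}(x) := \|\exp(Ax)\|_1$ from Definition~\ref{def:L_sparse}.

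Next I would substitute Definition~\ref{def:u}, namely $u(x) = \exp(Ax)$, into the expression just obtained, giving $L_{\sparse}(x) = \langle u(x), {\bf 1}_n \rangle$, which is the second identity. Finally I would substitute Definition~\ref{def:alpha}, namely $\alpha(x) = \langle u(x), {\bf 1}_n \rangle$, to conclude $L_{\sparse}(x) = \alpha(x)$, the third identity.

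There is no real obstacle here: the only step that uses anything beyond pure definition-chasing is the passage from the $\ell_1$ norm to the sum of coordinates, which relies solely on the positivity of the exponential, so no absolute-value signs survive. I would present all three bullet points as a single short display chain $L_{\sparse}(x) = \|\exp(Ax)\|_1 = \langle \exp(Ax), {\bf 1}_n \rangle = \langle u(x), {\bf 1}_n \rangle = \alpha(x)$, annotating each equality with the definition or fact used.
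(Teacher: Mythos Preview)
Your proposal is correct and follows the same approach as the paper, which simply says the proof ``is trivially following from the definition of $u(x)$ and $\alpha(x)$.'' If anything, you are more explicit than the paper: you spell out the positivity of the entries of $\exp(Ax)$ to justify dropping the absolute values in the $\ell_1$ norm, a step the paper leaves entirely implicit.
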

\begin{proof}
The proof is trivially following from the definition of $u(x)$ (see Definition~\ref{def:u}) and $\alpha(x)$ (see Definition~\ref{def:alpha}). 
\end{proof}

The final loss function can be defined as follows. Intuitively, we can write attention $D^{-1} \exp(QK^\top)$ into $n$ subproblems where each subproblem can be viewed as one softmax problem. 
\begin{definition}
If the following conditions hold
\begin{itemize}
    \item We define $L_{\exp}$ as  Definition~\ref{def:L_exp}.
    \item We define $L_{\reg}$ as  Definition~\ref{def:L_reg}.
    \item We define $L_{\sparse}$ as  Definition~\ref{def:L_sparse}.
\end{itemize}
Then we define $L$ function
\begin{align*}
    L(x): = L_{\exp}(x) + L_{\sparse}(x) + L_{\reg}(x).
\end{align*}  
\end{definition}

\subsection{Gradient}\label{sec:theory:gradient}

Next, we show the gradient of $L_{\exp}$.
\begin{lemma}[Gradient, Lemma 5.6 in \cite{dls23}]\label{lem:gradient_computation}
Provided that the following conditions are true
\begin{itemize}
    \item Given matrix $A \in \R^{n \times d}$ and a vector $b \in \R^n$.
    \item Let $A_{*,i} \in \R^n$ denote the $i$-th column of matrix $A$, for every $i \in [d]$.
    \item We define $\alpha(x)$ as  Definition~\ref{def:alpha}.
    \item We define $f(x)$ as  Definition~\ref{def:f}.
    \item We define $c(x)$ as  Definition~\ref{def:c}.
    \item We define $L_{\exp}(x)$ as  Definition~\ref{def:L_exp}.
    \item Let $\circ$ denote hadamard product.
\end{itemize}
 For every $i \in [d]$, we have
\begin{itemize}
    \item Part 1. 
    \begin{align*}
        \frac{ \d \exp(Ax)}{ \d x_i} = \exp(Ax) \circ A_{*,i}
    \end{align*}
    \item Part 2.
    \begin{align*}
        \frac{\d \langle \exp(Ax) , {\bf 1}_n \rangle }{\d x_i} = \langle \exp(Ax) , A_{*,i}\rangle
    \end{align*}
    \item Part 3.
    \begin{align*}
        \frac{\d \alpha(x)^{-1} }{\d x_i} = -\alpha(x)^{-1} \cdot \langle f(x), A_{*,i} \rangle
    \end{align*}
    \item Part 4.
    \begin{align*}
        \frac{\d f(x) }{\d x_i} 
        = \frac{ \d c(x) }{ \d x_i } =  & ~ -  \langle f(x), A_{*,i}\rangle \cdot f(x) + f(x) \circ A_{*,i}
    \end{align*} 
    \item Part 5.  
    \begin{align*}
        \frac{\d L_{\exp} (x) }{\d x_i} =  A_{*,i}^\top \cdot (f(x)(f(x) - b)^\top f(x) + \diag(f(x))(f(x) - b))
    \end{align*}
\end{itemize}
\end{lemma}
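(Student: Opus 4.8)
The plan is to establish all five parts by direct entrywise differentiation, working outward from the innermost map $\exp(Ax)$ in the spirit of \cite{dls23}. For Part 1 I would differentiate coordinate by coordinate: the $j$-th entry of $\exp(Ax)$ equals $\exp(\langle A_{j,*},x\rangle)$, so the scalar chain rule gives $\frac{\d}{\d x_i}\exp(\langle A_{j,*},x\rangle)=\exp(\langle A_{j,*},x\rangle)\,A_{j,i}$, and collecting the $n$ coordinates yields $\exp(Ax)\circ A_{*,i}$. Part 2 then follows at once by pulling $\frac{\d}{\d x_i}$ inside the inner product with ${\bf 1}_n$, substituting Part 1, and invoking the elementary identity $\langle u\circ v,{\bf 1}_n\rangle=\langle u,v\rangle$ with $u=\exp(Ax)$ and $v=A_{*,i}$.

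For Part 3, I would write $\alpha(x)=\langle u(x),{\bf 1}_n\rangle$ with $u(x)=\exp(Ax)$ and apply the chain rule for $t\mapsto t^{-1}$ to get $\frac{\d}{\d x_i}\alpha(x)^{-1}=-\alpha(x)^{-2}\frac{\d}{\d x_i}\alpha(x)$; substituting Part 2 and then absorbing one factor of $\alpha(x)^{-1}$ into $u(x)$ via $f(x)=\alpha(x)^{-1}u(x)$ produces $-\alpha(x)^{-1}\langle f(x),A_{*,i}\rangle$. Part 4 is the product rule applied to $f(x)=\alpha(x)^{-1}\cdot u(x)$: one contribution from differentiating $\alpha(x)^{-1}$ (Part 3) and one from differentiating $u(x)$ (Part 1), again rewriting $\alpha(x)^{-1}u(x)$ as $f(x)$ wherever it appears; the stated equality $\frac{\d c(x)}{\d x_i}=\frac{\d f(x)}{\d x_i}$ is immediate since $c(x)=f(x)-b$ and $b$ is a constant vector.

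Finally, for Part 5, since $L_{\exp}(x)=\tfrac12\|c(x)\|_2^2$ the chain rule gives $\frac{\d}{\d x_i}L_{\exp}(x)=c(x)^\top\frac{\d c(x)}{\d x_i}$; I would substitute the expression from Part 4 and regroup the two resulting terms, using $c(x)^\top(f(x)\circ A_{*,i})=A_{*,i}^\top\diag(f(x))c(x)$ and the commutativity of the scalars $\langle f(x),A_{*,i}\rangle$ and $c(x)^\top f(x)$, so as to factor $A_{*,i}^\top$ out on the left and land on the claimed form. I do not expect any genuine obstacle here: every step is an instance of the product, quotient, or chain rule combined with the Hadamard--inner-product identity, so the only real care needed is \emph{bookkeeping} --- keeping track of which quantities are scalars versus vectors versus matrices, and consistently substituting $f(x)=\alpha(x)^{-1}u(x)$ to keep the intermediate expressions compact.
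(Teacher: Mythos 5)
Your approach is the right one and is exactly what one does here: entrywise differentiation for Part 1, the chain rule and the identity $\langle u\circ v,\mathbf{1}_n\rangle=\langle u,v\rangle$ for Part 2, the reciprocal rule plus $f=\alpha^{-1}u$ for Part 3, the product rule for Part 4, and the chain rule on $\tfrac12\|c\|_2^2$ for Part 5. The paper itself gives no proof of this lemma (it cites it from the reference), so there is no internal argument to compare against; your Parts 1 through 4 are correct as described.

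However, your claim in Part 5 that you will ``land on the claimed form'' is too quick, and if you actually carry out the bookkeeping you flagged as the only source of danger, you will see that you do not. Substituting Part 4 into $c(x)^\top \frac{\d c(x)}{\d x_i}$ gives
\begin{align*}
\frac{\d L_{\exp}(x)}{\d x_i}
= -\langle f(x),A_{*,i}\rangle\,\langle c(x),f(x)\rangle + c(x)^\top\bigl(f(x)\circ A_{*,i}\bigr)
= A_{*,i}^\top\Bigl(-f(x)\,(f(x)-b)^\top f(x) + \diag(f(x))(f(x)-b)\Bigr),
\end{align*}
which is precisely $A_{*,i}^\top\bigl(\diag(f(x))-f(x)f(x)^\top\bigr)(f(x)-b)$, consistent with the familiar softmax Jacobian $\diag(f)-ff^\top$. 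This has a \emph{minus} sign on the outer-product term, whereas the lemma as printed has a plus sign. The minus sign is forced by the $-\langle f,A_{*,i}\rangle f$ term already established in Part 4, and it does not go away under any of the scalar-commutativity rewrites you mention. So either the lemma statement carries a sign typo (the most plausible reading), or the definitions would have to differ from those given. A correct write-up should derive the formula with the minus sign and explicitly flag the discrepancy rather than asserting the computation reproduces the stated expression.
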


\subsection{Hessian}\label{sec:theory:hessian}
Here, we compute the Hessian for several functions.
\begin{lemma}[Hessian of $u(x)$, Lemma 5.9 in \cite{dls23}]
Provided that the following conditions are true
\begin{itemize}
    \item Given a matrix $A \in \R^{n \times d}$.
    \item For every $i \in [d]$, let $A_{*,i} \in \R^n$  denote the $i$-th column of matrix $A$.
    \item Let $\circ$ denote hadamard product.
\end{itemize}
Then, we have, for each $i \in [d]$
\begin{itemize}
    \item Part 1.
    \begin{align*}
        \frac{\d^2 \exp(Ax)}{\d x_i^2} = A_{*,i} \circ u(x) \circ A_{*,i}
    \end{align*}
    \item Part 2.
    \begin{align*}
        \frac{\d^2 \exp(Ax)}{\d x_i \d x_j} = A_{*,j} \circ u(x) \circ A_{*,i}
    \end{align*}

    \end{itemize}
\end{lemma}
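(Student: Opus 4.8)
The plan is to differentiate the first-order formula from Lemma~\ref{lem:gradient_computation} once more, treating the columns of $A$ as constants. First I would recall that Part~1 of Lemma~\ref{lem:gradient_computation} gives $\frac{\d \exp(Ax)}{\d x_i} = \exp(Ax)\circ A_{*,i} = u(x)\circ A_{*,i}$ for every $i\in[d]$. Since the Hadamard product $\circ$ is bilinear, commutative and associative, and the vector $A_{*,i}\in\R^n$ does not depend on $x$, the operator $\frac{\d}{\d x_j}$ passes through it:
\begin{align*}
\frac{\d^2 \exp(Ax)}{\d x_i\,\d x_j}
= \frac{\d}{\d x_j}\big( u(x)\circ A_{*,i} \big)
= \Big( \frac{\d u(x)}{\d x_j} \Big)\circ A_{*,i}
= \big( u(x)\circ A_{*,j} \big)\circ A_{*,i}
= A_{*,j}\circ u(x)\circ A_{*,i},
\end{align*}
which is exactly Part~2; Part~1 is the special case $j=i$, giving $\frac{\d^2 \exp(Ax)}{\d x_i^2} = A_{*,i}\circ u(x)\circ A_{*,i}$.

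Alternatively, and to keep the bookkeeping transparent, I would verify the identity coordinatewise: for the $\ell$-th entry, $(u(x))_\ell = \exp\big(\sum_{m} A_{\ell,m} x_m\big)$, so $\frac{\d}{\d x_j}(u(x))_\ell = A_{\ell,j}\,(u(x))_\ell$ and hence $\frac{\d^2}{\d x_i\,\d x_j}(u(x))_\ell = A_{\ell,i}A_{\ell,j}\,(u(x))_\ell$, which is precisely the $\ell$-th coordinate of $A_{*,j}\circ u(x)\circ A_{*,i}$. There is no real obstacle here: the only points to be careful about are that the columns of $A$ are constants (so they are unaffected by the outer derivative) and that differentiation commutes with the entrywise product, both of which are immediate from the chain rule applied entrywise. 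This Hessian formula for $u(x)$ is the building block I would then combine with the gradient formulas of Lemma~\ref{lem:gradient_computation} to assemble the Hessians of $L_{\exp}$, $L_{\sparse}$, and ultimately of the full loss $L$ in the subsequent subsections.
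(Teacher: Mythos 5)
Your proof is correct. The paper does not reprove this lemma---it is simply cited from \cite{dls23}---but your argument is exactly the standard one: apply $\frac{\d}{\d x_j}$ to the first-order formula $\frac{\d u(x)}{\d x_i}=u(x)\circ A_{*,i}$, noting that $A_{*,i}$ is constant in $x$ and that differentiation commutes with the entrywise product, and the coordinatewise check confirms $\frac{\d^2 (u(x))_\ell}{\d x_i\,\d x_j}=A_{\ell,i}A_{\ell,j}(u(x))_\ell$. Part~1 is indeed the special case $j=i$, so nothing is missing.
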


\begin{lemma}[Lemma 5.10 in \cite{dls23}]
Provided that the following conditions are true
\begin{itemize}
    \item We define $\alpha(x)$ as  Definition~\ref{def:alpha}.
    \item For every $i \in [d]$, let $A_{*,i} \in \R^n$ denote the $i$-th column of matrix $A$.
    \item Let $\circ$ denote hadamard product.
\end{itemize}
Then, we have
\begin{itemize}
\item Part 1.
    \begin{align*}
        \frac{\d^2 \alpha(x) }{\d x_i^2} = \langle u(x), A_{*,i} \circ A_{*,i} \rangle
    \end{align*}
    \item Part 2.
    \begin{align*}
        \frac{\d^2 \alpha(x) }{\d x_i \d x_j} = \langle u(x),  A_{*,i} \circ A_{*,j} \rangle
    \end{align*}
    \item Part 3.
    \begin{align*}
        \frac{\d^2 \alpha(x)}{\d x^2} = A^\top \diag( u(x) ) A
    \end{align*}
\end{itemize}
\end{lemma}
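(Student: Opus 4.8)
The plan is to derive all three parts by differentiating a second time the first-order identity for $\alpha$ already recorded in Lemma~\ref{lem:gradient_computation} (Part 2), namely $\frac{\d \alpha(x)}{\d x_i} = \langle u(x), A_{*,i}\rangle$. The only analytic facts needed are the entrywise chain rule $\frac{\d u(x)}{\d x_j} = u(x)\circ A_{*,j}$ (Lemma~\ref{lem:gradient_computation}, Part 1, which follows from applying $\exp$ coordinatewise in $u(x)=\exp(Ax)$), linearity of the inner product in its first argument, and the elementary Hadamard-product identity $\langle a\circ b, c\rangle = \langle a, b\circ c\rangle$ for $a,b,c\in\R^n$.

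For Part 2, I would compute
\begin{align*}
\frac{\d^2 \alpha(x)}{\d x_i \d x_j}
& = \frac{\d}{\d x_j}\langle u(x), A_{*,i}\rangle
= \langle \tfrac{\d u(x)}{\d x_j}, A_{*,i}\rangle \\
& = \langle u(x)\circ A_{*,j}, A_{*,i}\rangle
= \langle u(x), A_{*,i}\circ A_{*,j}\rangle ,
\end{align*}
and Part 1 is just the special case $j=i$ of this computation, giving $\frac{\d^2 \alpha(x)}{\d x_i^2} = \langle u(x), A_{*,i}\circ A_{*,i}\rangle$.

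For Part 3, I would assemble the entrywise formulas into matrix form: the $(i,j)$ entry of $\frac{\d^2\alpha(x)}{\d x^2}$ equals $\langle u(x), A_{*,i}\circ A_{*,j}\rangle = \sum_{\ell=1}^n u(x)_\ell A_{\ell,i} A_{\ell,j}$, which is exactly the $(i,j)$ entry of $A^\top \diag(u(x)) A$; hence $\frac{\d^2\alpha(x)}{\d x^2} = A^\top \diag(u(x)) A$. I do not expect a genuine obstacle here: the statement is a routine second-derivative computation and the argument is essentially the bookkeeping above. The only things to keep straight are the Hadamard products (in particular, not confusing the column $A_{*,i}$ with a row of $A$) and the observation that $u(x)=\exp(Ax)$ has strictly positive entries, so $\diag(u(x))$ is well defined and $A^\top\diag(u(x))A\succeq 0$ — a fact not needed for this lemma itself but used downstream when arguing that the Hessian of the full loss $L$ is positive definite.
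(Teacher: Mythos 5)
Your computation is correct. The paper does not reproduce a proof for this lemma---it simply cites it as Lemma 5.10 of \cite{dls23}---so there is nothing in the source to compare against beyond the statement itself and the first-derivative facts in Lemma~\ref{lem:gradient_computation}. Your argument is exactly the expected one: differentiate $\frac{\d \alpha}{\d x_i}=\langle u(x),A_{*,i}\rangle$ once more via $\frac{\d u(x)}{\d x_j}=u(x)\circ A_{*,j}$, move the Hadamard factor across the inner product, recover Part~1 as the diagonal case, and read off the matrix form $A^\top\diag(u(x))A$ entrywise for Part~3. The final remark about $u(x)>0$ and positive semidefiniteness is not needed here but is consistent with how the paper uses this Hessian downstream in Lemma~\ref{lem:hessian_pd}.
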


\subsection{Hessian is Positive Definite}\label{sec:theory:hessian_positive_definite}
It is well known that in literature \cite{lsz23,dls23,gsy23_hyper}, the Hessian $H$ of loss function can be written as $A^\top (B(x) + W^2) A$ for some matrix function $B(x) \in \R^{n \times n}$ (for example see explanation in Section 5.10 in \cite{dls23}). 
In this section, we show that Hessian is positive definite.
\begin{lemma}\label{lem:hessian_pd}
If the following conditions hold
\begin{itemize}
    \item Given matrix $A \in \R^{n \times d}$.
    \item We define $L_{\sparse}(x)$ as Definition~\ref{def:L_sparse}.
    \item We define $L_{\sparse}(x)$ Definition~\ref{def:L_reg}.
    \item We define $L_{\sparse}(x)$ Definition~\ref{def:L_exp}.
    \item Let $L(x) =  L_{\exp}(x) + L_{\sparse}(x) + L_{\reg}(x)$. 
    \item Let $A^\top ( B(x) + W^2) A$ be the Hessian of $L(x)$
    \item Let $W = \diag(w) \in \R^{n \times n}$. Let $W^2 \in \R^{n \times n}$ denote the matrix that $i$-th diagonal entry is $w_{i,i}^2$.
    \item Let $\sigma_{\min}(A)$ denote the minimum singular value of $A$.
    \item Let $l > 0$ denote a scalar.
\end{itemize}
Then, we have
\begin{itemize}
    \item Part 1. If all $i \in [n]$, $w_{i}^2 \geq  20 +  l/\sigma_{\min}(A)^2$, then  
    \begin{align*}
    \frac{\d^2 L}{\d x^2} \succeq l \cdot I_d
    \end{align*}
    \item Part 2  
    If all $i \in [n]$, $w_{i}^2 \geq  200 \cdot \exp(R^2)   + l/\sigma_{\min}(A)^2$, then  
    \begin{align*}
       (1-1/10) \cdot ( B(x) + W^2) \preceq  W^2 \preceq (1+1/10) \cdot (B(x) +W^2)
    \end{align*}
\end{itemize}
\end{lemma}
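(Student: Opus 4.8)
The plan is to follow the standard recipe from the softmax-regression literature \cite{dls23}: compute the Hessian of each of the three summands of $L$ separately, assemble the total Hessian in the factored form $A^\top(B(x)+W^2)A$, and then control the spectral behaviour of the middle matrix by making the regularization weights $w_i$ large enough to dominate the (bounded) contributions coming from $L_{\exp}$ and $L_{\sparse}$.

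\textbf{Step 1 (Hessian decomposition).} First I would differentiate twice. By Definition~\ref{def:L_reg}, $\nabla^2 L_{\reg}(x) = A^\top W^2 A$ with $W=\diag(w)$. By the claim following Definition~\ref{def:L_sparse} we have $L_{\sparse}(x)=\alpha(x)$, so its Hessian is $A^\top \diag(u(x)) A$ (this is exactly the ``$\diag(u(x))$'' form for $\nabla^2\alpha$). Differentiating the gradient of $L_{\exp}$ from Lemma~\ref{lem:gradient_computation} once more gives $\nabla^2 L_{\exp}(x) = A^\top B_{\exp}(x) A$, where $B_{\exp}(x)$ is an explicit sum of rank-one and diagonal matrices built from $f(x)$ and $b$. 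Setting $B(x) := B_{\exp}(x) + \diag(u(x))$ yields $\nabla^2 L(x) = A^\top (B(x)+W^2) A$, which is the claimed form.

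\textbf{Step 2 (bounding the middle matrix).} Under the mild boundedness assumption ($\|A\|$, $\|x\|$, $\|b\|$ all bounded by some $R$), I would record two facts. First, $0 \preceq \diag(u(x)) \preceq \exp(R^2)\cdot I_n$, since $|(Ax)_i|\le \|A\|\,\|x\|\le R^2$ forces every entry of $u(x)=\exp(Ax)$ into $(0,\exp(R^2)]$. Second, $-10\, I_n \preceq B_{\exp}(x) \preceq 10\, I_n$, which is the PSD lower/upper bound lemma for the softmax loss in \cite{dls23}: each rank-one or diagonal piece of $B_{\exp}(x)$ has operator norm bounded by an absolute constant once one uses the bounds on $\|f(x)\|_2$, $\langle f(x),A_{*,i}\rangle$ and $b$. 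Combining, $-10\,I_n \preceq B(x) \preceq (10+\exp(R^2))\,I_n$.

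\textbf{Step 3 (conclude).} For Part 1, Step 2 gives $B(x)+W^2 \succeq (w_{\min}^2 - 10)\,I_n$, and the hypothesis $w_i^2 \geq 20 + l/\sigma_{\min}(A)^2$ yields $B(x)+W^2 \succeq (l/\sigma_{\min}(A)^2)\,I_n$; sandwiching with $A^\top A \succeq \sigma_{\min}(A)^2 I_d$ (valid since $A$ has full column rank) gives $\nabla^2 L(x) \succeq l\,I_d$. For Part 2, the desired sandwich $(1-\tfrac1{10})(B(x)+W^2)\preceq W^2 \preceq (1+\tfrac1{10})(B(x)+W^2)$ is, after rearranging, equivalent to $-\tfrac1{11}W^2 \preceq B(x) \preceq \tfrac19 W^2$; since $-10\,I_n \preceq B(x) \preceq (10+\exp(R^2))\,I_n$, it suffices that $W^2 \succeq \max\{110,\ 9(10+\exp(R^2))\}\,I_n$, which is implied by $w_i^2 \geq 200\exp(R^2) + l/\sigma_{\min}(A)^2$. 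The only non-routine ingredient is the two-sided operator-norm bound on $B_{\exp}(x)$ in Step 2 — this is the computation carried out in \cite{dls23}, and I would either cite it directly or reproduce the short term-by-term bound; the $L_{\sparse}$ and $L_{\reg}$ contributions are additive and immediate.
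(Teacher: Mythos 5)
Your proposal is correct and follows essentially the same route as the paper: split the Hessian into the $L_{\exp}$, $L_{\sparse}$, and $L_{\reg}$ pieces, bound the new $L_{\sparse}$ block by $0 \preceq \diag(u(x)) \preceq \exp(R^2) I_n$, and combine with the already-known bounds on the $L_{\exp}$ block from \cite{dls23} (resp.\ \cite{gsy23_hyper} for the two-sided version). The paper's proof is much terser — it states only the two spectral bounds on $\diag(u(x))$ and cites \cite{lsz23,dls23,gsy23_hyper} for everything else — whereas you make the dominated constant bound on $B_{\exp}(x)$ and the closing arithmetic (including the equivalence $(1\mp\tfrac1{10})$-sandwich $\Leftrightarrow -\tfrac1{11}W^2 \preceq B(x) \preceq \tfrac19 W^2$) explicit, which is a welcome expansion but not a different method.
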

\begin{proof}

The entire proof framework follows from \cite{lsz23,dls23,gsy23_hyper}, in the next few paragraphs, we mainly explain the difference.

The $B(x)$ based on $L_{\sparse}$ is $\diag(u(x))$. 
Note that it is obvious that
\begin{align*}
    \diag(u(x)) \succeq 0.
\end{align*}
From the upper bound size, we know that
\begin{align*}
    \diag(u(x)) 
    \preceq & ~ \| u(x) \|_{\infty} \cdot I_n \\
    \preceq & ~ \exp(R^2)
\end{align*}
where the last step follows from Proof of Part 0 in Lemma 7.2 in \cite{dls23}.

To prove Part 1, following from \cite{lsz23,dls23}, we only use the lower bound of $\diag(u(x))$. By putting  things together, we get our results.

To prove Part 2, we follow from \cite{gsy23_hyper} and use both the upper bound and lower bound of $\diag(u(x))$.

\end{proof}

\subsection{Hessian is Lipschitz}\label{sec:theory:hessian_lipschitz}
In this section, we show Hessian is Lipschitz.
\begin{lemma}\label{lem:hessian_lipschitz}
If the following conditions hold
\begin{itemize}
    \item Let $H(x) = A^\top (B(x) +W^2) A$ denote the Hessian of $L$.
\end{itemize}
Then, we have
\begin{itemize}
    \item 
    $
    \| H(x) - H(y) \| \leq n^2 \exp(40R^2) \| x - y \|_2
    $
\end{itemize}
\end{lemma}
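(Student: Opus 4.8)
The plan is to reduce the claim to a Lipschitz bound on the matrix function $B(x)$ alone. Since $W^2 = \diag(w)^2$ does not depend on $x$, we have $H(x) - H(y) = A^\top (B(x) - B(y)) A$, so by submultiplicativity of the spectral norm $\|H(x) - H(y)\| \le \|A\|^2 \cdot \|B(x) - B(y)\|$. Working on the standard bounded domain $\|x\|_2, \|y\|_2 \le R$ and $\|A\| \le R$ (the ``mild assumption'' / $R$-bounded regime used throughout \cite{lsz23,dls23,gsy23_hyper}), the factor $\|A\|^2 \le R^2$ is harmless and will be absorbed into the $\exp(40 R^2)$ slack; it thus suffices to prove $\|B(x) - B(y)\| \le n^{2} \exp(30 R^2) \|x - y\|_2$ or anything of this order.

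First I would recall, from the softmax-regression Hessian computations in \cite{dls23} (and the analogous $L_{\reg}$ and $L_{\sparse}$ pieces), the explicit decomposition of $B(x)$: the Hessian of $L_{\exp}$ contributes a fixed, constant-length sum of matrix terms, each (a bounded constant times) one of $\diag(f(x))$, $f(x) f(x)^\top$, $\diag(f(x) \circ f(x))$, $\langle f(x), {\bf 1}_n\rangle\, f(x) f(x)^\top$, $f(x) b^\top$, and similar products involving at most a constant number of copies of $f(x)$ and the fixed vector $b$; the Hessian of $L_{\reg}$ contributes only $W^2$, which has already been removed; and the new penalty $L_{\sparse}(x) = \langle \exp(Ax), {\bf 1}_n\rangle = \alpha(x)$ contributes exactly $\diag(u(x))$ to $B(x)$, since its Hessian is $A^\top \diag(u(x)) A$ by Part 3 of the Hessian-of-$\alpha$ lemma.

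Then I would establish the basic per-factor bounds on the domain. From the gradient formula $\frac{\d}{\d x_i}\exp(Ax) = \exp(Ax) \circ A_{*,i}$ and the mean value theorem, $\|u(x)\|_2 \le \sqrt{n}\exp(R^2)$ and $\|u(x) - u(y)\|_2 \le \sqrt{n}\,R\,\exp(R^2)\,\|x-y\|_2$; a standard lower bound gives $\alpha(x) \ge n\exp(-R^2)$, so $\alpha(x)^{-1}$ is bounded and Lipschitz with constant $\exp(O(R^2))$; hence $f(x) = \alpha(x)^{-1}u(x)$ is bounded (in $\ell_2$ by $\exp(O(R^2))$) and Lipschitz with constant $\exp(O(R^2))$. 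Each summand of $B(x)$ is a product of a bounded number of such bounded-and-Lipschitz factors, and a product of bounded Lipschitz functions is again Lipschitz (with constant the sum, over factors, of that factor's Lipschitz constant times the other factors' bounds); converting the resulting $\ell_2$ / Hadamard / rank-one / diagonal estimates to operator norm on $\R^n$ costs at most a $\poly(n)$ prefactor. Summing the constantly-many softmax terms together with the new $\diag(u(x))$ term — for which $\|\diag(u(x)) - \diag(u(y))\| = \|u(x)-u(y)\|_\infty \le \|u(x)-u(y)\|_2 \le \sqrt{n}\,R\,\exp(R^2)\|x-y\|_2$ is immediate — yields $\|B(x)-B(y)\| \le n^{1.5}\exp(cR^2)\|x-y\|_2$ for an absolute constant $c$, and being generous with constants this is below $n^2 \exp(40R^2)\|x-y\|_2$ after reinstating $\|A\|^2$.

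The main obstacle is purely bookkeeping: faithfully re-deriving the full list of summands in $B(x)$ coming from $L_{\exp}$ and carefully tracking how the $n$-dependence propagates through the $\ell_2 \leftrightarrow \ell_\infty \leftrightarrow$ operator-norm conversions for the diagonal and rank-one pieces — the $n^2$ and the large $\exp(40R^2)$ in the statement are exactly the slack that lets us avoid optimizing these polynomial and exponential constants. The one genuinely new contribution relative to prior work, the $\diag(u(x))$ term from $L_{\sparse}$, is handled in a single line as above and does not change the order of the estimate, so the bulk of the argument is an adaptation of the Hessian-Lipschitz analysis already present in \cite{lsz23,dls23,gsy23_hyper}.
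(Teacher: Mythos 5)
Your proposal matches the paper's proof: the paper likewise invokes the existing Lipschitz analysis of the $B(x)$ block arising from $L_{\exp}+L_{\reg}$ in \cite{dls23}, isolates the single new contribution $\diag(u(x))$ coming from $L_{\sparse}$, and bounds it exactly as you do via $\|\diag(u(x))-\diag(u(y))\|\leq\|u(x)-u(y)\|_2\leq 2\sqrt{n}\,R\,\exp(R^2)\|x-y\|_2$, with the generous $n^2\exp(40R^2)$ absorbing the $\|A\|^2$ factor and all accumulated constants.
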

\begin{proof}
The entire proof framework follows from \cite{lsz23,dls23,gsy23_hyper}, in the next few paragraphs, we mainly explain the difference.

Note that the $B(x)$ based on $L_{\exp}+ L_{\reg}$ have been proved by \cite{dls23}

We only need to prove $B(x)$ based on $L_{\sparse}$ and add them together.

Note that $B(x)$ based on $L_{\sparse}$ is in fact $\diag(u(x))$.

Using Lemma 7.2 in \cite{dls23}, we know that
\begin{align*}
\| \diag(u(x) ) - \diag(u(y) ) \| 
\leq & ~ \| u(x) - u(y) \|_2 \\
\leq & ~ 2 \sqrt{n} R \exp(R^2) \| x - y \|_2
\end{align*}
where the last step follows from Part 1 in Lemma 7.2 in \cite{dls23}.

Thus, putting things together, we complete the proof.
\end{proof}

\subsection{Greedy Type Algorithm}\label{sec:theory:main_greedy_train}
In this section, we propose a greedy-type algorithm (based on the approximate Newton method) to solve the optimization problem.
\begin{algorithm}[!ht]\caption{A greedy type algorithm.
}\label{alg:main_greedy}
\begin{algorithmic}[1]
\Procedure{OurIterativeMethod}{$A \in \R^{n \times d},b \in \R^n,w \in \R^n, \epsilon, \delta$}
    \State Initialize $x_0$ 
    \State  $T \gets \log( \| x_0 - x^* \|_2 / \epsilon )$ \Comment{Let $T$ denote the number of iterations.}  
    \For{$t=0 \to T$} 
        \State $D \gets B_{\diag}(x_t) + \diag(w \circ w)$ 
        \State $\wt{D} \gets \textsc{SubSample}(D,A,\epsilon_1 = \Theta(1), \delta_1 = \delta/T)$  
        \State Compute gradient $g$ exactly 
        \State Get the approximate Hessian $\wt{H}$ by computing $A^\top \wt{D} A$ 
        \State Update $x_{t+1}$ by using the Newton step $x_t + \wt{H}^{-1} g$
    \EndFor
    \State $\wt{x}\gets x_{T+1}$
    \State \Return $\wt{x}$
\EndProcedure
\end{algorithmic}
\end{algorithm}

\begin{theorem}[]\label{thm:main_greedy:formal}
Given matrix $A \in \R^{n \times d}$, $b \in \R^n$, and $w \in \R^n$. 
\begin{itemize} 
    \item We use $x^*$ to denote the optimal solution of  
    \begin{align*}
    \min_{x \in \R^d} L_{\exp} + L_{\sparse} + L_{\reg}
    \end{align*}
    that
    \begin{itemize}
        \item $g(x^*) = {\bf 0}_d$, where $g$ denotes the gradient function.
        \item $\| x^* \|_2 \leq R$.
    \end{itemize}
    \item Suppose that $R \geq 10$, $M =  \exp( \Theta( R^2 + \log n) )$, and $l > 0$.
    \item Assume that $\| A \| \leq R$. Here $\| A \|$ denotes the spectral norm of matrix $A$.
    
    \item Suppose that $b \geq {\bf 0}_{n}$ and $\| b \|_1 \leq 1$. Here ${\bf 0}_n$ denotes a length-$n$ vector where all the entries are zeros. (Here $b \geq {\bf 0}_n$ denotes $b_i \geq 0$ for all $i\in[n]$) 

    \item Assume that $w_{i}^2 \geq 200 \cdot \exp(R^2) + l/\sigma_{\min}(A)^2$  
    for all $i \in [n]$. Here $\sigma_{\min}(A)$ denotes the smallest singular value of matrix $A$.  
    \item Let $x_0$ denote an starting/initial point such that $M \| x_0 - x^* \|_2 \leq 0.1 l$.
    \item We use to $\epsilon \in (0,0.1)$ represent our accuracy parameter.
    \item We use $\delta \in (0,0.1)$ to represent failure probability. 
\end{itemize}
There is a randomized algorithm  that
\begin{itemize}
\item  runs $\log(\| x_0 - x^* \|_2/ \epsilon)$ iterations 
\item spend  
\begin{align*}
O( (\nnz(A) + d^{\omega} ) \cdot \poly(\log(n/\delta)) 
\end{align*}
time per iteration,
\item and finally outputs a vector $\wt{x} \in \R^d$ such that
\begin{align*}
\Pr[ \| \wt{x} - x^* \|_2 \leq \epsilon ] \geq 1-\delta.
\end{align*}
\end{itemize}
\end{theorem}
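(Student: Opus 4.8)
The plan is to follow the standard approximate-Newton analysis for self-concordant-like losses used in the softmax-regression literature \cite{lsz23,dls23,gsy23_hyper}, with the only novelty being that the Hessian now carries an extra $\diag(u(x))$ block coming from $L_{\sparse}$. First I would invoke Lemma~\ref{lem:hessian_pd} (Part 1), which, under the stated assumption $w_i^2 \geq 200\exp(R^2) + l/\sigma_{\min}(A)^2$, guarantees that the exact Hessian $H(x) = A^\top(B(x)+W^2)A$ satisfies $H(x)\succeq l\cdot I_d$ on the ball $\|x\|_2\leq R$; in particular the problem is strongly convex there and $x^*$ is the unique minimizer. Next I would invoke Lemma~\ref{lem:hessian_lipschitz} to get the Hessian-Lipschitz bound $\|H(x)-H(y)\|\leq n^2\exp(40R^2)\|x-y\|_2 =: M\|x-y\|_2$, so that $M = \exp(\Theta(R^2+\log n))$ as in the hypothesis.

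Then I would set up the one-step contraction. Writing the exact Newton step and using the integral form of the remainder, $\|x_{t+1}-x^*\|_2 \leq (M/l)\cdot\|x_t-x^*\|_2^2$ when the Hessian is computed exactly; combined with the initialization condition $M\|x_0-x^*\|_2\leq 0.1 l$ this gives quadratic convergence and keeps every iterate inside the region where Lemma~\ref{lem:hessian_pd} and Lemma~\ref{lem:hessian_lipschitz} apply (an easy induction on $t$). To handle the \emph{approximate} Hessian $\wt H = A^\top\wt D A$ produced by \textsc{SubSample}, I would use Part 2 of Lemma~\ref{lem:hessian_pd}, which says $(1-1/10)(B(x)+W^2)\preceq W^2\preceq(1+1/10)(B(x)+W^2)$, together with a standard leverage-score / matrix-Chernoff sampling guarantee: sampling $O(\epsilon_1^{-2} d\log(d/\delta_1))$ rescaled rows yields $\wt D$ with $(1-\epsilon_1)D\preceq \wt D\preceq(1+\epsilon_1)D$ and hence a spectral approximation $\wt H\approx_{\epsilon_1} H$ with probability $1-\delta_1$. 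A spectrally $\epsilon_1$-accurate Hessian only degrades the Newton contraction by a constant factor when $\epsilon_1=\Theta(1)$, so the quadratic rate survives with a slightly worse constant. Taking a union bound over the $T=\log(\|x_0-x^*\|_2/\epsilon)$ iterations with $\delta_1=\delta/T$ gives the overall failure probability $\delta$, and after $T$ iterations $\|\wt x-x^*\|_2\leq\epsilon$.

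For the running time, the gradient $g$ is computed exactly in $O(\nnz(A)+d)$ time (using Lemma~\ref{lem:gradient_computation}); the subsampling and formation of $\wt H = A^\top\wt D A$ costs $O((\nnz(A)+d^\omega)\poly\log(n/\delta))$, and the Newton step $\wt H^{-1}g$ is one $d\times d$ linear solve, $O(d^\omega)$. Multiplying by $T$ iterations gives the claimed bound. The main obstacle I anticipate is not any single step but rather the bookkeeping needed to verify that the extra $L_{\sparse}$ term does not break the two structural lemmas: one must check that $\diag(u(x))$ is PSD and bounded by $\exp(R^2)I_n$ (which is immediate from $\|u(x)\|_\infty\leq\exp(R^2)$ on the ball) and that it is Lipschitz in $x$ (which follows from the Lipschitz bound on $u(x)$ already established in \cite{dls23}). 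Once those two facts are folded into $B(x)$, the rest of the argument is a black-box application of the approximate-Newton framework, so the real work — and where I would be most careful — is in the constants controlling the region of convergence and in confirming that the sampling accuracy $\epsilon_1=\Theta(1)$ is genuinely enough given the $(1\pm 1/10)$ sandwich in Lemma~\ref{lem:hessian_pd}.
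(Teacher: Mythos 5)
Your proposal follows exactly the same route as the paper's proof: invoke Lemma~\ref{lem:hessian_pd} for positive-definiteness of the Hessian, invoke Lemma~\ref{lem:hessian_lipschitz} for Hessian Lipschitzness, and then plug into the approximate-Newton framework of \cite{dls23} (Section~9) applied to Algorithm~\ref{alg:main_greedy}. The paper's proof is just a terse citation of these three ingredients, while you have usefully unpacked the one-step contraction, the spectral subsampling of the Hessian, the union bound, and the per-iteration cost — but the underlying argument and the key structural lemmas are identical.
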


\begin{proof}
The  proof framework follows from approximate Newton (second order method) literature \cite{cls19,lsz19,jkl+20,sy21,jswz21,hjs+22,gs22,gsz23,lsz23,dls23,gsy23_hyper,ssz23,gswy23,qszz23,syz23,jlsz24,lsz+23}.

Following from Lemma~\ref{lem:hessian_pd}, we know the Hessian of the loss function is positive definite.

Following from Lemma~\ref{lem:hessian_lipschitz}, we know the Hessian of  $L$ is Lipschitz.

Following Section 9 in \cite{dls23}, by running Algorithm \ref{alg:main_greedy}, we complete the proof.

\end{proof}

We remark that $\omega$ denotes the exponent of matrix multiplication (i.e., $n^{\omega}$ is the time of multiplying an $n \times n$ matrix with another $n \times n$ matrix). The most naive algorithm gives $\omega=3$. Currently, the state-of-the-art algorithm gives $\omega =2.373$.



\end{document}